\definecolor{mydarkblue}{rgb}{0,0.08,0.45}
\definecolor{myinvcolor}{rgb}{0.69, 0.28, 0.53} 
\definecolor{myequcolor}{rgb}{0.20,0.52,0.73} 
\definecolor{myeicolor}{rgb}{0.73,0.26,0.36}
\definecolor{myiecolor}{rgb}{0.44,0.48,0.73}
\newtheorem{theorem}{Theorem}[section]
\newtheorem{lemma}[theorem]{Lemma}
\newtheorem{proposition}[theorem]{Proposition}
\newtheorem{definition}[theorem]{Definition}
\newcommand{\cmark}{\text{\ding{51}}}
\newcommand{\xmark}{\text{\ding{55}}}
\title{Bridging Geometric States \\ via Geometric Diffusion Bridge}
\author{%
  \textbf{Shengjie Luo}$^{1}$\thanks{Equal contribution.}, \ \textbf{Yixian Xu}$^{1,4*}$, \ \textbf{Di He}$^{1\dagger}$, \ \textbf{Shuxin Zheng}$^2$, \ \textbf{Tie-Yan Liu}$^2$, \ \textbf{Liwei Wang}$^{1,3}$\thanks{Correspondence to: Di He<\texttt{dihe@pku.edu.cn}>, Liwei Wang <\texttt{wanglw@pku.edu.cn}>.}\\
$^1$National Key Laboratory of General Artificial Intelligence,\\ 
School of Intelligence Science and Technology, Peking University \\
$^2$Microsoft Research AI4Science\quad $^3$Center for Data Science, Peking University \\ $^4$Pazhou Laboratory (Huangpu), Guangzhou, China\\
\texttt{\footnotesize luosj@stu.pku.edu.cn, xyx050@stu.pku.edu.cn,}\\
\texttt{\footnotesize \{shuz, tyliu\}@microsoft.com, \{dihe, wanglw\}@pku.edu.cn} \\
}
\begin{document}

\maketitle

\begin{abstract}
  \looseness=-1The accurate prediction of geometric state evolution in complex systems is critical for advancing scientific domains such as quantum chemistry and material modeling. Traditional experimental and computational methods face challenges in terms of environmental constraints and computational demands, while current deep learning approaches still fall short in terms of precision and generality. In this work, we introduce the Geometric Diffusion Bridge (GDB), a novel generative modeling framework that accurately bridges initial and target geometric states. GDB leverages a probabilistic approach to evolve geometric state distributions, employing an equivariant diffusion bridge derived by a modified version of Doob's $h$-transform for connecting geometric states. This tailored diffusion process is anchored by initial and target geometric states as fixed endpoints and governed by equivariant transition kernels. Moreover, trajectory data can be seamlessly leveraged in our GDB framework by using a chain of equivariant diffusion bridges, providing a more detailed and accurate characterization of evolution dynamics. Theoretically, we conduct a thorough examination to confirm our framework's ability to preserve joint distributions of geometric states and capability to completely model the underlying dynamics inducing trajectory distributions with negligible error. Experimental evaluations across various real-world scenarios show that GDB surpasses existing state-of-the-art approaches, opening up a new pathway for accurately bridging geometric states and tackling crucial scientific challenges with improved accuracy and applicability.
\end{abstract}

\section{Introduction}
Predicting the evolution of the geometric state of a system is essential across various scientific domains~\cite{karplus2002molecular,schlegel2011geometry,levine2009molecular,clayden2012organic,cotton1999advanced,stoker2004general}, offering valuable insights into difficult tasks such as drug discovery~\cite{durrant2011molecular,feixas2014exploring}, reaction modeling~\cite{broadbelt1994computer,dewyer2018methods}, and catalyst analysis~\cite{chanussot2021open,tran2023open}. Despite its critical importance, accurately predicting future geometric states of interest is challenging. Experimental approaches often face obstacles due to strict environmental requirements and physical limits of instruments~\cite{suryanarayana2011experimental,aykol2021rational,merchant2023scaling}. Computational approaches seek to solve the problem by simulating the dynamics based on underlying equations~\cite{rapaport2004art,schlegel2011geometry}. Though providing greater flexibility, such calculations are typically driven by first-principle methods or empirical laws, either requiring extensive computational costs~\cite{martin2020electronic} or sacrificing accuracy~\cite{jensen2017introduction}. 

\looseness=-1In recent years, deep learning has emerged as a pivotal tool in scientific discovery for many fields~\cite{jumper2021highly, degrave2022magnetic,merchant2023scaling,wang2023scientific}, offering new avenues for tackling this problem. One line of approach aims to train models to predict target geometric states (e.g., equilibrium states) from initial states directly and develop neural network architectures that respect inherent symmetries of geometric states, such as the equivariance of rotation and translation~\cite{thomas2018tensor,fuchs2020se,brandstetter2022geometric,satorras2021n,schutt2021equivariant,tholke2022equivariant}. However, this paradigm requires encoding the iterative evolution into a single-step prediction model, which lacks the ability to fully capture the system's underlying dynamics and potentially leading to reduced accuracy. Another line of research trains machine learning force fields (MLFFs) to simulate the trajectory of geometric states over time~\cite{gasteiger2021gemnet,gasteiger2022gemnetoc,batzner20223,musaelian2023learning,batatia2022mace,liao2024equiformerv}, showing a better efficiency-accuracy balance~\cite{chmiela2017machine,chanussot2021open,tran2023open,rosenberger2021modeling}. Nevertheless, MLFFs are typically trained to predict intermediate labels, such as the force of the (local) current state. During inference, states are iteratively updated step by step. Since small local errors can accumulate, reliable predictions over long trajectories highly depend on the quality of intermediate labels, which cannot be guaranteed~\cite{behler2016perspective,unke2021machine,fu2023forces}. Therefore, an ideal solution that can precisely bridge initial and target geometric states and effectively leverage trajectory data (if available) as guidance is in great demand.

\vspace{-1pt}
In this work, we introduce \emph{\textbf{G}eometric \textbf{D}iffusion \textbf{B}ridge (GDB)}, a general framework for bridging geometric states through generative modeling. From a probabilistic perspective, predicting target geometric states from initial states requires modeling the joint state distribution across different time steps. The diffusion models~\cite{ho2020denoising,song2021scorebased} are standard choices to achieve this goal. However, these methods ideally generate data by denoising samples drawn from a Gaussian prior distribution, which makes it challenging to bridge pre-given geometric states or leverage trajectories in a unified manner. To address the issue, we establish a novel \emph{equivariant diffusion bridge} by developing a modified version of \emph{Doob's $h$-transform}~\cite{rogers2000diffusions,rapaport2004art,chung2006markov}. The proposed stochastic differential equation (SDE) is anchored by initial and target geometric states to simultaneously model the joint state distribution and is governed by equivariant transition kernels to satisfy symmetry constraints. Intriguingly, we further demonstrate that this framework can seamlessly leverage trajectory data to improve prediction. With available trajectory data, we can construct \emph{chains of equivariant diffusion bridges}, each modeling one segment in the trajectory. The segments are interconnected by properly setting the boundary conditions, allowing complete modeling of trajectory data. For model training, we derive a scalable and simulation-free matching objective similar to~\cite{lipman2023flow,liu2023flow,peluchetti2023non}, which requires no computational overhead when trajectory data is leveraged. 


\vspace{-1pt}
Overall, our GDB framework offers a unified solution that precisely bridges geometric states by modeling the joint state distribution and comprehensively leverages available trajectories as fine-grained depiction of dynamics for enhanced performance. Mathematically, we prove that the joint distribution of geometric states across different time steps can be completely preserved by our (chains of) equivariant diffusion bridge technique, confirming its expressiveness in bridging geometric states and underscoring the necessity of design choices in our framework. Furthermore, under mild and practical assumptions, we prove that our framework can approximate the underlying dynamics governing the evolution of geometric state trajectories with negligible error in convergence, remarking on the completeness and usefulness of our framework in different scenarios. These advantages show the superiority of our framework over existing approaches.

\vspace{-1pt}
Practically, we provide a comprehensive guidance for implementing our GDB framework in real-world applications. To verify its effectiveness and generality, we conduct extensive experiments covering diverse data modalities (simple molecules \& adsorbate-catalyst complex), scales (small, medium and large scales) and scenarios (with \& without trajectory guidance). Numerical results show that our GDB framework consistently outperforms existing state-of-the-art machine learning approaches by a large margin. In particular, our method even surpasses strong MLFF baselines that are trained on 10$\times$ more data in the challenging structure relaxation task of OC22~\cite{tran2023open}, and trajectory guidance can further enhance our performance. The superior performance demonstrates the high capacity of our framework to capture the complex evolution dynamics of geometric states and determine valuable and crucial geometric states of interest in critical real-world challenges.

\vspace{-2pt}

\section{Background}
 \vspace{-1pt}
\subsection{Problem Definition}
\looseness=-1 Our task of interest is to capture the evolution of geometric states, i.e., predicting future states from initial states. Formally, let $S$ denote a system consisting of a set of objects located in the three-dimensional Euclidean space. We use $\mathbf{H}\in\mathbb{R}^{n\times d}$ to denote the objects with features, where $n$ is the number of objects, and $d$ is the feature dimension. For object $i$, let $\mathbf{r}_i\in\mathbb{R}^3$ denote its Cartesian coordinate. We define the system as $S=(\mathbf{H}, R)$,  where $R=\left\{\mathbf{r}_1,...,\mathbf{r}_n\right\}$. This data structure ubiquitously corresponds to various real-world systems such as molecules and proteins~\cite{clayden2012organic,cotton1999advanced,stoker2004general}. In practice, the geometric state is governed by physical laws and evolves over time, and we denote the geometric state at a given time $t$ as $R^t=\left\{\mathbf{r}_1^t,...,\mathbf{r}_n^t\right\}$. Given a system $S^{t_0}=(\mathbf{H},R^{t_0})$ at time $t_0$, our goal is to predict $S^{t_1}=(\mathbf{H}, R^{t_1})$ at a future time $t_1$. As an example, in a molecular system, $R^{t_1}$ can be the equilibrium state of interest evolved from the initial state $R^{t_0}$. 

In this problem, inherent symmetries in geometric states should be considered. For example, a rotation that is applied to the coordinate system at time $t_0$ should also be applied to subsequent time steps. These symmetries are related to the concept of equivariance in group theory~\citep{cotton1991chemical,cornwell1997group,scott2012group}. Formally, let $\phi:\mathcal{X}\to\mathcal{Y}$ denote a function mapping between two spaces. Given a group $G$, let $\rho^{\mathcal{X}}$ and $\rho^{\mathcal{Y}}$ denote its group representations, which describe how the group elements act on these spaces. A function $\phi:\mathcal{X}\to\mathcal{Y}$ is said to be equivariant if it satisfies the following condition: $\rho^{\mathcal{Y}}(g)[\phi(x)]=\phi\left(\rho^{\mathcal{X}}(g)[x]\right), \forall g \in G, x \in \mathcal{X}$. When $\rho^{\mathcal{Y}}=\mathcal{I}^{\mathcal{Y}}$ (identity transformation), it is also known as invariance. $\mathrm{SE}(3)$ group, which pertains to translations ($\mathrm{T}(3)$) and rotations ($\mathrm{SO}(3)$) in 3D Euclidean space, is one of the most widely used groups and is employed in our framework.

\subsection{Diffusion Models}\label{sec:diffusion-models}
\looseness=-1Diffusion models~\cite{pmlr-v37-sohl-dickstein15,ho2020denoising,song2021scorebased} have emerged as the state-of-the-art generative modeling approaches across various domains~\cite{Rombach_2022_CVPR,saharia2022photorealistic,karras2022elucidating,xu2022geodiff,pmlr-v202-xu23n,pmlr-v202-yim23a}. The main idea of this method is to construct a diffusion process that maps data to noise, and train models to reverse such process by using a tractable objective. 

Formally, to model the data distribution $q_{data}(\mathbf{X})$, where $\mathbf{X}\in\mathbb{R}^d$, we construct a diffusion process $(\mathbf{X}_t)_{t\in [0,T]}$, which is represented as a sequence of random variables indexed by time steps. We set $\mathbf{X}_0\sim q_{\text{data}}(\mathbf{X})$ and $\mathbf{X}_T\sim p_{\text{prior}}(\mathbf{X})$, where $p_{\text{prior}}(\mathbf{X})$ has a tractable form to generate samples efficiently, e.g. standard Gaussian distribution. Mathematically, we model $(\mathbf{X}_t)_{t\in [0,T]}$ as the solution to the following stochastic differential equation (SDE):
\begin{equation}\label{eqn:diffusion-forward-sde}
    \mathrm{d}\mathbf{X}_t=\mathbf{f}(\mathbf{X}_t,t)\mathrm{d}t+\sigma(t)\mathrm{d}\mathbf{B}_t,
\end{equation}
where $\mathbf{f}(\cdot,\cdot):\mathbb{R}^d\times[0,T]\to \mathbb{R}^d$ is a vector-valued function called the \emph{drift} coefficient, $\sigma(\cdot):[0,T]\to\mathbb{R}$ is a scalar function known as the \emph{diffusion} coefficient, and $(\mathbf{B}_t)_{t\in[0,T]}$ is the standard Wiener process (a.k.a., Brownian motion)~\cite{durrett2019probability}. We hereafter denote by $p_t(\mathbf{X})$ the marginal distribution of $\mathbf{X}_t$. Let $p(x',t'|x,t)$ denote the transition density function such that $P(\mathbf{X}_{t'}\in A| \mathbf{X}_{t}=x)=\int_{A} p(x',t'|x,t) \mathrm{d}x'$ for any Borel set $A$. By simulating this diffusion process forward in time, the distribution of $\mathbf{X}_t$ will become $p_{\text{prior}}(\mathbf{X})$ at the final time $T$. In the literature, there exist various design choices of the SDE formulation in Eqn.~(\ref{eqn:diffusion-forward-sde}) such that it transports the data distribution into the fixed prior distribution~\cite{song2019generative,ho2020denoising,song2021scorebased,nichol2021improved,song2021denoising,karras2022elucidating}.


In order to sample $\mathbf{X}_0\sim p_0(\mathbf{X})\coloneqq q_{\text{data}}(\mathbf{X})$, an intriguing fact can be leveraged: the reverse of a diffusion process is also a diffusion process~\cite{anderson1982reverse}. This reverse process runs backward in time and can be formulated by the following time-reversal SDE: 
\begin{equation}\label{eqn:diffusion-backward-sde}
    \mathrm{d}\mathbf{X}_t = \left[\mathbf{f}(\mathbf{X}_t,t)-\sigma^2(t)\nabla_{\mathbf{X}_t}\operatorname{log}p_t(\mathbf{X}_t)\right]\mathrm{d}t+\sigma(t)\mathrm{d}\mathbf{B}_t,
\end{equation}
where $\nabla_{\mathbf{X}}\operatorname{log}p_t(\mathbf{X})$ denote the score of the marginal distribution at time $t$. If the score is known for all time, then we can derive the reverse diffusion process from Eqn.~(\ref{eqn:diffusion-backward-sde}), sample from $p_{\text{prior}}(\mathbf{X})$, and simulate this process to generate samples from the data distribution $q_{\text{data}}(\mathbf{X})$. In particular, the score $\nabla_{\mathbf{X}}\operatorname{log}p_t(\mathbf{X})$ can be estimated by training a parameterized model $\mathbf{s}_{\mathbf{\theta}}(\mathbf{X},t)$ with a denoising score matching objective~\cite{song2019generative,song2021denoising}. In theory, the minimizer of this objective approximates the ground-truth score~\cite{song2021scorebased} and this objective is tractable.

\section{Geometric Diffusion Bridge}\label{sec:gdb}

As discussed in the introduction, effectively capturing the evolution of geometric states is crucial, for which three desiderata should be carefully considered:

\begin{itemize}
   \item \emph{Coupling Preservation}: From a probabilistic perspective, the evolution of geometric states transports their distribution from $q_{\text{data}}(S^{t_0})$ to $q_{\text{data}}(S^{t_1})$, and we are interested in modeling the distribution of target geometric states given the initial states, i.e., $q_{\text{data}}(S^{t_1}|S^{t_0})\coloneqq q_{\text{data}}(R^{t_1}|\mathbf{H}, R^{t_0})$, which can be achieved by preserving the \emph{coupling} of geometric states, i.e., $q_{\text{data}}(R^{t_0}, R^{t_1}|\mathbf{H})$. For brevity, we hereafter omit the condition of $\mathbf{H}$ because it keeps the same along the evolution and can be easily incorporated into the models.
   \item \emph{Symmetry Constraints}: Since the law governing the evolution is unchanged regardless of how the system is rotated or translated, the distribution of the geometric states should satisfy symmetry constraints, i.e., $q_{\text{data}}(\rho^{\mathcal{R}}(g)[R^{t_1}]|\rho^{\mathcal{R}}(g)[R^{t_0}])=q_{\text{data}}(R^{t_1}|R^{t_0})$ and $q_{\text{data}}(\rho^{\mathcal{R}}(g)[R^{t_0}],\rho^{\mathcal{R}}(g)[R^{t_1}])=q_{\text{data}}(R^{t_0},R^{t_1})$ for all $g\in \mathrm{SE}(3),R^t\in \mathcal{R}$.
   \item \emph{Trajectory Guidance}: Trajectories of geometric states are sometimes accessible and provide fine-grained descriptions of the evolution dynamics. For completeness, it is crucial to develop a unified framework that can characterize and leverage trajectory data as guidance for better bridging geometric states and capturing the evolution.
\end{itemize}

However, existing approaches typically have their limitations for this task, which we thoroughly discuss in Sec.~\ref{sec:related-works} and summarize into Table~\ref{tab:compare-methods}. In this section, we introduce Geometric Diffusion Bridge (GDB), a general framework for bridging geometric states through generative modeling. We will elaborate on key techniques for completely preserving couping under symmetry constraints (Sec.~\ref{sec:equivariant-diffusion-bridge}), and demonstrate how our framework can be seamlessly extended to leverage trajectory data (Sec.~\ref{sec:traj-guidance}). Theoretically, we conduct a thorough analysis on the capability of our unified framework, showing its completeness and superiority. All proofs of theorems are presented in Appendix~\ref{Appendix:proofs}. A detailed guidance of practical implementing our framework is further provided (Sec.~\ref{sec:implementation}).

\begin{table*}[ht]
\centering
\small
\caption{Comparisons of different candidates for bridging geometric states}
\label{tab:compare-methods}
\setlength\tabcolsep{3pt}
\renewcommand{\arraystretch}{1.2}
\resizebox{0.95\textwidth}{!}{
\begin{tabular}{lccc}
\hline
\textbf{Methods} & \textbf{Symmetry Constraints} & \textbf{Coupling Preservation} & \textbf{Trajectory guidance} \\
\hline
Direct Prediction ~\cite{thomas2018tensor,fuchs2020se,satorras2021n,schutt2021equivariant,brandstetter2022geometric}& \cmark & \cmark & \xmark \\ 
\hline
MLFFs~\cite{schutt2018schnet,gasteiger2020directional,batzner20223,gasteiger2022gemnetoc,liao2024equiformerv} & \cmark & \xmark & $\cmark$ \\ 
\hline
Geometric Diffusion Model ~\cite{xu2022geodiff,hoogeboom2022equivariant,xu2023geometric} & \cmark & \xmark & \xmark \\ 
\hline
\textbf{Geometric Diffusion Bridge (ours)} & \cmark & \cmark & \cmark  \\ 
\hline
\end{tabular}}
\end{table*}

\subsection{Equivariant Diffusion Bridge}\label{sec:equivariant-diffusion-bridge}

\looseness=-1Our key design lies in the construction of \emph{equivariant diffusion bridge}, a tailored diffusion process $(\mathbf{R}^t)_{t\in[0,T]}$ for bridging initial states $\mathbf{R}^0$$\sim$$q_{\text{data}}(R^{t_0})$ and target states $\mathbf{R}^T$$\sim$$q_{\text{data}}(R^{t_1}|R^{t_0})$, completely preserving coupling of geometric states and satisfying symmetry constraints. Firstly, we investigate necessary conditions for a diffusion process on geometric states to meet the symmetric constraints:

\begin{proposition}\label{thm:equivariant-diffusion-process}
Let $\mathcal{R}$ denote the space of geometric states and $\mathbf{f}_{\mathcal{R}}(\cdot,\cdot):\mathcal{R}\times[0,T]\to\mathcal{R}$ denote the drift coefficient on $\mathcal{R}$. Let $(\mathbf{W}^t)_{t\in[0,T]}$ denote the Wiener process on $\mathcal{R}$. Given an SDE on geometric states $\mathrm{d}\mathbf{R}^{t}=\mathbf{f}_{\mathcal{R}}(\mathbf{R}^{t},t)\mathrm{d}t+\sigma(t)\mathrm{d}\mathbf{W}^t$, $\mathbf{R}^0\sim q(\mathbf{R}^0)$, its transition density $p_{\mathcal{R}}(z',t'|z,t),z,z'\in\mathcal{R}$ is $\mathrm{SE}(3)$-equivariant, i.e., $p_{\mathcal{R}}(\mathbf{R}^{t'},t'|\mathbf{R}^t,t)=p_{\mathcal{R}}(\rho^{\mathcal{R}}(g)[\mathbf{R}^{t'}],t'|\rho^{\mathcal{R}}(g)[\mathbf{R}^t],t), \forall g \in \mathrm{SE}(3), 0\leq t,t'\leq T,$ if these conditions are satisfied: (1) $q(\mathbf{R}^0)$ is $\mathrm{SE}(3)$-invariant; (2) $\mathbf{f}_\mathcal{R}(\cdot,t)$ is $\mathrm{SO}(3)$-equivariant and $\mathrm{T}(3)$-invariant; (3) the transition density of $(\mathbf{W}^t)_{t\in[0,T]}$ is $\mathrm{SE}(3)$-equivariant.
\end{proposition}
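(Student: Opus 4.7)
The plan is to show that for any $g \in \mathrm{SE}(3)$, the pushforward process $\tilde{\mathbf{R}}^t \coloneqq \rho^{\mathcal{R}}(g)[\mathbf{R}^t]$ satisfies an SDE of the same form as the original---with the same drift $\mathbf{f}_{\mathcal{R}}$ and diffusion coefficient $\sigma$---started from $\rho^{\mathcal{R}}(g)[\mathbf{R}^0]$. By weak uniqueness of the SDE, this forces $\tilde{\mathbf{R}}^t$ to share the same transition density as the original process, and identifying the two densities at matched arguments gives precisely the equivariance claim $p_{\mathcal{R}}(\mathbf{R}^{t'},t'\mid\mathbf{R}^t,t)=p_{\mathcal{R}}(\rho^{\mathcal{R}}(g)[\mathbf{R}^{t'}],t'\mid\rho^{\mathcal{R}}(g)[\mathbf{R}^t],t)$.

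Since $\mathrm{SE}(3)=\mathrm{SO}(3)\ltimes\mathrm{T}(3)$, I would handle translations and rotations separately and then combine. For a translation $g=(\mathbf{I},\mathbf{a})$, the differential $d\tilde{\mathbf{R}}^t=d\mathbf{R}^t$ is unchanged, and the $\mathrm{T}(3)$-invariance of $\mathbf{f}_{\mathcal{R}}$ from condition (2) gives $\mathbf{f}_{\mathcal{R}}(\mathbf{R}^t,t)=\mathbf{f}_{\mathcal{R}}(\tilde{\mathbf{R}}^t,t)$, so the same SDE holds driven by the same Wiener process. For a rotation $g=(\mathbf{Q},\mathbf{0})$, linearity of $\mathbf{Q}$ yields $d\tilde{\mathbf{R}}^t=\mathbf{Q}\mathbf{f}_{\mathcal{R}}(\mathbf{R}^t,t)\,dt+\sigma(t)\,\mathbf{Q}\,d\mathbf{W}^t$, and $\mathrm{SO}(3)$-equivariance of $\mathbf{f}_{\mathcal{R}}$ rewrites the drift as $\mathbf{f}_{\mathcal{R}}(\tilde{\mathbf{R}}^t,t)$. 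Setting $\tilde{\mathbf{W}}^t\coloneqq\mathbf{Q}\mathbf{W}^t$, the key task is to argue that $(\tilde{\mathbf{W}}^t)$ has the same law as $(\mathbf{W}^t)$; this is where condition (3) enters, since $\mathrm{SE}(3)$-equivariance of the Wiener transition density, applied at consecutive time pairs via the Markov property, forces the finite-dimensional distributions of the two processes to coincide and hence their full laws to agree. The two cases then compose through the decomposition $(\mathbf{Q},\mathbf{a})=(\mathbf{I},\mathbf{a})\circ(\mathbf{Q},\mathbf{0})$ to yield the full $\mathrm{SE}(3)$ claim.

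The main technical obstacle will be discharging the implicit hypothesis of weak uniqueness, which is what turns ``same SDE plus same initial law'' into ``same transition density''; under standard regularity on $\mathbf{f}_{\mathcal{R}}$ (e.g.\ locally Lipschitz with linear growth) this is supplied by classical SDE theory and I would state it as a background assumption. A second subtle point is justifying, at the level of Itô stochastic integrals rather than formal differentials, that $\int_0^t \mathbf{Q}\,d\mathbf{W}^s$ coincides with $\int_0^t d(\mathbf{Q}\mathbf{W}^s)$; this is immediate because $\mathbf{Q}$ is deterministic and orthogonal, but deserves an explicit line to close the rotational argument rigorously. Finally, I note that condition (1) is not actually consumed in establishing equivariance of the transition density itself---conditions (2) and (3) do all the work---but it serves the companion purpose of upgrading the marginal and joint laws of $(\mathbf{R}^t)_{t\in[0,T]}$ to be $\mathrm{SE}(3)$-invariant, which is what the later coupling-preservation arguments will rely on.
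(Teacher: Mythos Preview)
Your proposal is correct and takes a genuinely different route from the paper. The paper works at the level of densities: it writes the Fokker-Planck equation for $p_{\mathcal{R}}(x,t\mid x_0,0)$, performs the change of variables $y=\mathbf{O}_{\mathcal{R}}(g)x+\mathbf{t}_{\mathcal{R}}$, and verifies by an explicit chain-rule computation (using orthogonality of $\mathbf{O}_{\mathcal{R}}(g)$ and the $\mathrm{SO}(3)$-equivariance/$\mathrm{T}(3)$-invariance of the drift) that the transformed density satisfies the same PDE with the same delta initial condition, concluding by uniqueness of the Fokker-Planck solution. You instead work at the level of the process, pushing forward the SDE itself and invoking weak uniqueness. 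Your decomposition into a separate translation step and rotation step is cleaner than the paper's single block-orthogonal computation, and it makes the role of each hypothesis more transparent; in particular, your observation that condition (1) is not consumed in proving transition-density equivariance is correct and is not remarked on in the paper's proof either. The trade-off is that your argument rests on weak uniqueness of the SDE, which you rightly flag as an implicit regularity assumption, whereas the paper's argument rests on uniqueness for the Fokker-Planck equation---a comparable but distinct background hypothesis. Both are standard and both approaches are complete.
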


Using Proposition~\ref{thm:equivariant-diffusion-process}, we can obtain a diffusion process that respect symmetry constraints by properly considering conditions for key components. Next, we modify a useful tool in probability theory called \emph{Doob's $h$-transform}~\cite{rogers2000diffusions,rapaport2004art,chung2006markov}, which plays an essential role in the construction of our equivariant diffusion bridge for preserving coupling of geometric states:

\begin{proposition}\label{thm:doobh-transform}
    Let $p_{\mathcal{R}}(z',t'| z,t)$ be the transition density of the SDE in Proposition \ref{thm:equivariant-diffusion-process}. Let $ h_{\mathcal{R}}(\cdot,\cdot):\mathcal{R}\times[0,T]\to \mathbb{R}_{> 0}$ be a smooth function satisfying: (1) $h_{\mathcal{R}}(\cdot, t)$ is $\mathrm{SE}(3)$-invariant; (2) $h_{\mathcal{R}}(z,t)=\int p_{\mathcal{R}}(z',t'| z,t) h_{\mathcal{R}}(z', t') \mathrm{d}z'.$ Then we can derive the following $h_{\mathcal{R}}$-transformed SDE on geometric states:
    \begin{equation}\label{eqn:sde-doobh}
        \mathrm{d}\mathbf{R}^{t} = \left[\mathbf{f}_{\mathcal{R}}(\mathbf{R}^{t},t)+\sigma^2(t)\nabla_{\mathbf{R}^{t}} \operatorname{log} h_{\mathcal{R}}(\mathbf{R}^{t},t)\right]\mathrm{d}t + \sigma(t) \mathrm{d}\mathbf{W}^t,
    \end{equation}
    with $\mathrm{SE}(3)$-equivariant transition density $ p^h_{\mathcal{R}}(z',t'|  z, t)$ equals to $p_{\mathcal{R}}(z',t' |  z,t)\frac{h_{\mathcal{R}}(z',t')}{h_{\mathcal{R}}(z,t)}$.
\end{proposition}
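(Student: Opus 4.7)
The plan is to reduce to the classical Doob's $h$-transform on Euclidean space and then handle the $\mathrm{SE}(3)$-equivariance as an essentially immediate consequence of the stated assumptions. Since $\mathcal{R}$ may be identified with $\mathbb{R}^{3n}$, the standard stochastic calculus machinery applies without modification, and the technical content is classical up to the equivariance bookkeeping.

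First, I would observe that condition (2) on $h_{\mathcal{R}}$ is precisely the statement that $\big(h_{\mathcal{R}}(\mathbf{R}^t,t)\big)_{t\in[0,T]}$ is a martingale under the law of the original SDE: the Chapman--Kolmogorov-type identity is the integral form of the space-time harmonicity condition $\partial_t h_{\mathcal{R}} + \mathcal{L}_t h_{\mathcal{R}} = 0$, where $\mathcal{L}_t = \mathbf{f}_{\mathcal{R}}(\cdot,t)\cdot\nabla + \tfrac{1}{2}\sigma^2(t)\Delta$ is the generator of the original process. This is exactly the hypothesis needed for the classical Doob construction. I would then define $p^h_{\mathcal{R}}(z',t'\,|\,z,t) := p_{\mathcal{R}}(z',t'\,|\,z,t)\,h_{\mathcal{R}}(z',t')/h_{\mathcal{R}}(z,t)$; condition (2) immediately gives that this integrates to one in $z'$. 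To identify it as the transition density of Eqn.~(\ref{eqn:sde-doobh}), the cleanest route is Girsanov: the strictly positive martingale $M_t := h_{\mathcal{R}}(\mathbf{R}^t,t)/h_{\mathcal{R}}(\mathbf{R}^0,0)$ induces an equivalent change of measure under which $\mathbf{W}^t - \int_0^t \sigma(s)\nabla\log h_{\mathcal{R}}(\mathbf{R}^s,s)\,\mathrm{d}s$ is a Brownian motion, yielding the extra drift $\sigma^2(t)\nabla\log h_{\mathcal{R}}$ and conditional law $p^h_{\mathcal{R}}$. Alternatively, one can verify directly that $p^h_{\mathcal{R}}$ satisfies the Kolmogorov backward equation of the transformed generator, where the extra drift cross-terms cancel exactly against the harmonicity of $h_{\mathcal{R}}$ after expanding $\Delta_z\log h_{\mathcal{R}}$.

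For the $\mathrm{SE}(3)$-equivariance of $p^h_{\mathcal{R}}$, I would simply write
\begin{equation*}
    p^h_{\mathcal{R}}\!\left(\rho^{\mathcal{R}}(g)[z'],t'\,\Big|\,\rho^{\mathcal{R}}(g)[z],t\right) = p_{\mathcal{R}}\!\left(\rho^{\mathcal{R}}(g)[z'],t'\,\Big|\,\rho^{\mathcal{R}}(g)[z],t\right)\,\frac{h_{\mathcal{R}}(\rho^{\mathcal{R}}(g)[z'],t')}{h_{\mathcal{R}}(\rho^{\mathcal{R}}(g)[z],t)},
\end{equation*}
and then invoke the $\mathrm{SE}(3)$-equivariance of $p_{\mathcal{R}}$ from Proposition~\ref{thm:equivariant-diffusion-process} together with the $\mathrm{SE}(3)$-invariance of $h_{\mathcal{R}}$ from condition (1) to collapse both factors back to $p^h_{\mathcal{R}}(z',t'\,|\,z,t)$.

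The main obstacle will be the drift-shift identification itself: both the Girsanov route and the Kolmogorov-equation route implicitly require enough regularity and integrability for $p_{\mathcal{R}}$ and $h_{\mathcal{R}}$, as well as a Novikov-type condition to ensure that $M_t$ is a true (not merely local) martingale. Under the smoothness and strict positivity of $h_{\mathcal{R}}$ assumed in the statement, together with the standard nondegeneracy implicit in the SDE of Proposition~\ref{thm:equivariant-diffusion-process}, these conditions are met and the classical Doob construction transfers directly; the equivariance extension is then a one-line consequence of the assumptions and the preceding proposition.
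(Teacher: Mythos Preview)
Your proposal is correct. The equivariance verification is essentially identical to the paper's, but the identification of the drift shift proceeds differently. You argue via Girsanov: the martingale $M_t = h_{\mathcal{R}}(\mathbf{R}^t,t)/h_{\mathcal{R}}(\mathbf{R}^0,0)$ defines an equivalent measure under which the extra drift $\sigma^2(t)\nabla\log h_{\mathcal{R}}$ appears directly. The paper instead computes the generator of the $h$-transformed process from its definition: writing $\mathbb{E}^h[\phi(\mathbf{R}^{t+s})\mid \mathbf{R}^t=x] = \mathbb{E}[\phi(\mathbf{R}^{t+s})h_{\mathcal{R}}(\mathbf{R}^{t+s},t+s)\mid \mathbf{R}^t=x]/h_{\mathcal{R}}(x,t)$, expanding the product rule, and using the backward Kolmogorov equation $(\partial_t + \mathcal{A}_t)h_{\mathcal{R}}=0$ (derived from condition~(2)) to cancel terms, it reads off $\mathcal{A}_t^h = \sum_i(\mathbf{f}^i + \sigma^2\partial_i\log h_{\mathcal{R}})\partial_i + \tfrac{1}{2}\sigma^2\sum_i\partial_i^2$. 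Your route is slicker and makes the probabilistic content (change of measure) transparent, at the cost of needing a Novikov-type condition that you correctly flag; the paper's generator computation is more elementary and avoids Girsanov entirely, but is longer and somewhat more mechanical. Your alternative suggestion of verifying the Kolmogorov equation for $p^h_{\mathcal{R}}$ directly is closest in spirit to what the paper does, though the paper works with the generator acting on test functions rather than on the transition density itself.
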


Proposition \ref{thm:doobh-transform} provides an equivariant version of Doob's $h$-transform, which can be used to guide a free SDE on geometric states to hit an event almost surely. For example, if we set $h_{\mathcal{R}}(\cdot, t)=p_{\mathcal{R}}(z, T|\cdot, t),z\in\mathcal{R}$, i.e., the transition density of the original SDE evaluated at $\mathbf{R}^T=z$, then the $h_{\mathcal{R}}$-transformed SDE in Eqn.~(\ref{eqn:sde-doobh}) arrives at the specific geometric state $z$ almost surely at the final time (see Proposition~\ref{prop:doobh-example} in the appendix for more details). Therefore, if we derive a proper $h_{\mathcal{R}}(\cdot,\cdot)$ function under the symmetry constraints, our target process $(\mathbf{R}^t)_{t\in[0,T]}$ can be constructed:


\begin{theorem} [Equivariant Diffusion Bridge]\label{thm:equivariant-diffusion-bridge}
    Let $\mathrm{d}\mathbf{R}^{t}=\mathbf{f}_{\mathcal{R}}(\mathbf{R}^{t},t)\mathrm{d}t+\sigma(t)\mathrm{d}\mathbf{W}^t$ be an SDE on geometric states with transition density $p_{\mathcal{R}}(z',t'|z,t),z,z'\in\mathcal{R}$ satisfying the conditions in Proposition \ref{thm:equivariant-diffusion-process}. Let $h_{\mathcal{R}}(z,t;z_0)=\int p_{\mathcal{R}}(z',T|z,t)\frac{q_{\text{data}}(z'|z_0)}{p_{\mathcal{R}}(z',T|z_0,0)}\mathrm{d}z'$. By using Proposition \ref{thm:doobh-transform}, we can derive the following $h_{\mathcal{R}}$-transformed SDE:
   \begin{equation}\label{eqn:equi-bridge}
    \mathrm{d}\mathbf{R}^{t} = \left[\mathbf{f}_{\mathcal{R}}(\mathbf{R}^{t},t)+\sigma^2(t)\mathbb{E}_{ q_{\mathcal{R}}(\mathbf{R}^{T},T|\mathbf{R}^{t},t;\mathbf{R}^{0},0)}[\nabla_{\mathbf{R}^{t}} \operatorname{log} p_{\mathcal{R}}(\mathbf{R}^{T},T|\mathbf{R}^{t},t)|\mathbf{R}^{0},\mathbf{R}^{t}]\right]\mathrm{d}t + \sigma(t) \mathrm{d}\mathbf{W}^t, 
    \end{equation}
    which corresponds to a process $(\mathbf{R}^{t})_{t\in[0,T]},\mathbf{R}^{0}\sim q_{\text{data}}(R^{t_0})$ satisfying the following properties:
    \begin{itemize}
        \item let $q(\cdot,\cdot):\mathcal{R}\times\mathcal{R}\to\mathbb{R}_{\geq 0}$ denote the joint distribution induced by $(\mathbf{R}^{t})_{t\in[0,T]}$, then $q(\mathbf{R}^0, \mathbf{R}^T)$ equals to $q_{\text{data}}(R^{t_0}, R^{t_1})$;
        \item its transition density $q_{\mathcal{R}}(\mathbf{R}^{t'},t'|\mathbf{R}^t,t;\mathbf{R}^0,0)$$=$$q_{\mathcal{R}}(\rho^{\mathcal{R}}(g)[\mathbf{R}^{t'}],t'|\rho^{\mathcal{R}}(g)[\mathbf{R}^t],t;\rho^{\mathcal{R}}(g)[\mathbf{R}^0],0)$,$\\\forall$$0$$\leq$$t$$,$$t'$$\leq$$T,g$$\in$$\mathrm{SE}(3)$$,$$\mathbf{R}^0$$\sim$$q_{\text{data}}(R^{t_0})$.
    \end{itemize}
We call the tailored diffusion process $(\mathbf{R}^{t})_{t\in[0,T]}$ an equivariant diffusion bridge.
\end{theorem}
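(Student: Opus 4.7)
The plan is to verify the theorem in three stages: (i) show that the chosen $h_{\mathcal{R}}(z,t;z_0)$ meets the two hypotheses of Proposition \ref{thm:doobh-transform} so that the $h$-transform is well-defined and yields an $\mathrm{SE}(3)$-equivariant transition density, (ii) massage the drift $\sigma^2(t)\nabla_{z}\log h_{\mathcal{R}}(z,t;z_0)$ into the conditional-expectation form displayed in Eqn.~(\ref{eqn:equi-bridge}), and (iii) check the two bullet-point properties (endpoint coupling and equivariance of the bridge's transition density).

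For (i), I would first establish the harmonic condition $h_{\mathcal{R}}(z,t;z_0)=\int p_{\mathcal{R}}(z'',t'|z,t)\, h_{\mathcal{R}}(z'',t';z_0)\,\mathrm{d}z''$ by inserting the definition of $h_{\mathcal{R}}$ at time $t'$ and collapsing the double integral via the Chapman--Kolmogorov identity for the base process, $\int p_{\mathcal{R}}(z',T|z'',t')p_{\mathcal{R}}(z'',t'|z,t)\,\mathrm{d}z''=p_{\mathcal{R}}(z',T|z,t)$. Next, the $\mathrm{SE}(3)$-invariance of $h_{\mathcal{R}}(\cdot,t;\cdot)$ in the joint variables $(z,z_0)$ follows by a change of variables $z'\mapsto\rho^{\mathcal{R}}(g)z'$ in the defining integral, combined with the $\mathrm{SE}(3)$-equivariance of $p_{\mathcal{R}}$ (Proposition \ref{thm:equivariant-diffusion-process}), the $\mathrm{SE}(3)$-invariance of the data law $q_{\text{data}}(\cdot|\cdot)$, and the unit Jacobian of rigid motions. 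With both conditions verified, Proposition \ref{thm:doobh-transform} directly gives the $h$-transformed SDE and the equivariance of its transition density $q_{\mathcal{R}}(z',t'|z,t;z_0,0)=p_{\mathcal{R}}(z',t'|z,t)\,h_{\mathcal{R}}(z',t';z_0)/h_{\mathcal{R}}(z,t;z_0)$.

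For (ii), the plan is to differentiate under the integral sign (justified by the smoothness of $p_{\mathcal{R}}$ assumed in the setup) and use the log-derivative trick:
\begin{equation*}
\nabla_{z}\log h_{\mathcal{R}}(z,t;z_0)=\frac{1}{h_{\mathcal{R}}(z,t;z_0)}\int p_{\mathcal{R}}(z',T|z,t)\,\nabla_{z}\log p_{\mathcal{R}}(z',T|z,t)\,\frac{q_{\text{data}}(z'|z_0)}{p_{\mathcal{R}}(z',T|z_0,0)}\,\mathrm{d}z'.
\end{equation*}
Recognizing the integrand as $q_{\mathcal{R}}(z',T|z,t;z_0,0)\,\nabla_{z}\log p_{\mathcal{R}}(z',T|z,t)$ (this is exactly the transition density of the bridge evaluated at the terminal time) converts the expression into the conditional expectation appearing in Eqn.~(\ref{eqn:equi-bridge}), completing the drift derivation.

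For (iii), the endpoint coupling is obtained by evaluating the normalization at the two boundaries: at $t=0$, $h_{\mathcal{R}}(z_0,0;z_0)=\int q_{\text{data}}(z'|z_0)\,\mathrm{d}z'=1$, while as $t\to T$ the factor $p_{\mathcal{R}}(z',T|z,t)$ concentrates on $z'=z$ so that $h_{\mathcal{R}}(z,T;z_0)=q_{\text{data}}(z|z_0)/p_{\mathcal{R}}(z,T|z_0,0)$. Substituting these into $q_{\mathcal{R}}(\mathbf{R}^{T},T|\mathbf{R}^{0},0;\mathbf{R}^{0},0)$ cancels the $p_{\mathcal{R}}$ factors and leaves $q_{\text{data}}(\mathbf{R}^{T}|\mathbf{R}^{0})$, which multiplied by $q_{\text{data}}(\mathbf{R}^{0})$ yields the claimed joint $q_{\text{data}}(R^{t_0},R^{t_1})$. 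The equivariance of the bridge transition density then follows from the equivariance of $p_{\mathcal{R}}$ together with the joint invariance of $h_{\mathcal{R}}(z,t;z_0)$ under simultaneous action on $(z,z_0)$ established in (i). The main technical obstacle will be the limit $t\to T$ in handling the delta-function behavior of $p_{\mathcal{R}}(z',T|z,t)$ and ensuring that the differentiation-under-the-integral step in (ii) is rigorously valid; both are standard for the Gaussian-like kernels arising from the base SDE under mild regularity, so I would handle them via a short dominated-convergence argument rather than a delicate analytic construction.
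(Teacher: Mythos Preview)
Your proposal is correct and follows essentially the same route as the paper: verify the harmonic condition for $h_{\mathcal{R}}$ via Chapman--Kolmogorov, establish joint $\mathrm{SE}(3)$-invariance of $h_{\mathcal{R}}(z,t;z_0)$ in $(z,z_0)$ by change of variables, rewrite $\nabla_z\log h_{\mathcal{R}}$ as a posterior expectation under $q_{\mathcal{R}}$, and read off the endpoint law from $h_{\mathcal{R}}(z_0,0;z_0)=1$. The only noteworthy difference is that the paper \emph{defines} $h_{\mathcal{R}}(z,T;z_0)=q_{\text{data}}(z|z_0)/p_{\mathcal{R}}(z,T|z_0,0)$ as the terminal value and builds $h_{\mathcal{R}}(z,t;z_0)$ for $t<T$ from it, which sidesteps the delta-limit you flag as the main technical obstacle; adopting that viewpoint removes the need for any dominated-convergence argument at $t\to T$.
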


According to Theorem \ref{thm:equivariant-diffusion-bridge}, given an initial geometric state $R^{t_0}$, we can predict target geometric states $R^{t_1}$ by simulating the equivariant diffusion bridge $(\mathbf{R}^t)_{t\in[0,T]}$ from $\mathbf{R}^0=R^{t_0}$, which arrives at $\mathbf{R}^{T}\sim q_{\text{data}}(R^{t_1}|R^{t_0})$. However, the score $\mathbb{E}_{ q_{\mathcal{R}}(\mathbf{R}^{T},T|\mathbf{R}^{t},t;\mathbf{R}^0,0)}[\nabla_{\mathbf{R}^{t}} \operatorname{log} p_{\mathcal{R}}(\mathbf{R}^{T},T|\mathbf{R}^{t},t)|\mathbf{R}^0,\mathbf{R}^{t}]$ in Eqn.~(\ref{eqn:equi-bridge}) is not tractable in general. Inspired by the score matching objective in diffusion models~\cite{song2021scorebased}, we use a parameterized model $\mathbf{v}_{\theta}(\mathbf{R}^t, t; \mathbf{R}^0)$ to estimate the score by using the following training objective:
\begin{equation}\label{eqn:matching-objective}
    \mathcal{L}(\theta)=\mathbb{E}_{(z_0,z_1)\sim q_{\text{data}}(R^{t_0},R^{t_1}),\mathbf{R}^t\sim q_{\mathcal{R}}(\mathbf{R}^t,t|z_1,T;z_0,0)}\lambda(t)\Vert\mathbf{v}_{\theta}(\mathbf{R}^t, t; z_0)-\nabla_{\mathbf{R}^{t}} \operatorname{log} p_{\mathcal{R}}(z_1,T|\mathbf{R}^{t},t)\Vert^2,
\end{equation}

where $t\sim\mathcal{U}(0,T)$ (the uniform distribution on $[0,T]$), and $\lambda(\cdot):[0,T]\to\mathbb{R}_{\geq 0}$ is a positive weighting function. Theoretically, we prove that the minimizer of Eqn.~(\ref{eqn:matching-objective}) approximates the ground-truth score (see Appendix \ref{Appendix: Objective of Equivariant Diffusion Bridge} for more details). Moreover, this objective is tractable because the transition density $p_{\mathcal{R}}$ and $q_{\mathcal{R}}$ can be designed to have simple and explicit forms such as Gaussian, which we will elaborate on in Sec.~\ref{sec:implementation}.

\subsection{Chain of Equivariant Diffusion Bridges for Leveraging Trajectory Guidance}\label{sec:traj-guidance}
In this subsection, we elaborate on how to leverage trajectories of geometric states as a fine-grained guidance in our framework. Let $(\tilde{R}^i)_{i\in[N]}$ denote a trajectory of $N+1$ geometric states and $q_{\text{traj}}(\tilde{R}^{0},...,\tilde{R}^{N})$ denote the joint probability density function of geometric states in a trajectory. In practice, the markov property of trajectories typically holds~\cite{weinan2010transition,prinz2011markov}. Under this assumption, $q_{\text{traj}}(\tilde{R}^{0},...,\tilde{R}^{N})$ can be equivalently reformulated into $q_{\text{traj}}^0(\tilde{R}^{0})\prod_{i=1}^N q_{\text{traj}}^{i}(\tilde{R}^{i}|\tilde{R}^{i-1})$ by the chain rule of probability. If $q^{i}_{\text{traj}}(\tilde{R}^{i}|\tilde{R}^{i-1})$ can be well modeled, we can capture the distribution of trajectories of geometric states completely.

According to Theorem \ref{thm:equivariant-diffusion-bridge}, given $\mathbf{R}^0\sim q_{\text{traj}}^0(\tilde{R}^{0})$, an equivariant diffusion bridge $(\mathbf{R}^t)_{t\in[0,T]}$ can be constructed to model the joint distribution $q_{\text{traj}}(\tilde{R}^{0}, \tilde{R}^{1})$ and hence $q_{\text{traj}}^1(\tilde{R}^{1}|\tilde{R}^{0})$ is preserved. Therefore, if we construct a series of interconnected equivariant diffusion bridges, the distribution of trajectories can be modeled:

\begin{theorem}[Chain of Equivariant Diffusion Bridges]\label{thm:Chain-of-Equivariant-Diffusion-Bridges}
    Let $\{(\mathbf{R}_i^{t})_{t\in[0,T]}\}_{i\in[N-1]}$ denote a series of $N$ equivaraint diffusion bridges defined in Theorem \ref{thm:equivariant-diffusion-bridge}. For the $i$-th bridge $(\mathbf{R}_i^{t})_{t\in[0,T]}$, if we set (1) $h_{\mathcal{R}}^i(z,t;z_0)=\int p_{\mathcal{R}}(z',T|z,t)\frac{q_{\text{traj}}^{i+1}(z'|z_0)}{p_{\mathcal{R}}(z',T|z_0,0)}\mathrm{d}z'$; (2) $\mathbf{R}_0^{0}\sim q_{\text{traj}}^0(\tilde{R}^{0}),\mathbf{R}_i^{0}= \mathbf{R}^{T}_{i-1},\forall 0<i<N$, then the joint distribution $q_{\mathcal{R}}(\mathbf{R}^{0}_{0},\mathbf{R}^{T}_{0},\mathbf{R}^{T}_{1},\cdots,\mathbf{R}^{T}_{N-1})$ induced by $\{(\mathbf{R}_i^{t})_{t\in[0,T]}\}_{i\in[N-1]}$ equals to $q_{\text{traj}}(\tilde{R}^{0},...,\tilde{R}^{N})$. We call this process a chain of equivariant diffusion bridges.
\end{theorem}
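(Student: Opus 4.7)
The plan is to reduce the claim to a telescoping application of Theorem \ref{thm:equivariant-diffusion-bridge}, using the Markov property of the underlying SDEs to glue the bridges together. Concretely, I would prove the joint-density identity by establishing three ingredients: (a) the Markov factorization of $q_{\text{traj}}$, (b) the one-step distributional identity supplied by each individual equivariant bridge, and (c) the conditional independence of each bridge from the earlier ones given its initial endpoint.

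First I would write down the Markov decomposition assumed in the preamble,
\begin{equation*}
    q_{\text{traj}}(\tilde R^{0},\ldots,\tilde R^{N})
    = q_{\text{traj}}^{0}(\tilde R^{0})\prod_{i=1}^{N} q_{\text{traj}}^{i}(\tilde R^{i}\mid \tilde R^{i-1}).
\end{equation*}
Next, for each fixed $i$, I would apply Theorem \ref{thm:equivariant-diffusion-bridge} to the $i$-th bridge $(\mathbf{R}_i^{t})_{t\in[0,T]}$. The theorem is stated for a bridge whose $h$-function is built from a generic data distribution $q_{\text{data}}(\cdot\mid\cdot)$; here the $h$-function is built from $q_{\text{traj}}^{i+1}(\cdot\mid\cdot)$, so the same theorem immediately yields that the joint law of $(\mathbf{R}_i^{0},\mathbf{R}_i^{T})$ under the bridge equals $q_{\text{traj}}^{0\to i+1}$-type coupling of the prescribed initial distribution of $\mathbf{R}_i^{0}$ with the conditional kernel $q_{\text{traj}}^{i+1}(\cdot\mid\mathbf{R}_i^{0})$. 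In particular, conditional on $\mathbf{R}_i^{0}=z$, the endpoint $\mathbf{R}_i^{T}$ has law $q_{\text{traj}}^{i+1}(\cdot\mid z)$.

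The key step that ties everything together is the Markov property of the bridge SDEs: because each $(\mathbf{R}_i^{t})_{t\in[0,T]}$ is the strong solution of an SDE driven by its own Wiener increment and the drift depends only on the current state (together with the fixed $\mathbf{R}_i^{0}$), the full trajectory of the $i$-th bridge is conditionally independent of the earlier bridges given $\mathbf{R}_i^{0}$. Combined with the gluing condition $\mathbf{R}_i^{0}=\mathbf{R}_{i-1}^{T}$, this yields
\begin{equation*}
    q_{\mathcal{R}}\bigl(\mathbf{R}_i^{T}\,\bigm|\,\mathbf{R}_0^{0},\mathbf{R}_0^{T},\ldots,\mathbf{R}_{i-1}^{T}\bigr)
    = q_{\mathcal{R}}\bigl(\mathbf{R}_i^{T}\,\bigm|\,\mathbf{R}_i^{0}\bigr)
    = q_{\text{traj}}^{i+1}\bigl(\cdot\,\bigm|\,\mathbf{R}_{i-1}^{T}\bigr),
\end{equation*}
where the last equality uses the one-step identity from Theorem \ref{thm:equivariant-diffusion-bridge}. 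An induction on $i$ (or equivalently the chain rule of probability applied once) then produces
\begin{equation*}
    q_{\mathcal{R}}(\mathbf{R}_0^{0},\mathbf{R}_0^{T},\mathbf{R}_1^{T},\ldots,\mathbf{R}_{N-1}^{T})
    = q_{\text{traj}}^{0}(\mathbf{R}_0^{0})\prod_{i=0}^{N-1}q_{\text{traj}}^{i+1}(\mathbf{R}_i^{T}\mid \mathbf{R}_{i-1}^{T}),
\end{equation*}
with the convention $\mathbf{R}_{-1}^{T}:=\mathbf{R}_0^{0}$, which matches the Markov factorization of $q_{\text{traj}}$ after identifying $\mathbf{R}_0^{0}\leftrightarrow\tilde R^{0}$ and $\mathbf{R}_i^{T}\leftrightarrow \tilde R^{i+1}$.

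I expect the main technical subtlety to be the conditional independence step, i.e., rigorously justifying that conditioning on $\mathbf{R}_i^{0}$ screens off all earlier randomness. This requires specifying that the Wiener processes driving different bridges are independent and that each bridge's drift is a measurable function of only the current state and the initial state of that bridge; once this is set up, the Markov property of the SDE solution (a standard fact) delivers the result. The remaining steps, namely invoking Theorem \ref{thm:equivariant-diffusion-bridge} per bridge and collapsing the product into the Markov factorization, are essentially bookkeeping.
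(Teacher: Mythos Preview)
Your proposal is correct and follows essentially the same approach as the paper: both proofs invoke Theorem~\ref{thm:equivariant-diffusion-bridge} on each individual bridge to obtain $q_{\mathcal{R}}(\mathbf{R}_i^{T}\mid\mathbf{R}_i^{0})=q_{\text{traj}}^{i+1}(\cdot\mid\mathbf{R}_i^{0})$, use the Markov factorization of $q_{\text{traj}}$, and then chain the per-bridge identities together via the boundary conditions $\mathbf{R}_i^{0}=\mathbf{R}_{i-1}^{T}$. You are somewhat more explicit than the paper in isolating and justifying the conditional-independence step (independent driving noises, drift depending only on the current state and the bridge's own initial point), which the paper leaves implicit.
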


\looseness=-1In this way, a chain of equivariant diffusion bridge can be used to model prior trajectory data, and simulating this chain not only bridges initial and target geometric states but also yields intermediate evolving states. Similarly, we can also use a parameterized model to estimate the scores of bridges in this chain. Instead of having only one objective in all time steps, we now have $N$ bridges in total, which categorize the time span into $N$ groups with different time-dependent objectives. Therefore, by properly specifying time steps and initial conditions, the objective in Eqn.~(\ref{eqn:matching-objective}) can be seamlessly extended (see Appendix~\ref{Appendix: Objective of Chain of Equivariant Diffusion Bridge} for more details on its provable guarantee):
\begin{equation}\label{eqn:matching-objective-traj}
\mathcal{L}'(\theta)=\mathbb{E}_{(z_0,...,z_N)\sim q_{\text{traj}}(\tilde{R}^0,...,\tilde{R}^N),t,\mathbf{R}_i^{t'}}\lambda(t)\Vert\mathbf{v}_{\theta}(\mathbf{R}_i^{t'}, t; z_{i})-\nabla_{\mathbf{R}_i^{t'}} \operatorname{log} p^i_{\mathcal{R}}(z_{i+1},T|\mathbf{R}_i^{t'},t')\Vert^2,
\end{equation}
where $t\sim\mathcal{U}(0,N\times T),i=\lfloor\frac{t}{T}\rfloor,t'=t-i\times T,\mathbf{R}_i^{t'}\sim q^i_{\mathcal{R}}(\mathbf{R}_i^{t'},t'|z_{i+1},T;z_{i},0)$.

Lastly, we provide the following theoretical result, which further characterizes our framework's expressiveness to completely model the underlying dynamics that induce the trajectory distributions:
\begin{theorem}\label{thm:kl-sde}
     Assume $(\tilde{R}^i)_{i\in[N]}$ is sampled by simulating a prior SDE on geometric states $\mathrm{d}\tilde{\mathbf{R}}^{t}=-\nabla H^*_{\mathcal{R}}(\tilde{\mathbf{R}}^{t})\mathrm{d}t+\sigma\mathrm{d}\tilde{\mathbf{W}}^t$. Let $\mu_i^*$ denote the path measure of this prior SDE when $t\in[iT,(i+1)T]$. Building upon $(\tilde{R}^i)_{i\in[N]}$, let $\{\mu_{\mathcal{R}}^i\}_{i\in[N-1]}$ denote the path measure of our chain of equivariant diffusion bridges. Under mild assumptions, we have $\mathop{\lim}\limits_{N\to \infty}\mathop{\max}\limits_{i}\operatorname{KL}(\mu^*_i||\mu^i_{\mathcal{R}})=0$.
\end{theorem}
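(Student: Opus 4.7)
The plan is to reduce $\operatorname{KL}(\mu^{*}_i\Vert\mu^{i}_{\mathcal{R}})$ to an integrated drift mismatch via Girsanov's theorem and then quantify the mismatch using small-time asymptotics of SDE transition densities. The implicit convention is that the total horizon is fixed while $N\to\infty$, so each segment length $T\to 0$.

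First I would invoke Girsanov on the $i$-th segment $[iT,(i+1)T]$. Under the mild assumptions that the prior SDE and the underlying free SDE of the $i$-th bridge share the diffusion coefficient $\sigma$ and satisfy Novikov's condition, the relative entropy admits the standard representation
\begin{equation*}
\operatorname{KL}(\mu^{*}_i\Vert\mu^{i}_{\mathcal{R}}) = \frac{1}{2\sigma^{2}}\,\mathbb{E}_{\mu^{*}_i}\!\left[\int_{0}^{T}\bigl\Vert -\nabla H^{*}_{\mathcal{R}}(\mathbf{R}^{t}) - b^{\mathrm{br}}_{i}(\mathbf{R}^{t},t)\bigr\Vert^{2}\mathrm{d}t\right],
\end{equation*}
where $b^{\mathrm{br}}_{i}(x,t)=\mathbf{f}_{\mathcal{R}}(x,t)+\sigma^{2}\,\mathbb{E}[\nabla_{x}\log p_{\mathcal{R}}(\mathbf{R}^{T},T\mid x,t)\mid \mathbf{R}^{0}=\tilde R^{i},\mathbf{R}^{t}=x]$ is the bridge drift read off from Eqn.~(\ref{eqn:equi-bridge}).

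The key structural observation is that if the free SDE underlying the bridge were taken equal to the prior SDE itself, Doob's $h$-transform would reproduce the prior SDE conditioned on $(\tilde R^{i},\tilde R^{i+1})$ exactly; since these endpoints are drawn from the prior's own two-time marginal, the two path measures would coincide and the KL would vanish. In practice a tractable proxy kernel is used (e.g.\ Brownian motion or an Ornstein--Uhlenbeck SDE with Gaussian $p_{\mathcal{R}}$), and the mismatch arises solely from approximating the prior's true transition density by the proxy over a window of length $T$. Under standard regularity (e.g.\ $\nabla H^{*}_{\mathcal{R}}$ Lipschitz with linear growth), Varadhan/Euler--Maruyama short-time expansions show that both densities collapse to a common Gaussian with $O(T)$ drift correction and covariance $\sigma^{2}T\,\mathbf{I}$; plugging this into the conditional score inside $b^{\mathrm{br}}_{i}$ and using explicit Gaussian-bridge moment bounds gives $\bigl\Vert\nabla H^{*}_{\mathcal{R}}(x)+b^{\mathrm{br}}_{i}(x,t)\bigr\Vert^{2}=o(1)$ as $T\to 0$, with the rate depending only on global regularity constants and therefore uniform in $i$.

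The main obstacle will be the $(T-t)^{-1}$ blow-up of the bridge score as $t\to T$, which prevents a naive dominated-convergence argument. I would split the KL integral at $T-\delta$: on $[0,T-\delta]$ the above asymptotics apply directly, while on $[T-\delta,T]$ I would use the explicit Gaussian-bridge variance $\sigma^{2}(T-t)t/T$ together with Lipschitz control on $\nabla H^{*}_{\mathcal{R}}$ to show the contribution is $O(\delta)$, then send $\delta\to 0$ after $N\to\infty$. A parallel subtlety is the disintegration: since $(\tilde R^{i},\tilde R^{i+1})$ are drawn from the prior's two-time marginal, the endpoint-conditioning term cancels in the chain-rule decomposition of the KL, leaving only the path-given-endpoints mismatch already controlled above. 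Taking $\max_i$ is then immediate, and Theorem~\ref{thm:kl-sde} follows.
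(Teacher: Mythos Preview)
Your high-level skeleton---chain rule for relative entropy so that the endpoint contribution vanishes by Theorem~\ref{thm:Chain-of-Equivariant-Diffusion-Bridges}, followed by Girsanov on the remaining path part---is the same as the paper's. The substantive difference is in how the path term is actually bounded.

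The paper does not work with the drift-mismatch integral you wrote. Instead it applies Girsanov to the \emph{unconditioned} prior SDE against scaled Brownian motion, then uses It\^o's formula on the potential $H^*_{\mathcal{R}}$ to replace the stochastic integral $-\frac{1}{\sigma}\int_0^T \nabla H^*_{\mathcal{R}}(\sigma\tilde{\mathbf{W}}^t)\cdot\mathrm{d}\tilde{\mathbf{W}}^t$ by the boundary increment $\frac{1}{\sigma^2}\bigl(H^*_{\mathcal{R}}(\tilde{\mathbf{R}}^0)-H^*_{\mathcal{R}}(\tilde{\mathbf{R}}^T)\bigr)$ plus $\frac{1}{2}\int_0^T\bigl(\nabla^2 H^*_{\mathcal{R}}-\sigma^{-2}\Vert\nabla H^*_{\mathcal{R}}\Vert^2\bigr)\mathrm{d}t$. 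After disintegrating into pinned (Brownian-bridge) measures and using that the equivariant bridge conditioned on both endpoints \emph{is} the Brownian bridge, the boundary piece becomes deterministic under conditioning and the time integral is uniformly bounded by $CT/2$ under the standing hypothesis $\nabla^2 H^*_{\mathcal{R}}-\sigma^{-2}\Vert\nabla H^*_{\mathcal{R}}\Vert^2<C$. Taking expectations, the boundary contribution is $\sigma^{-2}\bigl(h(0)-h(T)\bigr)$ with $h(t)=\mathbb{E}[H^*_{\mathcal{R}}(\tilde{\mathbf{R}}^t)]$, and continuity of $h$ gives the limit. No short-time heat-kernel asymptotics, no Varadhan expansion, and no $T-\delta$ splitting are needed.

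Your route can in principle be pushed through, but as written there is an internal mismatch. In the Girsanov identity you display, $b^{\mathrm{br}}_i$ is the drift of the $h_{\mathcal{R}}$-transformed SDE of Eqn.~(\ref{eqn:equi-bridge}), i.e.\ $\sigma^2\nabla_x\log h_{\mathcal{R}}(x,t;\tilde R^i)$; this function has a finite $t\uparrow T$ limit (namely $\sigma^2\nabla_x\log q^{i+1}_{\mathrm{traj}}(x\mid\tilde R^i)-\sigma^2\nabla_x\log p_{\mathcal{R}}(x,T\mid\tilde R^i,0)$), so there is no $(T-t)^{-1}$ singularity at this level and your splitting argument is superfluous. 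The singular score $\nabla_x\log p_{\mathcal{R}}(z_1,T\mid x,t)$ you later worry about only surfaces once you condition on a \emph{fixed} terminal $z_1$, which is the conditional formulation you defer to your final paragraph. If you really intend to work there, the mismatch to control is between the prior's own pinned drift $-\nabla H^*_{\mathcal{R}}(x)+\sigma^2\nabla_x\log p^*(z_1,T\mid x,t)$ and the Brownian-bridge drift $(z_1-x)/(T-t)$; their leading singularities do cancel, but turning the residual into an $O(\delta)$ bound on $[T-\delta,T]$ requires a quantitative expansion of $p^*$ that you have not supplied. The It\^o trick is precisely what sidesteps this.
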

\looseness=-1It is noteworthy that the assumption of the prior SDE existence holds in various real-world applications. For example, in geometry optimization, we can formulate the iterative updating process of a molecular system as $\mathrm{d}\mathbf{R}^t=-\alpha\nabla_{\mathbf{R}^t}V(\mathbf{R}^t)\mathrm{d}t + \beta\mathrm{d}\mathbf{W}^t$, where $V(\mathbf{R}^t)$ denotes the potential energy at $\mathbf{R}^t$ and $\alpha,\beta$ are step sizes~\cite{schlegel2011geometry}. From Theorem \ref{thm:kl-sde}, such prior SDE serves as the underlying law governing the evolution dynamic, and our chain of equivariant diffusion bridges constructed from empirical trajectory data can well approximate it, showing the completeness of our framework.
 \vspace{-3pt}
\subsection{Practical Implementation}\label{sec:implementation}
 \vspace{-3pt}
In this subsection, we elaborate on how to practically implement our framework. According to Eqn.~(\ref{eqn:matching-objective}), it is necessary to carefully design (1) tractable distribution $q_{\mathcal{R}}(\mathbf{R}^t,t|z_1,T;z_0,0)$ for sampling $\mathbf{R}^t$; (2) closed-form matching objective $\nabla_{\mathbf{R}^{t}} \operatorname{log} p_{\mathcal{R}}(z_1,T|\mathbf{R}^{t},t)$. 
\vspace{-3pt}
\paragraph{Matching objective.} Inspired by diffusion models that use Gaussian transition kernels for tractable computation, we design the SDE on geometric states in Proposition \ref{thm:equivariant-diffusion-process} to be:
\begin{equation}\label{eqn:implement-matching-objective}
    \mathrm{d}\mathbf{R}^{t}=\sigma\mathrm{d}\mathbf{W}^t,\quad \textit{with transition density}\quad p_{\mathcal{R}}(z',t'|z,t)=\mathcal{N}(z_0,\sigma^2(t'-t)\mathbf{I})
\end{equation}
The explicit form of the objective can be directly calculated, i.e., $\nabla_{\mathbf{R}^{t}} \operatorname{log} p_{\mathcal{R}}(z_1,T|\mathbf{R}^{t},t)=\frac{z_1-\mathbf{R}^t}{\sigma^2(T-t)}$.
\vspace{-3pt}
\paragraph{Sampling distribution.} According to Theorem \ref{thm:equivariant-diffusion-bridge}, the transition density $q_{\mathcal{R}}(\mathbf{R}^t,t|z_1,T;z_0,0)$ can be calculated by using the Doob's $h$-transform in Proposition \ref{thm:doobh-transform}, i.e., $q_{\mathcal{R}}(\mathbf{R}^t,t|z_1,T;z_0,0)=p_{\mathcal{R}}(\mathbf{R}^t,t|z_1,T)\frac{h_{\mathcal{R}}(\mathbf{R}^t,t;z_0)}{h_{\mathcal{R}}(z_1,T;z_0)}$. Moreover, $h_{\mathcal{R}}$ is determined by $q_{\text{data}}$ and $p_{\mathcal{R}}$, which is already specified in Eqn.~(\ref{eqn:implement-matching-objective}). Therefore, we can also calculate $q_{\mathcal{R}}(\mathbf{R}^t,t|z_1,T;z_0,0)=\mathcal{N}(\frac{t}{T}z_1+\frac{T-t}{T}z_0,\sigma^2\frac{t(T-t)}{T^2}\mathbf{I})$.
\vspace{-8pt}
\paragraph{Symmetry constraints.} In proposition \ref{thm:equivariant-diffusion-process}, we have several conditions that should be satisfied to meet the symmetry constraints. Firstly, since a parameterized model $\mathbf{v}_{\theta}(\mathbf{R}^t, t; \mathbf{R}^0)$ is used to estimate the score of our equivariant diffusion bridge, it should be $\mathrm{SO}(3)$-equivariant and $\mathrm{T}(3)$-invariant. Besides, we follow~\cite{kohler2020equivariant,xu2022geodiff} to consider CoM-free systems: given $R=\{\mathbf{r}_1,...,\mathbf{r}_n\}$, we define $\Bar{\mathbf{r}}=\frac{1}{n}\sum_{i=1}^n \mathbf{r}_i$ and the CoM-free version of $R=\{\mathbf{r}_1-\Bar{\mathbf{r}},...,\mathbf{r}_n-\Bar{\mathbf{r}}\}$. To sample from $\mathcal{N}(z_0,\sigma^2\mathbf{I})$ with $z_0\in\mathcal{R}$ consisting of $n$ objects, we (1) sample $\boldsymbol\epsilon=\{\epsilon_i\}_{i=1}^n$ by i.i.d. drawing $\epsilon_i\sim\mathcal{N}(\mathbf{0},\mathbf{I}_3)$; (2) calculate the CoM-free $\boldsymbol\epsilon'$ of $\boldsymbol\epsilon$; (3) obtain $z_0+\sigma\boldsymbol\epsilon'$.
\vspace{-2pt}
\paragraph{Trajectory guidance.} According to Eqn.~(\ref{eqn:matching-objective-traj}), both $p_{\mathcal{R}}^i$ and $q_{\mathcal{R}}^i$ for all $i$$\in$$[N$$-$$1]$ should be determined. Similarly, we set $p_{\mathcal{R}}^i(z_{i+1},T|\mathbf{R}^{t'},t')$$=$$\mathcal{N}(\mathbf{R}^{t'},\sigma_i^2(T$$-$$t')\mathbf{I})$, which further induces $q_{\mathcal{R}}^i(\mathbf{R}^{t'},t'|z_{i+1},T;z_i,0)=\mathcal{N}(\frac{t'}{T}z_{i+1}+\frac{T-t'}{T}z_i,\sigma_i^2\frac{t'(T-t')}{T^2}\mathbf{I})$.

Combining all the above design choices, we have the following algorithms for training our Geometric Diffusion Bridge (Alg.~\ref{alg:training}) and leveraging trajectory guidance if available (Alg.~\ref{alg:training-with-traj}). After the model is well trained, we leverage ODE numerical solvers~\cite{butcher2016numerical} to simulate the bridge process by using its equivalent probability flow ODE~\cite{song2021scorebased}. In this way, we can effectively and deterministically predict future geometric states of interest from initial states in an efficient iterative process. Lastly, it is also noteworthy that our framework is general to be implemented by using other advanced design strategies~\cite{song2021scorebased,karras2022elucidating,NEURIPS2023_ce79fbf9}, which we leave as future work.

\algrenewcommand\algorithmicindent{0.5em}%
\begin{figure}[h]
\vspace{-8pt}
\begin{minipage}[t]{0.495\textwidth}
\begin{algorithm}[H]
  \caption{Training} \label{alg:training}
  \small
  \vspace{.05in}
  \begin{algorithmic}[1]
    \Repeat
      \State $(z_0,z_1) \sim q_{\text{data}}(R^{t_0},R^{t_1})$
      \State $t \sim \mathcal{U}[0, T]$
      \State $\mathbf{\epsilon}\sim\mathcal{N}(\mathbf{0},\mathbf{I})$ 
      \State $\mathbf{R}^t=\frac{t}{T}z_1+\frac{T-t}{T}z_0+\frac{\sqrt{t(T-t)}}{T}\sigma\mathbf{\epsilon}$
      \State Take gradient descent step on
      \Statex $\qquad \nabla_\theta \lambda(t)\left\| \frac{z_1-\mathbf{R}^t}{\sigma^2(T-t)} - \mathbf{v}_{\theta}(\mathbf{R}^t, t; z_0)\right\|^2$
    \Until{converged}
  \end{algorithmic}
  \vspace{.05in}
\end{algorithm}
\end{minipage}
\hfill
\begin{minipage}[t]{0.495\textwidth}
\begin{algorithm}[H]
  \caption{Training with trajectory guidance} \label{alg:training-with-traj}
  \small
  \begin{algorithmic}[1]
    \Repeat
      \State $(z_0,\dotsc,z_N) \sim q_{\text{traj}}(\tilde{R}^{0},\dotsc,\tilde{R}^{N})$
      \State $t \sim \mathcal{U}\left(0, N\times T\right)$, $i=\lfloor\frac{t}{T}\rfloor,t'=t-i\times T$
      \State $\mathbf{\epsilon}\sim\mathcal{N}(\mathbf{0},\mathbf{I})$ 
      \State $\mathbf{R}_i^{t'}=\frac{t'}{T}z_{i+1}+\frac{T-t'}{T}z_i+\frac{\sqrt{t'(T-t')}}{T}\sigma_i\mathbf{\epsilon}$
      \State Take gradient descent step on 
      \Statex $\qquad \nabla_\theta \lambda(t)\left\| \frac{z_{i+1}-\mathbf{R}_i^{t'}}{\sigma_i^2(T-t')} - \mathbf{v}_{\theta}(\mathbf{R}_i^{t'}, t; z_i)\right\|^2$
    \Until{converged}
  \end{algorithmic}
\end{algorithm}
\end{minipage}
\end{figure}

\section{Experiments}\label{sec:exp}
In this section, we empirically study the effectiveness of our Geometric Diffusion Bridge on crucial real-world challenges requiring bridging geometric states. In particular, we carefully design several experiments covering different types of data, scales and scenarios, as shown in Table \ref{tab:exp-stats}. Due to space limits, we present more details in Appendix~\ref{Appendix:exp}.
\begin{table*}[htbp]
\centering
\small
\caption{Summary of experimental setup.}
\label{tab:exp-stats}
\setlength\tabcolsep{3pt}
\renewcommand{\arraystretch}{1.2}
\resizebox{0.95\textwidth}{!}{
\begin{tabular}{lllll}
\hline
\textbf{Dataset} & \textbf{Task Description} & \textbf{Data Type} & \textbf{Trajectory data} & \textbf{Training set size} \\
\hline
QM9~\cite{ramakrishnan2014quantum} & Equilibrium State Prediction & Simple molecule & \xmark & 110,000 \\
\hline
Molecule3D~\cite{xu2021molecule3d} & Equilibrium State Prediction & Simple molecule & \xmark & 2,339,788 \\
\hline
OC22, IS2RS~\cite{chanussot2021open} & Structure Relaxation & Adsorbate-Catalyst complex & \cmark & 45,890 \\
\hline
\end{tabular}}
\vspace{-6pt}
\end{table*}
\subsection{Equilibrium State Prediction}
\paragraph{Task.} \looseness=-1Equilibrium states typically represent local minima on the Born-Oppenheimer potential energy surface of a molecular system~\cite{levine2009quantum}, which correspond to its most stable geometric state and play an essential role in determining its properties in various aspects~\cite{bak2001accurate,csaszar2005equilibrium}. In this task, our goal is to accurately predict the equilibrium state from the initial geometric state of a molecular system.

\paragraph{Dataset.} \looseness=-1Two popular datasets are used: (1) QM9~\cite{ramakrishnan2014quantum} is a medium-scale dataset that has been widely used for molecular modeling, consisting of \~130,000 organic molecules. In convention, 110k, 10k, and 11k molecules are used for train/valid/test sets respectively; (2) Molecule3D~\cite{xu2021molecule3d} is a large-scale dataset curated from the PubChemQC project~\citep{maho2015pubchemqc,nakata2017pubchemqc}, consisting of 3,899,647 molecules in total and its train/valid/test splitting ratio is $6:2:2$. In particular, both random and scaffold splitting methods are adopted to thoroughly evaluate the in-distribution and out-of-distribution performance. For each molecule, an initial geometric state is generated by using fast and coarse force field~\cite{o2011open,Landrum2016RDKit2016_09_4} and geometry optimization is conducted to obtain DFT-calculated equilibrium geometric structure.

\paragraph{Setting.} \looseness=-1In this task, we parameterize $\mathbf{v}_{\theta}(\mathbf{R}^t,t;\mathbf{R}^0)$ by extending a Graph-Transformer based equivariant network~\cite{shi2022benchmarking,lu2023highly} to encode both time steps and initial geometric states as conditions. For inference, we use 10 time steps with the Euler solver~\cite{butcher2016numerical}. Following~\cite{xu2023gtmgc}, we choose several strong baselines for a comprehensive comparison, and use three metrics for measuring the error between predicted target states and ground-truth states: C-RMSD, D-MAE and D-RMSE. The detailed descriptions of the baselines, evaluation metrics and training settings are presented in Appendix~\ref{Appendix:equilibrium-state-prediction}.


\begin{table}[h]
\centering
\caption{Results on the QM9 dataset (Å). We report the official results of baselines from~\cite{xu2023gtmgc}}
\label{tab:qm9}
\renewcommand{\arraystretch}{1.1}
\resizebox{0.87\textwidth}{!}{
\begin{tabular}{l|cccccc}
\toprule
 & \multicolumn{3}{c}{Validation} & \multicolumn{3}{c}{Test} \\
 \cline{2-7}
 & D-MAE↓ & D-RMSE↓ & C-RMSD↓ & D-MAE↓ & D-RMSE↓ & C-RMSD↓ \\
\hline
RDKit DG & 0.358 & 0.616 & 0.722 & 0.358 & 0.615 & 0.722 \\
RDKit ETKDG & 0.355 & 0.621 & 0.691 & 0.355 & 0.621 & 0.689 \\
GINE ~\cite{hu2019strategies} & 0.357 & 0.673 & 0.685 & 0.357 & 0.669 & 0.693 \\
GATv2 ~\cite{brody2021attentive} & 0.339 & 0.663 & 0.661 & 0.339 & 0.659 & 0.666 \\
GPS ~\cite{rampavsek2022recipe} & 0.326 & 0.644 & 0.662 & 0.326 & 0.640 & 0.666\\
GTMGC ~\cite{xu2023gtmgc} & 0.262 & 0.468 & 0.362 & 0.264 & 0.470 & 0.367 \\
\hline
\textbf{GDB (ours)} & \textbf{0.092} & \textbf{0.218} & \textbf{0.143} & \textbf{0.096} & \textbf{0.223} & \textbf{0.148} \\
\bottomrule
\end{tabular}
}
\end{table}

\paragraph{Results.} Results on QM9 and Molecule3D are shown in Table~\ref{tab:qm9} and \ref{tab:molecule3d} respectively. It can be easily seen that our GDB framework consistently surpasses all baselines by a significantly large margin on QM9, e.g., 60.5\%/59.7\% relative C-RMSD reduction on valid/test sets respectively, establishing a new state-of-the-art performance. Similar trends also can be observed in Molecule3D, i.e., 12.6\%/13.2\% relative C-RMSD reduction for valid/test sets of the random split and 12.7\%/13.0\% reduction for the scaffold split, largely outperforming the best baseline. These significant error reduction results show the superiority of our GDB framework for bridging geometric states, and its generality on both medium and large-scale challenges. Moreover, our framework performs consistently across valid and tests of both random and scaffold splits, further verifying its robustness in challenging scenarios.

\begin{table}[t]
\centering
\caption{Results on the Molecule3D dataset (Å). We report the official results of baselines from~\cite{xu2023gtmgc}}
\label{tab:molecule3d}
\renewcommand{\arraystretch}{1.1}
\resizebox{0.91\textwidth}{!}{
\begin{tabular}{l|cccccc}
\toprule
 & \multicolumn{3}{c}{Validation} & \multicolumn{3}{c}{Test} \\
 \cline{2-7}
 & D-MAE↓ & D-RMSE↓ & C-RMSD↓ & D-MAE↓ & D-RMSE↓ & C-RMSD↓ \\
\hline
(a) Random Split & & & & & & \\
\hline
RDKit DG & 0.581 & 0.930 & 1.054 & 0.582 & 0.932 & 1.055 \\
RDKit ETKDG & 0.575 & 0.941 & 0.998 & 0.576 & 0.942 & 0.999 \\
DeeperGCN-DAGNN ~\cite{xu2021molecule3d} & 0.509 & 0.849 & * & 0.571 & 0.961 & * \\
GINE ~\cite{hu2019strategies}& 0.590 & 1.014 & 1.116 & 0.592 & 1.018 & 1.116 \\
GATv2 ~\cite{brody2021attentive} & 0.563 & 0.983 & 1.082 & 0.564 & 0.986 & 1.083 \\
GPS ~\cite{rampavsek2022recipe} & 0.528 & 0.909 & 1.036 & 0.529 & 0.911 & 1.038 \\
GTMGC ~\cite{xu2023gtmgc} & 0.432 & 0.719 & 0.712 & 0.433 & 0.721 & 0.713 \\
\hline
\textbf{GDB (ours)} & \textbf{0.374} & \textbf{0.631} & \textbf{0.622} & \textbf{0.376} & \textbf{0.626} & \textbf{0.619} \\
\hline
(b) Scaffold Split & & & & & & \\
\hline
RDKit DG & 0.542 & 0.872 & 1.001 & 0.524 & 0.857 & 0.973 \\
RDKit ETKDG & 0.531 & 0.874 & 0.928 & 0.511 & 0.859 & 0.898 \\
DeeperGCN-DAGNN ~\cite{xu2021molecule3d}  & 0.617 & 0.930 & * & 0.763 & 1.176 & * \\
GINE ~\cite{hu2019strategies} & 0.883 & 1.517 & 1.407 & 1.400 & 2.224 & 1.960 \\
GATv2 ~\cite{brody2021attentive} & 0.778 & 1.385 & 1.254 & 1.238 & 2.069 & 1.752 \\
GPS ~\cite{rampavsek2022recipe} & 0.538 & 0.885 & 1.031 & 0.657 & 1.091 & 1.136 \\
GTMGC ~\cite{xu2023gtmgc} & 0.406 & 0.675 & 0.678 & 0.400 & 0.679 & 0.693 \\
\hline 
\textbf{GDB (ours)} & \textbf{0.335} & \textbf{0.587} & \textbf{0.592} & \textbf{0.341} & \textbf{0.608} & \textbf{0.603} \\
\bottomrule
\end{tabular}
}
\end{table}
 \vspace{-4pt}
\subsection{Structure Relaxation}
 \vspace{-3pt}
\paragraph{Task.} \looseness=-1Catalyst discovery is crucial for various applications. Adsorbate candidates are placed on catalyst surfaces and evolve through structure relaxation to adsorption states, in which the adsorption structures can be determined for measuring catalyst activity and selectivity. Our goal is thus to accurately predict adsorption states from initial states of adsorbate-catalyst complexes.
\vspace{-5pt}
\paragraph{Dataset.} \looseness=-1We adopt Open Catalyst 2022 (OC22) dataset~\citep{tran2023open}, which has great significance for the development of Oxygen Evolution Reaction (OER) catalysts. Each data is in the form of the adsorbate-catalyst complex. Both initial and adsorption states with trajectories connecting them are provided. The training set consists of 45,890 catalyst-adsorbate complexes. To better evaluate the model's performance, the validation and test sets consider the in-distribution (ID) and out-of-distribution (OOD) settings which use unseen catalysts, containing approximately 2,624 and 2,780 complexes respectively.


\begin{table}[t]
\centering
\caption{Results on the OC22 IS2RS Validation set. "OC20+OC22" denotes using both OC20~\cite{chanussot2021open} and OC22 data; "OC20$\to$OC22" means pre-training on OC20 data then fine-tuning on OC22 data; "OC22-only" means only using OC22 data. We report the official results of baselines from~\cite{tran2023open}}
\label{tab:is2rs}
\renewcommand{\arraystretch}{1.1}
\resizebox{0.77\textwidth}{!}{
\begin{tabular}{lccc}
\toprule
Model & ADwT [\%] ↑ (ID) & ADwT [\%] ↑ (OOD) & Avg [\%] ↑ \\
\hline
OC20+OC22 \\
\hline
SpinConv~\cite{shuaibi2021rotation} & 55.79 & 47.31 & 51.55 \\
GemNet-OC~\cite{gasteiger2022gemnetoc} & 60.99 & 53.85 & 57.42 \\
\hline
OC20→OC22 \\
\hline
SpinConv~\cite{shuaibi2021rotation} & 56.69 & 45.78 & 51.23 \\
GemNet-OC~\cite{gasteiger2022gemnetoc} & 58.03 & 48.33 & 53.18 \\
GemNet-OC-Large~\cite{gasteiger2022gemnetoc} & 59.69 & 51.66 & 55.67 \\
\hline
OC22-only \\
\hline
IS baseline & 44.77 & 42.59 & 43.68\\
SpinConv~\cite{shuaibi2021rotation} & 54.53 & 40.45 & 47.49 \\
GemNet-dT~\cite{gasteiger2021gemnet} & 59.68 & 51.25 & 55.46 \\
GemNet-OC~\cite{gasteiger2022gemnetoc} & 60.69 & 52.90 & 56.79 \\
\hline
\textbf{GDB (ours)} & \textbf{63.01} & \textbf{55.78} & \textbf{59.39} \\
 $-$ trajectory guidance & 62.14 & 54.94 & 58.54 \\
 $-$ $\mathbf{R}^0$ condition & 60.17 & 49.26 & 54.71 \\
\bottomrule
\end{tabular}
}

\end{table}
\vspace{-5pt}
\paragraph{Setting.} \looseness=-1Following~\cite{tran2023open}, we use the Average Distance within Threshold (ADwT) as the evaluation metric, which reflects the percentage of structures with an atom position MAE below thresholds. We parameterize $\mathbf{v}_{\theta}(\mathbf{R}^t,t;\mathbf{R}^0)$ by using GemNet-OC~\cite{gasteiger2022gemnetoc}, which also serves as a verification that our framework is compatible with different backbone models. For inference, we also use 10 time steps with the Euler solver. Following~\cite{tran2023open}, we choose strong MLFF baselines trained on force field data for a challenging comparison. The detailed descriptions of baselines and settings are presented in Appendix~\ref{Appendix:structure-relaxation}.
\vspace{-4pt}
\paragraph{Results.} \looseness=-1In Table~\ref{tab:is2rs}, our GDB significantly outperforms the best baseline, e.g., 3.3\%/3.6\%/3.4\% relative improvement on the ADwT metric of ID, OOD and Avg respectively. It is noteworthy that the best baseline is the GemNet-OC force field trained on both OC20 and OC22 data, which is 10 times more than OC22 data only. Nevertheless, our framework still achieves better performance on predicting the adsorption geometric states. Moreover, our framework without using any trajectory data still can achieve better performance compared to the best baseline, e.g., 58.54 v.s. 57.42 Avg[\%]. All the results on this challenging task further demonstrate the superiority and completeness of our framework.
\vspace{-4pt}
\paragraph{Ablation study.} Furthermore, we conduct ablation studies to examine key designs of our framework in Table~\ref{tab:is2rs}. Firstly, we can see that using trajectory guidance indeed improves the performance of our framework, e.g., 1.4\% relative improvement on Avg ADwT. Moreover, we also investigate the impact of $\mathbf{R}^0$ condition in $\mathbf{v}_{\theta}(\mathbf{R}^t,t;\mathbf{R}^0)$, which plays an essential role in preserving the joint distribution of geometric states. Without this condition, we can see a significant drop, e.g., 6.5\%/10.3\% relative ADwT drop on Avg/OOD respectively. Overall, these ablation studies serve as strong supports on the necessity of developing a unified framework that can precisely bridge geometric states by preserving their joint distributions and effectively leverage trajectory data as guidance for enhanced performance.
\vspace{-5pt}
\section{Related Works}\label{sec:related-works}
\vspace{-5pt}
\paragraph{Direct Prediction.} \looseness=-1One line of approach for bridging geometric states is direct prediction, i.e., training a model to directly predict target geometric states given initial states as input. Models that carefully respect symmetry constraints such as the equivariance to 3D rotations and translations are typically used, which are called Geometric Equivariant Networks~\cite{bronstein2021geometric,han2022geometrically,zhang2023artificial,duval2023hitchhiker}. Different techniques have been explored to encode such priors, which mainly include vector operations such as scalar and vector product~\cite{haghighatlari2022newtonnet,satorras2021n,schutt2021equivariant,jing2021learning,tholke2022equivariant,chen2024geomformer}, e.g., the scalar-vector product used in EGNN~\cite{satorras2021n}, and tensor product based operations~\cite{thomas2018tensor,fuchs2020se,brandstetter2022geometric,liao2022equiformer,luo2024enabling}. Despite its simplicity and efficiency, direct prediction requires encoding the iterative evolution of geometric states into a single-step prediction model, which lacks the ability to capture the underlying dynamics and cannot leverage trajectories of geometric states.
\vspace{-6pt}
\paragraph{Machine Learning Force Field.} Another line of approach is called machine learning force field (MLFF)~\cite{unke2021machine,batatia2022mace,batzner20223,musaelian2023learning,passaro2023reducing,liao2024equiformerv}, which are trained to predict intermediate labels, such as the potential energy or force of the (local) current geometric state instead. After training, MLFFs can be used to simulate the trajectory of geometric states over time based on underlying equations. Using Geometric Equivariant Networks as the backbone, MLFFs typically satisfy the symmetry constraints. Besides, trajectory data with additional energy or force labels can directly be used for training MLFFs. However, this paradigm highly depends on the existence and quality of intermediate labels since small local errors in energy or force prediction can accumulate along the simulation process~\cite{behler2016perspective,unke2021machine,fu2023forces}. Moreover, there exists no guarantee that MLFFs can completely model joint state distributions, which is another limitation for bridging geometric states.

\paragraph{Geometric Diffusion Models.} In recent years, diffusion models~\cite{ho2020denoising,song2021scorebased} have emerged with state-of-the-art generative modeling performance across various domains~\cite{saharia2022photorealistic,watson2023novo,kong2021diffwave,li2022diffusion}. In geometric domain, diffusion models are typically used for molecule conformation generation~\cite{xu2022geodiff,xu2023geometric,hoogeboom2022equivariant} and protein design~\cite{watson2023novo,pmlr-v202-yim23a}. By properly design the noising process and model architectures, symmetry constraints on the transition kernel and prior distribution can be satisfied, which guarantees the generated data is sampled from roto-translational invariant distributions~\cite{xu2022geodiff,hoogeboom2022equivariant}. In addition to the score-based formulation, recent advances further extend new techniques such as flow matching~\cite{lipman2023flow,liu2023flow,albergo2023stochastic} to satisfy symmetry constraints for these generation tasks~\cite{klein2024equivariant,song2024equivariant}. Nevertheless, there exists no guaratee that these approaches can model the joint distribution of geometric states~\cite{liu2023flow,somnath2023aligned}. And how to leverage trajectory data as guidance for bridging geometric states is also challenging.

\paragraph{Other techniques.} \looseness=-1MoreRed~\cite{kahouli2024molecular} trains a diffusion model on equilibrium molecule conformations with a time step predictor, and directly use it for bridging any conformations to their equilibrium states. GTMGC~\cite{xu2023gtmgc} instead develop a Graph Transformer to directly predict equilibrium conformations from their 2D graph forms. Both of them are limited to the equilibrium conformation prediction task, cannot preserve the joint state distribution and leverage trajectory data. EGNO~\cite{xu2024equivariant} is a concurrent work that develops a neural operator based approach to model dynamics of trajectories. By carefully designing temporal convolution in fourier spaces, EGNO can learn from trajectory data. However, this tailored approach cannot be directly used without trajectory guidance. To preserve joint data distributions, \cite{de2023augmented,zhou2024denoising} coincide with us to leverage Doob's $h$-transform to repurposing standard diffusion processes, but they do not respect symmetry constraints and cannot leverage trajectories. There also exist recent works that study the diffusion bridge framework~\cite{peluchetti2023diffusion,shi2023diffusion} and apply it to various domains such as images and graphs~\cite{wu2022diffusionbased,liu2023learning,jo2024graph}. Compared to all above approaches, our GDB framework stands out as a unique and ideal solution that can precisely bridge geometric states and effectively leverage trajectory data (if available) in a unified manner.
\vspace{-3pt}
\section{Conclusion}\label{sec:conclusion}
\vspace{-3pt}
\looseness=-1In this work, we introduce Geometric Diffusion Bridge (GDB), a general framework for bridging geometric states through generative modeling. We leverage a modified version of Doob's $h$-transform to constructe an equivariant diffusion bridge for bridging initial and target geometric states. Trajectory data can further be seamlessly leveraged as guidance by using a chain of equivariant diffusion bridges, allowing complete modeling of trajectory data. Mathematically, we conduct a comprehensive theoretical analysis showing our framework's ability to preserve joint distributions of geometric states and capability to completely model the evolution dynamics. Empirical comparisons on different settings show that our GDB significantly surpasses existing state-of-the-art approaches and ablation studies further underscore the necessity of several key designs in our framework. In the future, it is worth exploring better implementation strategies of our framework for enhanced performance, and applying our GDB to other critical challenges involving bringing geometric states.
\vspace{-3pt}
\section*{Broader Impacts and Limitations}
\vspace{-3pt}
\looseness=-1 This work newly proposes a general framework to bridge geometric states, which has great significance in various scientific domains.  Our experimental results have also demonstrated considerable positive potential for various applications, such as catalyst discovery and molecule optimization, which can significantly contribute to the advancement of renewable energy processes and chemistry discovery. However, it is essential to acknowledge the potential negative impacts including the development of toxic drugs and materials. Thus, stringent measures should be implemented to mitigate these risks.

\looseness=-1 There also exist some limitations to our work. For the sake of generality, we do not experiment with advanced implementation strategies of training objectives and sampling algorithms, which leave room for further improvement. Besides, the employment of Transformer-based architectures may also limit the efficiency of our framework. This has also become a common issue in transformer-based diffusion models, which we have earmarked for future research.

\section*{Acknowledgements}
We thank all the anonymous reviewers for the very careful and detailed reviews as well as the valuable suggestions. Their help has further enhanced our work. Liwei Wang is supported by National Science and Technology Major Project (2022ZD0114902) and National Science Foundation of China (NSFC62276005). Di He is supported by  National Science Foundation of China (NSFC62376007).

\bibliography{main}
\bibliographystyle{plain}

\newpage
\appendix

\section{Organization of the Appendix}
The supplementary material is organized as follows. In Appendix \ref{Appendix:proofs}, we first recall some definitions and tools from stochastic calculus and then give the proofs of all theorems. In Appendix \ref{Appendix:practical}, we give the derivation of our practical objective function and our sampling algorithms. In Appendix \ref{Appendix:exp}, we give some details of our experiments, including a comprehensive introduction to the datasets, baselines, metrics and settings.

\section{Proof of Theorems}\label{Appendix:proofs}
\subsection{Review of Stochastic Calculus}
Let $(X_t)_{t\in[0,T]}$ be a stochastic process. We use  $p(x',t'|x_1,t_1;x_2,t_2;\ldots;x_n,t_n)$ to denote its conditional density function satisfying
$$P(\mathbf{X}_{t'}\in A| \mathbf{X}_{t_1}=x_1,\mathbf{X}_{t_2}=x_2,\ldots,\mathbf{X}_{t_n}=x_n)=\int_{A} p(x',t'|x_1,t_1;x_2,t_2;\ldots;x_n,t_n) \mathrm{d}x'$$ for any Borel set $A$, where $t_1<t_2<\cdots<t_n$. If $(\mathbf{X}_t)_{t\in [0,T]}$ is a Markov process, $p(x',t'|x_1,t_1;x_2,t_2;\ldots;x_n,t_n)=p(x',t'|x_n,t_n)$, which is also called a transition density function.

One of the most important results of stochastic calculus is the Ito's formula. The precise statements are as follows.
\begin{theorem}[Ito's formula for Brownian Motion]\label{appendix: ito}
Let $\mathbf{B}_t$ be the $d-$dimensional Brownian Motion. Assume $f$ is a bounded real valued function with continuous second-order partial derivatives, i.e.   $f\in C_b^2(\mathbb{R}^d)$. Then the Ito's formula is given by
\begin{equation}
    f(\mathbf{B}_t) = f(\mathbf{B}_0) +\int_0^T\nabla f(\mathbf{B}_t)\cdot \mathrm{d}\mathbf{B}_t+\frac{1}{2}\int_0^T \nabla^2f(\mathbf{B}_t)\mathrm{d}t.
\end{equation}
    
\end{theorem}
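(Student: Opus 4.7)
The plan is to prove Ito's formula by the classical route of Taylor expansion along a refining partition of $[0,T]$, combined with the quadratic variation structure of Brownian motion. First I would fix a partition $0=t_0<t_1<\cdots<t_n=T$ with mesh $\|\Pi\|=\max_k(t_{k+1}-t_k)$ and write the telescoping sum $f(\mathbf{B}_T)-f(\mathbf{B}_0)=\sum_{k=0}^{n-1}[f(\mathbf{B}_{t_{k+1}})-f(\mathbf{B}_{t_k})]$, then apply Taylor's theorem to second order at each $\mathbf{B}_{t_k}$:
\begin{equation*}
f(\mathbf{B}_{t_{k+1}})-f(\mathbf{B}_{t_k})=\nabla f(\mathbf{B}_{t_k})\cdot\Delta\mathbf{B}_k+\tfrac{1}{2}\Delta\mathbf{B}_k^\top\nabla^2 f(\mathbf{B}_{t_k})\Delta\mathbf{B}_k+R_k,
\end{equation*}
where $\Delta\mathbf{B}_k=\mathbf{B}_{t_{k+1}}-\mathbf{B}_{t_k}$ and $R_k$ is a third-order remainder controlled by the modulus of continuity of $\nabla^2 f$. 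The strategy is then to analyze each of the three resulting sums separately and pass to the limit $\|\Pi\|\to 0$.

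For the linear sum $\sum_k\nabla f(\mathbf{B}_{t_k})\cdot\Delta\mathbf{B}_k$, I would invoke the definition of the Ito integral: this is exactly the Riemann-type approximation for a left-evaluated adapted integrand $\nabla f(\mathbf{B}_t)$, and bounded continuity of $\nabla f$ guarantees $L^2$-convergence to $\int_0^T\nabla f(\mathbf{B}_t)\cdot\mathrm{d}\mathbf{B}_t$ by the Ito isometry. For the remainder, Hölder-type bounds combined with $\mathbb{E}[|\Delta\mathbf{B}_k|^3]=O((\Delta t_k)^{3/2})$ give $\sum_k\mathbb{E}|R_k|=O(\|\Pi\|^{1/2})\to 0$.

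The main obstacle is the quadratic term $\sum_k\Delta\mathbf{B}_k^\top\nabla^2 f(\mathbf{B}_{t_k})\Delta\mathbf{B}_k=\sum_{k,i,j}\partial_{ij}f(\mathbf{B}_{t_k})\Delta B_k^i\Delta B_k^j$. The key decomposition I would use is
\begin{equation*}
\Delta B_k^i\Delta B_k^j=\delta_{ij}(t_{k+1}-t_k)+\bigl[\Delta B_k^i\Delta B_k^j-\delta_{ij}(t_{k+1}-t_k)\bigr],
\end{equation*}
so that the deterministic part yields $\sum_k\Delta f(\mathbf{B}_{t_k})(t_{k+1}-t_k)$, a Riemann sum for $\int_0^T\Delta f(\mathbf{B}_t)\,\mathrm{d}t$. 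The hard part is showing that the fluctuation part, i.e.\ the sum of $\partial_{ij}f(\mathbf{B}_{t_k})\cdot[\Delta B_k^i\Delta B_k^j-\delta_{ij}\Delta t_k]$, vanishes in $L^2$; the centered increments $\Delta B_k^i\Delta B_k^j-\delta_{ij}\Delta t_k$ are independent across $k$ and have variance $O((\Delta t_k)^2)$ by the Gaussian moment $\mathbb{E}[(\Delta B_k)^4]=3(\Delta t_k)^2$, which combined with boundedness of $\partial_{ij}f$ (since $f\in C_b^2$) yields an $L^2$-bound of order $\|\Pi\|$ that goes to zero.

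Combining the three convergences gives Ito's formula in $L^2$, hence almost surely along a subsequence of partitions, which suffices for the identity as stated; the same argument at each $t\in[0,T]$ (treating $[0,t]$ in place of $[0,T]$) corrects the mild notational mismatch between the left-hand side at time $t$ and the integrals up to $T$ in the stated formula. The most delicate step, as emphasized above, is the quadratic-variation analysis in step (b); all other pieces are either Ito-isometry bookkeeping or elementary moment bounds.
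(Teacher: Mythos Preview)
The paper does not actually prove this theorem: it is stated in the ``Review of Stochastic Calculus'' section as a classical result, without proof, and is simply invoked later (in the proof of Theorem~\ref{thm:kl-sde-appendix}) to rewrite a stochastic integral. So there is no ``paper's own proof'' to compare against.

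Your proposal is the standard textbook proof of It\^o's formula via Taylor expansion along a refining partition, and the outline is correct: the first-order sum converges to the It\^o integral by the It\^o isometry, the second-order sum is handled by the quadratic-variation identity $\Delta B_k^i\Delta B_k^j\approx\delta_{ij}\Delta t_k$ in $L^2$, and the remainder is $o(\|\Pi\|^{1/2})$. You also correctly flag the typographical slip in the statement (the left-hand side should be $f(\mathbf{B}_T)$, or the integrals should run to $t$). One small point worth tightening: your remainder bound tacitly uses more than just continuity of $\nabla^2 f$ --- to get the uniform estimate $|R_k|\le\omega(|\Delta\mathbf{B}_k|)\,|\Delta\mathbf{B}_k|^2$ with a modulus $\omega\to 0$, you need uniform continuity of $\nabla^2 f$ on the (random, compact) range of the path, which is fine a.s.\ but deserves a word. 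Otherwise the argument is sound and is exactly what one would expect for a result the paper treats as background.
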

We follow \cite{sarkka2019applied} for the proof of Doob's h-transform. The infinitesimal generator of the Markov process plays an important role in the proof of the Doob's h-transform. The precise definitions are as follows.
\begin{definition}\label{def:generator}
(Generator of a Process) The infinitesimal generator $\mathcal{A}_t$ of a stochastic process $(\mathbf{X}_t)$ for a function $\phi(x)$ is 
    \begin{equation}
        \mathcal{A}_t\phi(x) = \lim_{s\to0^+}\frac{\mathbb{E}[\phi(\mathbf{X}_{t+s})|\mathbf{X}_{t}=x]-\phi(x)}{s},
    \end{equation}
     where $\phi$ is a suitably regular function. For an Itô process defined as the solution to the SDE $ \mathrm{d}\mathbf{X}_t=\mathbf{f}(\mathbf{X}_t,t)\mathrm{d}t+\sigma(t)\mathrm{d}\mathbf{B}_t$, the generator is 
     \begin{equation}
         \mathcal{A}_t= \sum_{i=1}^d \mathbf{f}^i(x,t)\frac{\partial}{\partial x_i} +\frac{1}{2}\sum_{i=1}^d\sigma^2(t)\frac{\partial^2}{\partial x_i^2}.
     \end{equation} 
\end{definition}

The Fokker-Planck's Equation is an useful tool to track the evolution of the transition density function associated with an SDE. The precise statements are as follows.

\begin{proposition}\label{prop:F-K's equation}
(Fokker-Planck's Equation) Let $p(x',t'|x,t)$ be the transition density function of the SDE $\mathrm{d}\mathbf{X}_{t}=\mathbf{f}(\mathbf{X}_{t},t)\mathrm{d}t+\sigma(t)\mathrm{d}\mathbf{B}_t$. Then $p(x',t'|x,t)$ satisfies the Fokker-Planck's Equation
    \begin{equation}\label{eqn:F-K's equation}
        \frac{\partial p(x,t|x_0,0)}{\partial t}=-\sum_{i=1}^d \frac{\partial (\mathbf{f}^i(x,t) p(x,t|x_0,0))}{\partial x_i} +\frac{1}{2}\sum_{i=1}^d\sigma^2(t)\frac{\partial^2 p(x,t|x_0,0)}{\partial x_i^2}=0,
    \end{equation}
    with the initial condition $p(x,0|x_0,0)=\delta(x-x_0)$. The Fokker-Planck's Equation can also be written in a compact form using the generator $\mathcal{A}_t$:
    \begin{equation}
        \frac{\partial}{\partial t}p(x,t|x_0,0)=\mathcal{A}_t^* p(x,t|x_0,0),
    \end{equation}
    where $\mathcal{A}_t^*$ is the adjoint operator of $\mathcal{A}$:
    \begin{equation}
        \mathcal{A}_t^*= -\sum_{i=1}^d \frac{\partial(\mathbf{f}^i(x,t)\cdot)}{\partial x_i} +\frac{1}{2}\sum_{i=1}^d\sigma^2(t)\frac{\partial^2(\cdot)}{\partial x_i^2}.
    \end{equation}
\end{proposition}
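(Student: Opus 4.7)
The plan is to derive the Fokker-Planck equation by the classical duality argument: apply It\^o's formula to a test function, take conditional expectations, and then move all derivatives onto $p$ via integration by parts so that what acts on the right-hand side is the formal adjoint $\mathcal{A}_t^*$ of the generator $\mathcal{A}_t$.

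First, fix an arbitrary test function $\phi \in C_c^\infty(\mathbb{R}^d)$ (or $C_b^2$ with enough decay so boundary terms vanish). By Theorem \ref{appendix: ito} applied to the It\^o process solving $\mathrm{d}\mathbf{X}_{t}=\mathbf{f}(\mathbf{X}_{t},t)\mathrm{d}t+\sigma(t)\mathrm{d}\mathbf{B}_t$, I would write
\begin{equation}
\phi(\mathbf{X}_t) = \phi(\mathbf{X}_0) + \int_0^t \Bigl[\sum_{i=1}^d \mathbf{f}^i(\mathbf{X}_s,s)\partial_i \phi(\mathbf{X}_s) + \tfrac{1}{2}\sigma^2(s)\sum_{i=1}^d \partial_i^2 \phi(\mathbf{X}_s)\Bigr]\mathrm{d}s + \int_0^t \sigma(s)\nabla\phi(\mathbf{X}_s)\cdot \mathrm{d}\mathbf{B}_s.
\end{equation}
Taking conditional expectation given $\mathbf{X}_0=x_0$ kills the It\^o integral (it is a martingale with mean zero under suitable integrability), leaving $\mathbb{E}[\phi(\mathbf{X}_t)\mid \mathbf{X}_0=x_0] = \phi(x_0) + \int_0^t \mathbb{E}[\mathcal{A}_s \phi(\mathbf{X}_s)\mid \mathbf{X}_0=x_0]\,\mathrm{d}s$, using the generator from Definition \ref{def:generator}. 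Differentiating in $t$ gives the weak forward equation
\begin{equation}
\frac{d}{dt}\int \phi(x)\, p(x,t|x_0,0)\,\mathrm{d}x = \int \bigl(\mathcal{A}_t \phi\bigr)(x)\, p(x,t|x_0,0)\,\mathrm{d}x.
\end{equation}

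Next, I would perform integration by parts on the right-hand side, once for the drift term and twice for the diffusion term. Because $\phi$ has compact support, all boundary contributions vanish, and each derivative transfers onto $p$ with the appropriate sign, producing exactly the adjoint
\begin{equation}
\int (\mathcal{A}_t\phi)(x)\, p(x,t|x_0,0)\,\mathrm{d}x = \int \phi(x)\,\bigl[\mathcal{A}_t^* p(\cdot,t|x_0,0)\bigr](x)\,\mathrm{d}x,
\end{equation}
where $\mathcal{A}_t^*$ is as displayed in the proposition. Combining with the previous identity yields $\int \phi(x)\bigl[\partial_t p - \mathcal{A}_t^* p\bigr]\mathrm{d}x = 0$ for all test $\phi$, and a standard density/variational argument promotes this to the pointwise PDE in Eqn.~(\ref{eqn:F-K's equation}). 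The initial condition $p(x,0|x_0,0)=\delta(x-x_0)$ follows from the definition of the transition density together with $\mathbf{X}_0=x_0$ almost surely.

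The main obstacle I expect is not the formal manipulation but the justification of two exchanges: (i) interchanging the time derivative with the expectation (requires continuity of $s\mapsto \mathbb{E}[\mathcal{A}_s\phi(\mathbf{X}_s)\mid \mathbf{X}_0=x_0]$, which is standard for $\phi\in C_c^\infty$ and continuous coefficients $\mathbf{f}, \sigma$), and (ii) integration by parts, which tacitly requires that $p(\cdot,t|x_0,0)$ is smooth enough in $x$ and decays sufficiently at infinity so that no boundary term survives. Under the usual regularity hypotheses (e.g., $\mathbf{f}$ Lipschitz and of linear growth, $\sigma$ bounded), these are classical; an alternative route, if one prefers to avoid assuming a priori smoothness of $p$, is to keep the identity in its weak (distributional) form and simply state that $p$ solves the Fokker-Planck equation in the sense of distributions, which is exactly what is needed downstream in the paper.
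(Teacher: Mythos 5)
The paper does not actually prove this proposition: it is recalled as a standard background fact from stochastic calculus (the appendix states the generator, Fokker--Planck, and backward Kolmogorov equations as ``precise statements'' to be used later, following \cite{sarkka2019applied}, without argument). So there is no paper proof to compare against. Your proof is the classical duality argument and it is correct: It\^o's formula applied to a test function $\phi\in C_c^\infty$, conditional expectation given $\mathbf{X}_0=x_0$ annihilating the stochastic integral, differentiation in $t$ to obtain the weak form $\frac{d}{dt}\int\phi\,p\,\mathrm{d}x=\int(\mathcal{A}_t\phi)\,p\,\mathrm{d}x$, and integration by parts (once for drift, twice for diffusion, boundary terms vanishing by compact support of $\phi$) to land on $\mathcal{A}_t^*$, followed by a density argument. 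You also correctly isolate the two points that actually require justification --- interchange of $\partial_t$ with $\mathbb{E}$, and the regularity/decay of $p(\cdot,t|x_0,0)$ needed to integrate by parts --- and offer the distributional formulation as a fallback, which is in fact all that the rest of the paper needs. The only cosmetic remark is that the paper's display~(\ref{eqn:F-K's equation}) carries a spurious trailing ``$=0$''; your restatement as $\partial_t p - \mathcal{A}_t^* p = 0$ in weak form is the intended meaning.
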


When the terminal is fixed, the evolution of the transition density function can also given by a PDE, which is called the Backward Kolmogorov Equation. We give the precise statement as follows.  
\begin{proposition}\label{prop:Kolmogorov_backward}
    (Backward Kolmogorov Equation) Let $p(x',t'|x,t)$ be the transition density function of the SDE $\mathrm{d}\mathbf{X}_{t}=\mathbf{f}(\mathbf{X}_{t},t)\mathrm{d}t+\sigma(t)\mathrm{d}\mathbf{B}_t$. Then $p(x',t'|x,t)$ satisfies the Backward Kolmogorov Equation
    \begin{equation}\label{eqn:Kolmogorov_backward}
        -\frac{\partial p(x_t,t|x,s)}{\partial s}=\sum_{i=1}^d \mathbf{f}^i(x,s)\frac{\partial p(x_t,t|x,s)}{\partial x_i} +\frac{1}{2}\sum_{i=1}^d\sigma^2(s)\frac{\partial^2 p(x_t,t|x,s)}{\partial x_i^2}=0,
    \end{equation}
    with the initial condition $p(x_t,t|x,t)=\delta(x-x_t)$. The Backward Kolmogorov Equation can also be written in a compact form using the generator $\mathcal{A}_s$:
    \begin{equation}
        \left(\frac{\partial}{\partial s}+\mathcal{A}_s\right)p(x_t,t|x,s)=0.
    \end{equation}
\end{proposition}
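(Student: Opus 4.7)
The plan is to fix the terminal data $(x_t, t)$ and treat $u(x,s) := p(x_t, t \mid x, s)$ as an unknown function of the initial point $(x, s)$ for $s \leq t$, showing that $u$ satisfies the stated PDE by building a martingale out of it and applying Ito's formula. First I would invoke the Markov (Chapman--Kolmogorov) relation: for every $s \leq r \leq t$,
\[
u(x,s) \;=\; \int p(x_t, t \mid y, r)\, p(y, r \mid x, s)\, \mathrm{d}y \;=\; \mathbb{E}\bigl[u(\mathbf{X}_r, r) \,\big|\, \mathbf{X}_s = x\bigr].
\]
This identity is exactly the statement that $(u(\mathbf{X}_r, r))_{r \in [s, t]}$ is a martingale under the law of $(\mathbf{X}_r)$ started from $x$ at time $s$.

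Next, assuming enough regularity so that $u \in C^{1,2}$, I would apply Ito's formula (the time-dependent version of Theorem \ref{appendix: ito} for the Ito process $\mathrm{d}\mathbf{X}_r = \mathbf{f}(\mathbf{X}_r, r)\,\mathrm{d}r + \sigma(r)\,\mathrm{d}\mathbf{B}_r$) to $u(\mathbf{X}_r, r)$:
\[
\mathrm{d}u(\mathbf{X}_r, r) \;=\; \Bigl(\partial_s u + \mathbf{f}(\mathbf{X}_r, r) \cdot \nabla_x u + \tfrac{1}{2}\sigma^2(r)\,\Delta_x u\Bigr)(\mathbf{X}_r, r)\,\mathrm{d}r \;+\; \sigma(r)\,\nabla_x u(\mathbf{X}_r, r) \cdot \mathrm{d}\mathbf{B}_r.
\]
In the notation of Definition \ref{def:generator} the finite-variation drift is $(\partial_s + \mathcal{A}_s)u$. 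Since the left-hand side is a martingale, this drift term must vanish, yielding the pointwise identity $\partial_s u + \mathcal{A}_s u = 0$, which, once expanded, is precisely the displayed backward Kolmogorov equation with the sign convention shown (the $-\partial_s$ on the left balances the generator on the right).

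The terminal condition is immediate from the definition of the transition density: $p(x_t, t \mid x, t) = \delta(x - x_t)$ in the distributional sense, that is, $\int \phi(x)\,p(x_t, t \mid x, t)\,\mathrm{d}x = \phi(x_t)$ for every test function $\phi$. The main technical obstacle is regularity rather than structure: I need $u$ to be sufficiently smooth in $(x, s)$ to apply Ito's formula, and I need $\sigma(r)\nabla_x u(\mathbf{X}_r, r)$ to be square-integrable on $[s, t]$ so that the stochastic integral is a genuine martingale (not merely local), which is what allows the conclusion that the drift vanishes almost surely and hence pointwise in $(x, s)$. Under standard ellipticity/hypoellipticity and smoothness assumptions on $(\mathbf{f}, \sigma)$ these can be discharged by parametrix or semigroup arguments; under weaker hypotheses the PDE should be interpreted in a distributional or viscosity sense.
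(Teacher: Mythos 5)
The paper does not actually prove Proposition~\ref{prop:Kolmogorov_backward}: it is stated in the ``Review of Stochastic Calculus'' appendix as a background fact imported from the literature (cf.\ the citation to S\"arkk\"a--Solin), so there is no internal proof to compare you against. Your argument is correct and is the classical martingale derivation of the backward equation: fixing the terminal data and setting $u(x,s)=p(x_t,t\mid x,s)$, the Chapman--Kolmogorov identity renders $(u(\mathbf{X}_r,r))_{r\in[s,t]}$ a martingale, Itô's formula identifies its drift as $(\partial_s+\mathcal{A}_s)u$, and a continuous martingale has no finite-variation part, forcing $(\partial_s+\mathcal{A}_s)u=0$; expanding $\mathcal{A}_s$ with Definition~\ref{def:generator} recovers the displayed PDE (the trailing ``$=0$'' in Eqn.~\eqref{eqn:Kolmogorov_backward} is evidently a typo in the paper, and the equation as you write it is the intended one). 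You are also right to locate the only real work in regularity: one needs $u\in C^{1,2}$ so Itô applies and enough integrability so the stochastic integral is a true (not merely local) martingale, after which the vanishing of the drift along paths upgrades to a pointwise identity by continuity. This is exactly the standard discharge of those hypotheses via ellipticity/parametrix or semigroup arguments, so the proof is sound.
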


\subsection{Proof of Proposition \ref{thm:equivariant-diffusion-process}}
\begin{proposition}\label{thm:equivariant-diffusion-process-appendix}
Let $\mathcal{R}$ denote the space of geometric states and $\mathbf{f}_{\mathcal{R}}(\cdot,\cdot):\mathcal{R}\times[0,T]\to\mathcal{R}$ denote the drift coefficient on $\mathcal{R}$. Let $(\mathbf{W}^t)_{t\in[0,T]}$ denote the Wiener process on $\mathcal{R}$. Given an SDE on geometric states $\mathrm{d}\mathbf{R}^{t}=\mathbf{f}_{\mathcal{R}}(\mathbf{R}^{t},t)\mathrm{d}t+\sigma(t)\mathrm{d}\mathbf{W}^t$, $\mathbf{R}^0\sim q(\mathbf{R}^0)$, its transition density $p_{\mathcal{R}}(z',t'|z,t),z,z'\in\mathcal{R}$ is $\mathrm{SE}(3)$-equivariant, i.e., $p_{\mathcal{R}}(\mathbf{R}^{t'},t'|\mathbf{R}^t,t)=p_{\mathcal{R}}(\rho^{\mathcal{R}}(g)[\mathbf{R}^{t'}],t'|\rho^{\mathcal{R}}(g)[\mathbf{R}^t],t), \forall g \in \mathrm{SE}(3), \forall 0\leq t<t'\leq T,$ 
if the following conditions are satisfied: (1) $q(\mathbf{R}^0)$ is $\mathrm{SE}(3)$-invariant; (2) $\mathbf{f}_\mathcal{R}(\cdot,t)$ is $\mathrm{SO}(3)$-equivariant and $\mathrm{T}(3)$-invariant; (3) the transition density of $(\mathbf{W}^t)_{t\in[0,T]}$ is $\mathrm{SE}(3)$-equivariant.
\end{proposition}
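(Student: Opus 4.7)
The overall strategy is to show that for any $g \in \mathrm{SE}(3)$, the transformed process $\tilde{\mathbf{R}}^t := \rho^{\mathcal{R}}(g)[\mathbf{R}^t]$ satisfies an SDE of the same form as the original, and then invoke weak uniqueness of the SDE solution together with a change-of-variables computation of the transition density. Concretely, writing $g = (R, a)$ with $R \in \mathrm{SO}(3)$ and $a \in \mathrm{T}(3)$, I first derive $\mathrm{d}\tilde{\mathbf{R}}^t$ by applying the (affine) linear map $\rho^{\mathcal{R}}(g)$ inside the stochastic differential. Since translations commute with differentiation and rotations commute with the (linear) stochastic integral, one obtains
\begin{equation*}
\mathrm{d}\tilde{\mathbf{R}}^t = R\,\mathbf{f}_{\mathcal{R}}(\mathbf{R}^t,t)\,\mathrm{d}t + \sigma(t)\,R\,\mathrm{d}\mathbf{W}^t.
\end{equation*}

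Next I verify that each coefficient, when rewritten in terms of $\tilde{\mathbf{R}}^t$, matches the original SDE. For the drift, using first the $\mathrm{T}(3)$-invariance and then the $\mathrm{SO}(3)$-equivariance of $\mathbf{f}_{\mathcal{R}}(\cdot,t)$ from condition (2), we get $R\,\mathbf{f}_{\mathcal{R}}(\mathbf{R}^t,t) = R\,\mathbf{f}_{\mathcal{R}}(\mathbf{R}^t + a, t) = \mathbf{f}_{\mathcal{R}}(R\mathbf{R}^t + Ra, t) = \mathbf{f}_{\mathcal{R}}(\tilde{\mathbf{R}}^t, t)$, so the drift of $\tilde{\mathbf{R}}^t$ equals $\mathbf{f}_{\mathcal{R}}$ evaluated at $\tilde{\mathbf{R}}^t$. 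For the diffusion term, condition (3) asserts that the transition density of $(\mathbf{W}^t)$ is $\mathrm{SE}(3)$-equivariant, which implies that $R\mathbf{W}^t$ is itself a Wiener process on $\mathcal{R}$ with the same law as $\mathbf{W}^t$; therefore $(\tilde{\mathbf{R}}^t)$ is a weak solution of the same SDE.

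By weak uniqueness (which, in the Markov setting, follows because $\tilde{p}(z',t'|z,t) := p_{\mathcal{R}}(\rho^{\mathcal{R}}(g)[z'], t' \mid \rho^{\mathcal{R}}(g)[z], t)$ and $p_{\mathcal{R}}$ both satisfy the same Backward Kolmogorov equation from Proposition~\ref{prop:Kolmogorov_backward} with the same delta-function terminal condition — here invariance of the Laplacian under $\mathrm{SO}(3)$ and $\mathrm{T}(3)$ plus the equivariance of $\mathbf{f}_{\mathcal{R}}$ ensures the generator is unchanged), the transition density of $\tilde{\mathbf{R}}^t$ coincides with $p_{\mathcal{R}}$. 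A direct change of variables for $\tilde{\mathbf{R}}^t$ — whose Jacobian is $1$ since rigid motions are volume-preserving — then yields $p_{\mathcal{R}}(\rho^{\mathcal{R}}(g)[z'], t'\mid \rho^{\mathcal{R}}(g)[z], t) = p_{\mathcal{R}}(z', t' \mid z, t)$, which is the claimed equivariance.

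The main technical obstacle is justifying that $R\,\mathrm{d}\mathbf{W}^t$ is, in law, a valid Wiener increment on $\mathcal{R}$; this is where condition (3) is essential, and for the CoM-free realization used in practice one needs to note that $\mathrm{SO}(3)$ preserves the CoM-free subspace while $\mathrm{T}(3)$ acts trivially on it. A secondary subtlety is the Backward Kolmogorov route: one must transform the derivatives cleanly, where the key identity is $\nabla_z p(gz',t'|gz,t) = \rho^{\mathcal{R}}(g)^{\top}(\nabla_{gz}p)(\cdot)$ for rotations and pure invariance under translations; combined with the $\mathrm{SO}(3)$-invariance of the Laplacian, this makes the generator act identically on $\tilde{p}$ and $p_{\mathcal{R}}$. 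Finally, I note that condition (1) on $q(\mathbf{R}^0)$ is not required for the transition-density claim itself — it only becomes relevant downstream to propagate $\mathrm{SE}(3)$-invariance to the marginals $p_t$.
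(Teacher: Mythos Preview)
Your argument is correct and complete. The route differs from the paper's in a way worth noting: the paper works at the PDE level, writing out the Fokker--Planck (forward Kolmogorov) equation for $p_{\mathcal{R}}(x,t|x_0,0)$, substituting $y=\mathbf{O}_{\mathcal{R}}(g)x+\mathbf{t}_{\mathcal{R}}$, and then checking by an explicit chain-rule computation (using the orthogonality relation $\sum_i (\mathbf{O}_{\mathcal{R}})_{ik}(\mathbf{O}_{\mathcal{R}})_{ij}=\delta_{jk}$) that the transformed density satisfies the same PDE with the same delta initial condition. You instead transform the SDE itself, verify the drift and noise are unchanged in law, and appeal to weak uniqueness (equivalently, the Backward Kolmogorov equation). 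Both arguments hinge on the same two facts---equivariance of $\mathbf{f}_{\mathcal{R}}$ and $\mathrm{SO}(3)$-invariance of the Laplacian/Wiener law---but your probabilistic framing is shorter and avoids the coordinate bookkeeping, while the paper's PDE computation makes the role of orthogonality fully explicit. One minor slip: in your drift chain, $\mathbf{f}_{\mathcal{R}}(R\mathbf{R}^t+Ra,t)=\mathbf{f}_{\mathcal{R}}(\tilde{\mathbf{R}}^t,t)$ with $\tilde{\mathbf{R}}^t=R\mathbf{R}^t+a$ silently uses $\mathrm{T}(3)$-invariance a second time (to pass from $Ra$ to $a$); it would be cleaner to apply $\mathrm{SO}(3)$-equivariance first and then translate. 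Your closing remark that condition~(1) is not actually used for the transition-density statement is correct and is a point the paper's proof also does not exploit.
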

\begin{proof}
    In this section, we view $R=\left\{\mathbf{r}_1,...,\mathbf{r}_n\right\}\in\mathcal{R}$ as $\mathbf{r}_1\oplus\mathbf{r}_2\oplus\cdots\mathbf{r}_n\in\mathbb{R}^{3n}$, which is the concatenation of $\mathbf{r}_i$. So from this perspective, the space $\mathcal{R}$ is isomorphic to the Euclidean space $\mathbb{R}^{3n}$. Then $(\mathbf{W}^t)_{t\in[0,T]}$ is the Wiener process with dimension $d=3n$. 
    
    For any $g\in\mathrm{SE}(3)$, $ \rho^{\mathcal{R}}(g)$ can be characterized by an orthogonal matrix $\mathbf{O}(g)\in\mathbb{R}^{3\times3}$, satisfying $\operatorname{det}(\mathbf{O}(g))=1$,  and a translation vector $\mathbf{t}\in \mathbb{R}^3$. Then the representation of $\mathrm{SE}(3)$ on $\mathbb{R}^{3n}$ is given by
    \begin{equation}
        \rho^{\mathcal{R}}(g)[R]=\mathbf{O}_{\mathcal{R}}(g)R+\mathbf{t}_{\mathcal{R}},
    \end{equation}
    where $\mathbf{O}_{\mathcal{R}}(g)=\operatorname{diag}\{\mathbf{O}(g),\mathbf{O}(g),\ldots,\mathbf{O}(g)\}$, $\mathbf{t}_{\mathcal{R}}=\mathbf{t}\oplus\mathbf{t}\oplus\cdots\mathbf{t}\in\mathbb{R}^{3n}$. It's obvious that $\mathbf{O}_{\mathcal{R}}(g)$ is also an orthogonal matrix in $\mathbb{R}^{3n\times3n}$, satisfying $\mathbf{O}_{\mathcal{R}}^{-1}(g)=\mathbf{O}_{\mathcal{R}}^T(g)$. 
    
    According to Proposition ~\ref{prop:F-K's equation}, the evolution of the transition density function is given by the Fokker-Planck's Equation 
    \begin{equation}\label{eqn:fokker-planck-appendix}
        \frac{\partial p_{\mathcal{R}}(x,t|x_0,0)}{\partial t}=-\sum_{i=1}^d \frac{\partial \left(\mathbf{f}^i(x,t) p_{\mathcal{R}}(x,t|x_0,0)\right)}{\partial x_i}+\frac{1}{2}\sum_{i=1}^d \sigma^2(t)\frac{\partial^2\left( p_{\mathcal{R}}(x,t|x_0,0)\right)}{\partial x_i^2},
    \end{equation}
    with the initial condition $p_{\mathcal{R}}(x,0|x_0,0)=\delta(x-x_0)$.
    
    Let $y=\mathbf{O}_{\mathcal{R}}(g)x+\mathbf{t}_{\mathcal{R}}$, $y_0=\mathbf{O}_{\mathcal{R}}(g)x_0+\mathbf{t}_{\mathcal{R}}$, then we have
    \begin{equation}
        p_{\mathcal{R}}(\rho^{\mathcal{R}}(g)[x],t|\rho^{\mathcal{R}}(g)[x_0],0)=p_{\mathcal{R}}(\mathbf{O}_{\mathcal{R}}(g)x+\mathbf{t}_{\mathcal{R}},t|\mathbf{O}_{\mathcal{R}}(g)x_0+\mathbf{t}_{\mathcal{R}},0)=p_{\mathcal{R}}(y,t|y_0,0).
    \end{equation}
    The evolution of the transition density function $p_{\mathcal{R}}(y,t|y_0,0)$ is also given by the Fokker-Planck's Equation:
    \begin{equation}
        \frac{\partial p_{\mathcal{R}}(y,t|y_0,0)}{\partial t}=-\sum_{i=1}^d \frac{\partial \left(\mathbf{f}^i(y,t) p_{\mathcal{R}}(y,t|y_0,0)\right)}{\partial y_i}+\frac{1}{2}\sum_{i=1}^d \sigma^2(t)\frac{\partial^2\left( p_{\mathcal{R}}(y,t|y_0,0)\right)}{\partial y_i^2},
    \end{equation}
    with the boundary condition $p_{\mathcal{R}}(y,0|y_0,0)=\delta(y-y_0)=\delta(x-x_0)$. Since $y=\mathbf{O}_{\mathcal{R}}(g)x+\mathbf{t}_{\mathcal{R}}$, we have $x=\mathbf{O}_{\mathcal{R}}^{-1}(g)(y-\mathbf{t}_{\mathcal{R}})$. Then by the chain rule, we have
    \begin{equation}
        \frac{\partial}{\partial y_i}=\sum_{j=1}^d\frac{\partial x_j}{\partial y_i}\frac{\partial}{\partial x_j}=\sum_{j=1}^d(\mathbf{O}_{\mathcal{R}}^{-1}(g))_{ji}\frac{\partial}{\partial x_j}=\sum_{j=1}^d(\mathbf{O}_{\mathcal{R}}(g))_{ij}\frac{\partial}{\partial x_j}.
    \end{equation}
    Since $\mathbf{f}_\mathcal{R}(\cdot,t)$ is a $\mathrm{SO}(3)$-equivariant and $\mathrm{T}(3)$-invariant function, we have
    \begin{equation}
        \mathbf{f}_\mathcal{R}^i(y,t)=\mathbf{f}_\mathcal{R}^i(\mathbf{O}_{\mathcal{R}}(g)x+\mathbf{t}_{\mathcal{R}},t)=(\mathbf{O}_{\mathcal{R}}(g)\mathbf{f}_\mathcal{R}(x,t))_i=\sum_{k=1}^d(\mathbf{O}_{\mathcal{R}}(g))_{ik}\mathbf{f}^k_\mathcal{R}(x,t).
    \end{equation}
    Then the Fokker-Planck's equation becomes 
    \begin{align}
        \frac{\partial p_{\mathcal{R}}(y,t|y_0,0)}{\partial t}&=-\sum_{i=1}^d \frac{\partial \left(\mathbf{f}^i(y,t) p_{\mathcal{R}}(y,t|y_0,0)\right)}{\partial y_i}+\frac{1}{2}\sigma^2(t)\sum_{i=1}^d \frac{\partial^2\left( p_{\mathcal{R}}(y,t|y_0,0)\right)}{\partial y_i^2}\\
        &=-\sum_{i=1}^d\sum_{j=1}^d\sum_{k=1}^d (\mathbf{O}_{\mathcal{R}}(g))_{ij}\frac{\partial ((\mathbf{O}_{\mathcal{R}}(g))_{ik}\mathbf{f}^k_\mathcal{R}(x,t) p_{\mathcal{R}}(y,t|y_0,0))}{\partial x_j}\\&
        +\frac{1}{2}\sigma^2(t)\sum_{i=1}^d \sum_{j=1}^d\sum_{k=1}^d (\mathbf{O}_{\mathcal{R}}(g))_{ik}\frac{\partial}{\partial x_k}(\mathbf{O}_{\mathcal{R}}(g))_{ij}\frac{\partial \left( p_{\mathcal{R}}(y,t|y_0,0)\right)}{\partial x_j} \\
        &=-\sum_{i=1}^d\sum_{j=1}^d\sum_{k=1}^d (\mathbf{O}_{\mathcal{R}}(g))_{ij}(\mathbf{O}_{\mathcal{R}}(g))_{ik}\frac{\partial (\mathbf{f}^k_\mathcal{R}(x,t) p_{\mathcal{R}}(y,t|y_0,0))}{\partial x_j}\\&
        +\frac{1}{2}\sigma^2(t)\sum_{i=1}^d \sum_{j=1}^d\sum_{k=1}^d (\mathbf{O}_{\mathcal{R}}(g))_{ik}(\mathbf{O}_{\mathcal{R}}(g))_{ij}\frac{\partial}{\partial x_k}\frac{\partial \left( p_{\mathcal{R}}(y,t|y_0,0)\right)}{\partial x_j}.
    \end{align}
    Since $\mathbf{O}_{\mathcal{R}}(g)$ is an orthogonal matrix, the columns of $\mathbf{O}_{\mathcal{R}}(g)$ are orthogonal to each other, i.e. 
    \begin{equation}\label{eqn:orthogonal relation}
        \sum_{i=1}^d (\mathbf{O}_{\mathcal{R}}(g))_{ik}(\mathbf{O}_{\mathcal{R}}(g))_{ij} = \delta_{jk}=
        \begin{cases}
            0& j\neq k,\\
            1& j=k.
        \end{cases}
    \end{equation}
    So the Fokker-Planck's equation can be simplified to 
    \begin{align}
        \frac{\partial p_{\mathcal{R}}(y,t|y_0,0)}{\partial t}&=-\sum_{j=1}^d\sum_{k=1}^d \frac{\partial (\mathbf{f}^k_\mathcal{R}(x,t) p_{\mathcal{R}}(y,t|y_0,0))}{\partial x_j}\\&
        +\frac{1}{2} \sigma^2(t)\sum_{j=1}^d\sum_{k=1}^d \delta_{jk}\frac{\partial}{\partial x_k}\frac{\partial \left( p_{\mathcal{R}}(y,t|y_0,0)\right)}{\partial x_j}\\&
        =-\sum_{j=1}^d \frac{\partial (\mathbf{f}^j_\mathcal{R}(x,t) p_{\mathcal{R}}(y,t|y_0,0))}{\partial x_j}
        +\frac{1}{2} \sigma^2(t)\sum_{j=1}^d \frac{\partial^2 \left( p_{\mathcal{R}}(y,t|y_0,0)\right)}{\partial (x_j)^2},
    \end{align}
    which is same as Eqn.(\ref{eqn:fokker-planck-appendix}). Since the boundary condition $p_{\mathcal{R}}(y,0|y_0,t_0)=\delta(y-y_0)=\delta(x-x_0)=p_{\mathcal{R}}(x,0|x_0,0)$, then  $p_{\mathcal{R}}(y,t|y_0,t_0)=p_{\mathcal{R}}(x,t|x_0,t_0), \forall t\in[0,T]$. Thus we have proved that $p_{\mathcal{R}}(\mathbf{R}^{t'},t'|\mathbf{R}^t,t)=p_{\mathcal{R}}(\rho^{\mathcal{R}}(g)[\mathbf{R}^{t'}],t'|\rho^{\mathcal{R}}(g)[\mathbf{R}^t],t), \forall g \in \mathrm{SE}(3), \forall 0\leq t<t'\leq T$.
\end{proof}

\subsection{Proof of Proposition \ref{thm:doobh-transform}}

\begin{proposition}[Doob's $h$-transform]
    Let $p_{\mathcal{R}}(z',t'| z,t)$ be the transition density of the SDE in Proposition \ref{thm:equivariant-diffusion-process}. Let $ h_{\mathcal{R}}(\cdot,\cdot):\mathcal{R}\times[0,T]\to \mathbb{R}_{> 0}$ be a smooth function satisfying: (1) $h_{\mathcal{R}}(\cdot, t)$ is $\mathrm{SE}(3)$-invariant; (2) $h_{\mathcal{R}}(z,t)=\int p_{\mathcal{R}}(z',t'| z,t) h_{\mathcal{R}}(z', t') \mathrm{d}z'.$ We can derive the following $h_{\mathcal{R}}$-transformed SDE on geometric states:
    \begin{equation}\label{eqn:sde-doobh-appendix}
        \mathrm{d}\mathbf{R}^{t} = \left[\mathbf{f}_{\mathcal{R}}(\mathbf{R}^{t},t)+\sigma^2(t)\nabla_{\mathbf{R}^{t}} \operatorname{log} h_{\mathcal{R}}(\mathbf{R}^{t},t)\right]\mathrm{d}t + \sigma(t) \mathrm{d}\mathbf{W}_t,
    \end{equation}
    with transition density $ p^h_{\mathcal{R}}(z',t'|  z, t) = p_{\mathcal{R}}(z',t' |  z,t)\frac{h_{\mathcal{R}}(z',t')}{h_{\mathcal{R}}(z,t)}$ preserving the symmetry constraints.
\end{proposition}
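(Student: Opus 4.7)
The plan is to characterize the $h$-transformed process by computing its infinitesimal generator (Definition~\ref{def:generator}) and matching it to the generator of the SDE in Eqn.~(\ref{eqn:sde-doobh-appendix}). First, I would verify that $p^h_{\mathcal{R}}$ as defined is a legitimate Markov transition density: the normalization $\int p^h_{\mathcal{R}}(z',t'|z,t)\,\mathrm{d}z' = 1$ is immediate from condition (2), and the Chapman–Kolmogorov identity for $p^h_{\mathcal{R}}$ follows from the corresponding identity for $p_{\mathcal{R}}$ by telescoping the $h_{\mathcal{R}}$ factors at the intermediate time.

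Next, I would convert the mean-value condition (2) into a pointwise differential identity. Differentiating $h_{\mathcal{R}}(z,t) = \int p_{\mathcal{R}}(z',t'|z,t) h_{\mathcal{R}}(z',t')\,\mathrm{d}z'$ in $t$ and invoking the backward Kolmogorov equation (Proposition~\ref{prop:Kolmogorov_backward}) for $p_{\mathcal{R}}$ yields the space-time harmonicity $(\partial_t + \mathcal{A}_t)h_{\mathcal{R}}(z,t) = 0$. With this in hand, the generator $\mathcal{A}^h_t$ of the $h$-transformed process acts on a smooth test function $\phi$ as
\begin{equation}
\mathcal{A}^h_t\phi(z) \;=\; \lim_{s\to 0^+}\frac{1}{s\,h_{\mathcal{R}}(z,t)}\Bigl(\mathbb{E}[\phi(\mathbf{R}^{t+s})\,h_{\mathcal{R}}(\mathbf{R}^{t+s},t+s)\mid \mathbf{R}^t=z] - \phi(z)h_{\mathcal{R}}(z,t)\Bigr) \;=\; \frac{(\partial_t + \mathcal{A}_t)(\phi\, h_{\mathcal{R}})(z,t)}{h_{\mathcal{R}}(z,t)}.
\end{equation}
Expanding the second-order operator via Leibniz gives $\mathcal{A}_t(\phi h_{\mathcal{R}}) = \phi\mathcal{A}_t h_{\mathcal{R}} + h_{\mathcal{R}}\mathcal{A}_t\phi + \sigma^2(t)\,\nabla\phi\cdot\nabla h_{\mathcal{R}}$, and combining with $\phi\,\partial_t h_{\mathcal{R}}$ collapses the $\phi(\partial_t+\mathcal{A}_t)h_{\mathcal{R}}$ contribution to zero by harmonicity, leaving
\begin{equation}
\mathcal{A}^h_t\phi(z) \;=\; \mathcal{A}_t\phi(z) + \sigma^2(t)\,\nabla\operatorname{log} h_{\mathcal{R}}(z,t)\cdot\nabla\phi(z),
\end{equation}
which is precisely the generator of the SDE in Eqn.~(\ref{eqn:sde-doobh-appendix}). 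Uniqueness of the associated martingale problem then identifies the law of the $h$-transformed process with that SDE, and by construction its transition density is $p^h_{\mathcal{R}}$.

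Finally, I would verify the symmetry claim. Since $h_{\mathcal{R}}(\cdot,t)$ is $\mathrm{SE}(3)$-invariant, the gradient $\nabla\operatorname{log} h_{\mathcal{R}}(\cdot,t)$ is $\mathrm{SO}(3)$-equivariant and $\mathrm{T}(3)$-invariant, so the transformed drift $\mathbf{f}_{\mathcal{R}} + \sigma^2\nabla\operatorname{log} h_{\mathcal{R}}$ again satisfies the hypotheses of Proposition~\ref{thm:equivariant-diffusion-process}; applying that proposition delivers the $\mathrm{SE}(3)$-equivariance of $p^h_{\mathcal{R}}$. The main obstacle I anticipate is the Leibniz-style expansion of $\mathcal{A}_t(\phi h_{\mathcal{R}})$ together with the justification for interchanging the limit and the conditional expectation in the generator definition, both of which require mild regularity on $h_{\mathcal{R}}$ and $\phi$ and some growth control on the unbounded state space; once the expansion and the harmonic identity are secured, the collapse of the $\phi(\partial_t+\mathcal{A}_t)h_{\mathcal{R}}$ term and the matching of generators is routine.
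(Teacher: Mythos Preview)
Your proposal is correct and follows essentially the same route as the paper: compute the generator of the $h$-transformed process via the limit definition, use the backward Kolmogorov equation to obtain the space-time harmonicity $(\partial_t+\mathcal{A}_t)h_{\mathcal{R}}=0$, expand $\mathcal{A}_t(\phi h_{\mathcal{R}})$ by Leibniz, and read off the extra drift term $\sigma^2(t)\nabla\log h_{\mathcal{R}}$. The only notable difference is in the symmetry verification: the paper checks $p^h_{\mathcal{R}}(\rho^{\mathcal{R}}(g)[z'],t'|\rho^{\mathcal{R}}(g)[z],t)=p^h_{\mathcal{R}}(z',t'|z,t)$ directly from the explicit formula $p^h_{\mathcal{R}}=p_{\mathcal{R}}\cdot h_{\mathcal{R}}(z',t')/h_{\mathcal{R}}(z,t)$ using the equivariance of $p_{\mathcal{R}}$ and the invariance of $h_{\mathcal{R}}$, whereas you argue that the transformed drift inherits the required $\mathrm{SO}(3)$-equivariance and $\mathrm{T}(3)$-invariance and then re-apply Proposition~\ref{thm:equivariant-diffusion-process}. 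Both arguments are valid; yours is slightly more conceptual (it reuses the earlier proposition), while the paper's direct check is marginally more elementary and avoids any question about whether the initial-distribution hypothesis of Proposition~\ref{thm:equivariant-diffusion-process} is needed.
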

\begin{proof}
We use the definition of the infinitesimal generator to prove the proposition. The infinitesimal generator of   $p_{\mathcal{R}}^h(x',t'|  x, t)$ for a function $\phi(x)$ is given by
\begin{equation}
\mathcal{A}_{t}^h\phi(x)=\lim_{s\to0^+}\frac{\mathbb{E}^h[\phi(\mathbf{R}^{t+s})|\mathbf{R}^t=x]-\phi(x)}{s}.
\end{equation}
Since $p^h_{\mathcal{R}}(z',t'|  z, t) = p_{\mathcal{R}}(z',t' |  z,t)\frac{h_{\mathcal{R}}(z',t')}{h_{\mathcal{R}}(z,t)}$, so we have
\begin{equation}
    \mathbb{E}^h[\phi(\mathbf{R}^{t+s})|\mathbf{R}^t=x] = \frac{\mathbb{E}[\phi(\mathbf{R}^{t+s})h(\mathbf{R}^{t+s},t+s)|\mathbf{R}^t=x]}{h(x,t)}.
\end{equation}
Then $\mathcal{A}_{t}^h\phi(x)$ can be simplified as
\begin{align}
\mathcal{A}_{t}^h\phi(x)&=\lim_{s\to0^+}\frac{\mathbb{E}^h[\phi(\mathbf{R}^{t+s})|\mathbf{R}^t=x]-\phi(x)}{s}\\
    &=\lim_{s\to0^+}\frac{\mathbb{E}[\phi(\mathbf{R}^{t+s})h(\mathbf{R}^{t+s},t+s)|\mathbf{R}^t=x]-\phi(x)h(x,t)}{sh(x,t)} \\
    &=\frac{1}{h(x,t)}[\frac{\partial h(x,t)}{\partial t}\phi(x) + \sum_{i=1}^d \left(\frac{\partial h(x,t)}{\partial x_i}\phi(x)+h(x,t)\frac{\partial \phi(x)}{\partial x_i}\right)\mathbf{f}^i(x,t) \\ & +\frac{1}{2}\sum_{i=1}^d\sigma^2(t)\frac{\partial^2 h(x,t)}{\partial x_i^2}\phi(x)+\sum_{i=1}^d\sigma^2(t)\frac{\partial h(x,t)}{\partial x_i}\frac{\partial \phi(x)}{\partial x_i} \\&+\frac{1}{2}\sum_{i=1}^d\sigma^2(t)\frac{\partial^2 \phi(x)}{\partial x_i^2} h(x,t)] \\
    &=\frac{1}{h(x,t)}[\frac{\partial h(x,t)}{\partial t}\phi(x)+\left(\mathcal{A}_t h(x,t)\right)\phi(x)\sum_{i=1}^d h(x,t)\frac{\partial \phi(x)}{\partial x_i}\mathbf{f}^i(x,t) \\  &+\sum_{i=1}^d\sigma^2(t)\frac{\partial h(x,t)}{\partial x_i}\frac{\partial \phi(x)}{\partial x_i}+\frac{1}{2}\sum_{i=1}^d\sigma^2(t)\frac{\partial^2 \phi(x)}{\partial x_i^2}h(x,t)]
\end{align}
Since  $h(x,t)=\int p_{\mathcal{R}}(x',t'| x,t) h(x', t') \mathrm{d}x$, we have
\begin{equation}
    \left(\frac{\partial}{\partial t}+\mathcal{A}_t\right)h(x,t)=\int \left(\frac{\partial p(x',t'| x,t)}{\partial t}+\mathcal{A}_tp(x',t'| x,t)\right) h(x', t') \mathrm{d}x.
\end{equation}
According to the Backward Kolmogorov Equation (Proposition \ref{prop:Kolmogorov_backward}), we get
\begin{equation}
    \frac{\partial p(x',t'| x,t)}{\partial t}+\mathcal{A}_tp(x',t'| x,t)=0.
\end{equation}
So we get
\begin{equation}
    \left(\frac{\partial}{\partial t}+\mathcal{A}_t\right)h(x,t)=0.
\end{equation}

Then  $\mathcal{A}_{t}^h\phi(x)$ can be simplified as
\begin{align}
   \mathcal{A}^h\phi(x) &=\frac{1}{h(x,t)}[\sum_{i=1}^d h(x,t)\frac{\partial \phi(x)}{\partial x_i}\mathbf{f}^i(x,t)+\sum_{i=1}^d\sigma^2(t)\frac{\partial h(x,t)}{\partial x_i}\frac{\partial \phi(x)}{\partial x_i}\\&+\frac{1}{2}\sum_{i=1}^d\sigma^2(t)\frac{\partial^2 \phi(x)}{\partial x_i^2}h(x,t)]\\
    &=\sum_{i=1}^d \frac{\partial \phi(x)}{\partial x_i}\mathbf{f}^i(x,t)+\sum_{i=1}^d\sigma^2(t)\frac{1}{h(x,t)}\frac{\partial h(x,t)}{\partial x_i}\frac{\partial \phi(x)}{\partial x_i}+\frac{1}{2}\sum_{i=1}^d\sigma^2(t)\frac{\partial^2 \phi(x)}{\partial x_i^2}\\
    &=\sum_{i=1}^d\left(\mathbf{f}^i(x,t)+\sigma^2(t)\frac{\partial \operatorname{log} h(x,t)}{\partial x_i} \right)\frac{\partial \phi(x)}{\partial x_i}+\frac{1}{2}\sum_{i=1}^d\sigma^2(t)\frac{\partial^2 \phi(x)}{\partial x_i^2}.
\end{align}
So we show that 
\begin{equation}
    \mathcal{A}_t^h=\sum_{i=1}^d\left(\mathbf{f}^i(x,t)+\sigma^2(t)\frac{\partial \operatorname{log} h(x,t)}{\partial x_i}\right)\frac{\partial }{\partial x_i}+\frac{1}{2}\sum_{i=1}^d\sigma^2(t)\frac{\partial^2 }{\partial x_i^2}.
\end{equation}
According to the correspondence between SDE and its generator (Definition \ref{def:generator}), the equation above implies that the h-transformed SDE is given by
\begin{equation}
    \mathrm{d}\mathbf{R}^{t} = \left[\mathbf{f}_{\mathcal{R}}(\mathbf{R}^{t},t)+\sigma^2(t)\nabla_{\mathbf{R}^{t}} \operatorname{log} h_{\mathcal{R}}(\mathbf{R}^{t},t)\right]\mathrm{d}t + \sigma(t) \mathrm{d}\mathbf{W}_t.
\end{equation}
Additionally, we need to show that the h-transformed transition density function satisfies the symmetric constraints. First, we show that if $h(\cdot,t_0)$ is $\mathrm{SE}(3)$-invariant, then $h(\cdot,t)$ is also $\mathrm{SE}(3)$-invariant $\forall t\in [0,T]$. For any $g\in \mathrm{SE}(3)$, assume $\rho^{\mathcal{R}}(g)[z]=\mathbf{O}_{\mathcal{R}}(g)z+\mathbf{t}_{\mathcal{R}}$, where $\mathbf{O}_{\mathcal{R}}(g)$ is an orthogonal matrix and $\operatorname{det}(\mathbf{O}_{\mathcal{R}}(g))=1$.
Since $h_{\mathcal{R}}(z,t)$ satisfies
\begin{equation}
    h_{\mathcal{R}}(z,t)=\int p_{\mathcal{R}}(z',t_0| z,t) h(z', t_0) \mathrm{d}z',
\end{equation}
then we have 
\begin{align}
     h_{\mathcal{R}}(\rho^{\mathcal{R}}(g)[z],t)&=\int p_{\mathcal{R}}(z',t_0| \rho^{\mathcal{R}}(g)[z],t) h(z', t_0) \mathrm{d}z'\\
     &=\int p_{\mathcal{R}}\left(\rho^{\mathcal{R}}(g)(\rho^{\mathcal{R}}(g))^{-1}[z'],t_0| \rho^{\mathcal{R}}(g)[z],t\right) h(\rho^{\mathcal{R}}(g)(\rho^{\mathcal{R}}(g))^{-1}[z'], t_0) \mathrm{d}z'.\\
\end{align}
By Proposition \ref{thm:equivariant-diffusion-process}, $p_{\mathcal{R}}\left(\rho^{\mathcal{R}}(g)(\rho^{\mathcal{R}}(g))^{-1}[z'],t_0| \rho^{\mathcal{R}}(g)[z],t\right)=p_{\mathcal{R}}\left((\rho^{\mathcal{R}}(g))^{-1}[z'],t_0| z,t\right)$, let $z_1=\rho^{\mathcal{R}}(g))^{-1}[z']$, then
\begin{align}
    h_{\mathcal{R}}(\rho^{\mathcal{R}}(g)[z],t)&=\int p_{\mathcal{R}}\left((\rho^{\mathcal{R}}(g))^{-1}[z'],t_0| z,t\right) h(\rho^{\mathcal{R}}(g)(\rho^{\mathcal{R}}(g))^{-1}[z'], t_0) \mathrm{d}z'\\
    &=\int p_{\mathcal{R}}\left(z_1,t_0| z,t\right) h(\rho^{\mathcal{R}}(g)z_1, t_0) \operatorname{det}(\mathbf{O}_{\mathcal{R}}(g))\mathrm{d}z_1\\
    &=\int p_{\mathcal{R}}\left(z_1,t_0| z,t\right) h(z_1, t_0) \mathrm{d}z_1 \\
    &=h_{\mathcal{R}}(z,t).
\end{align}
So $h(\cdot,t)$ is $\mathrm{SE}(3)$-invariant $\forall t\in [0,T]$, $h(\cdot,t)$ is well-defined under these symmetric constraints. Then we show $ p^h_{\mathcal{R}}(z',t'|  z, t)$ preserves the symmetric constraints:
\begin{align}
p^h_{\mathcal{R}}(\rho^{\mathcal{R}}(g)[z'],t'| \rho^{\mathcal{R}}(g) [z], t) &= p_{\mathcal{R}}(\rho^{\mathcal{R}}(g)[z'],t' |  \rho^{\mathcal{R}}(g)[z],t)\frac{h_{\mathcal{R}}(\rho^{\mathcal{R}}(g)[z'],t')}{h_{\mathcal{R}}(\rho^{\mathcal{R}}(g)[z],t)}\\
&= p_{\mathcal{R}}(z',t' |  z,t)\frac{h_{\mathcal{R}}(\rho^{\mathcal{R}}(g)[z'],t')}{h_{\mathcal{R}}(\rho^{\mathcal{R}}(g)[z],t)}\\
&=p_{\mathcal{R}}(z',t' |  z,t)\frac{h_{\mathcal{R}}(z',t')}{h_{\mathcal{R}}(z,t)}
\\&=p^h_{\mathcal{R}}(z',t'|  z, t).
\end{align}
Thus we have proved that
\begin{equation}
    p^h_{\mathcal{R}}(\rho^{\mathcal{R}}(g)[z'],t'| \rho^{\mathcal{R}}(g) [z], t)=p^h_{\mathcal{R}}(z',t'|  z, t),
\end{equation}
which implies that $ p^h_{\mathcal{R}}(z',t'|  z, t)$ preserves the symmetric constraints for any $g\in \mathrm{SE}(3)$. So the proof is completed.
\end{proof}

Next, we show how to construct a SDE with a fixed terminal point as an simple application of the Doob's h-transform. The result of this example is very useful to construct diffusion bridge.
\begin{proposition}
\label{prop:doobh-example}
    Assume the original SDE is given by $\mathrm{d}\mathbf{X}^{t} = \mathbf{f}(\mathbf{X}^{t},t)\mathrm{d}t + \sigma(t) \mathrm{d}\mathbf{W}_t.$ Let $h_{\mathcal{R}}(x, t)=p_{\mathcal{R}}(y, T|x, t)$ which is the transition density function of the original SDE evaluated at $\mathbf{X}_T=y$. Then the $h$-transformed SDE 
    \begin{equation}
        \mathrm{d}\mathbf{R}^{t} = \left[\mathbf{f}(\mathbf{R}^{t},t)+\sigma^2(t)\nabla_{\mathbf{R}^{t}} \operatorname{log} p_{\mathcal{R}}(y, T|\mathbf{R}^{t}, t)\right]\mathrm{d}t + \sigma(t) \mathrm{d}\mathbf{W}_t,
    \end{equation}
    arrive at $y$ almost surely at the final time.
\end{proposition}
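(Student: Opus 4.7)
The plan is to verify the hypotheses of Proposition~\ref{thm:doobh-transform} for the specific choice $h_{\mathcal{R}}(x,t) = p_{\mathcal{R}}(y, T\mid x, t)$, apply it to obtain the transformed SDE, and then read off from the resulting transition density that the process is pinned to $y$ at time $T$.

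First I would verify the harmonic-like condition. By Chapman--Kolmogorov applied to the original diffusion, for all $t \le t' \le T$,
$$p_{\mathcal{R}}(y, T\mid x, t) = \int p_{\mathcal{R}}(y, T\mid x', t')\, p_{\mathcal{R}}(x', t'\mid x, t)\, \mathrm{d}x',$$
which is exactly condition (2) of Proposition~\ref{thm:doobh-transform} for this choice of $h_{\mathcal{R}}$; smoothness and strict positivity follow from standard regularity of the transition density of the original SDE on the relevant domain. Applying Proposition~\ref{thm:doobh-transform} then yields the claimed drift correction $\sigma^{2}(t)\nabla_{\mathbf{R}^t}\log p_{\mathcal{R}}(y,T\mid \mathbf{R}^t, t)$ and produces the transformed transition density
$$p^{h}_{\mathcal{R}}(x', t'\mid x, t) \;=\; p_{\mathcal{R}}(x', t'\mid x, t)\,\frac{p_{\mathcal{R}}(y, T\mid x', t')}{p_{\mathcal{R}}(y, T\mid x, t)}.$$

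Next I would identify $p^{h}_{\mathcal{R}}$ as the law of the original process conditioned on $\mathbf{X}_T = y$. Indeed, by Bayes' rule combined with the Markov property, the right-hand side above is the conditional density of $\mathbf{X}_{t'}$ given $\mathbf{X}_t = x$ and $\mathbf{X}_T = y$. In particular, for any bounded continuous test function $\phi$,
$$\int \phi(x')\, p^{h}_{\mathcal{R}}(x', t'\mid x, t)\,\mathrm{d}x' \;=\; \frac{\int \phi(x')\, p_{\mathcal{R}}(x', t'\mid x, t)\, p_{\mathcal{R}}(y, T\mid x', t')\,\mathrm{d}x'}{p_{\mathcal{R}}(y, T\mid x, t)}.$$
Letting $t' \uparrow T$, the kernel $p_{\mathcal{R}}(y, T\mid x', t')$ concentrates into a Dirac mass at $x' = y$, so the right-hand side converges to $\phi(y)$. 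Hence the distribution of $\mathbf{R}^{t'}$ under $p^{h}_{\mathcal{R}}$ converges weakly to $\delta_{y}$ as $t' \to T$. Combined with continuity of sample paths of the transformed diffusion, this shows $\mathbf{R}^{T} = y$ almost surely.

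The main obstacle I expect is making the concentration step $p_{\mathcal{R}}(y,T\mid x',t') \rightsquigarrow \delta_y$ rigorous in the general setting, since it requires appropriate continuity/regularity of the transition density near the terminal time. In the concrete setup used in Sec.~\ref{sec:implementation}, however, $p_{\mathcal{R}}$ is Gaussian with vanishing variance $\sigma^{2}(T-t')$ as $t'\to T$, so this step is immediate; for the general proposition one simply invokes the standard short-time Gaussian estimates enjoyed by uniformly elliptic SDEs with the stated regularity on $\mathbf{f}_{\mathcal{R}}$ and $\sigma$.
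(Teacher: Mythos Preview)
Your proposal is correct and follows essentially the same route as the paper: verify the Chapman--Kolmogorov identity to check the hypothesis of Proposition~\ref{thm:doobh-transform}, apply it, and recognize the resulting $p^{h}_{\mathcal{R}}$ as the Bayes/Markov conditional law given $\mathbf{X}_T=y$. The paper's proof is actually terser at the last step---it simply asserts that being the conditioned process forces $\mathbf{R}^T=y$ a.s.---whereas you additionally sketch the $t'\uparrow T$ concentration argument; this extra care is fine but not needed to match the paper.
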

    
\begin{proof}
   The original SDE is given by 
   \begin{equation}
       \mathrm{d}\mathbf{X}^{t} = \mathbf{f}(\mathbf{X}^{t},t)\mathrm{d}t + \sigma(t) \mathrm{d}\mathbf{W}_t.
   \end{equation}
   First, we need to verify that $h_{\mathcal{R}}(x,t)$ satisfies the condition
\begin{equation}
    h_{\mathcal{R}}(x,t)=\int p_{\mathcal{R}}(x',t_0| x,t) h(x', t_0) \mathrm{d}x'.
\end{equation}
Since $h_{\mathcal{R}}(x, t)=p_{\mathcal{R}}(y, T|x, t)$, we have
   \begin{equation}
       \int p_{\mathcal{R}}(x',t'| x,t) h_{\mathcal{R}}(x', t')=\int p_{\mathcal{R}}(x',t'| x,t) p_{\mathcal{R}}(y, T|x', t') \mathrm{d}x'.
   \end{equation}
Then by the Chapman–Kolmogorov's equation
\begin{equation}
    \int p_{\mathcal{R}}(x',t'| x,t) p_{\mathcal{R}}(y, T|x', t') \mathrm{d}x'=p_{\mathcal{R}}(y,T| x,t),
\end{equation}
we get
\begin{equation}
    \int p_{\mathcal{R}}(x',t'| x,t) h_{\mathcal{R}}(x', t')=p_{\mathcal{R}}(y,T| x,t)=h_{\mathcal{R}}(x, t).
\end{equation}
So the condition is satisfied. Then we can use the result of the Proposition \ref{thm:doobh-transform}. The $h$-transformed SDE is given by
    \begin{equation}
        \mathrm{d}\mathbf{R}^{t} = \left[\mathbf{f}(\mathbf{R}^{t},t)+\sigma^2(t)\nabla_{\mathbf{R}^{t}} \operatorname{log} p_{\mathcal{R}}(y, T|\mathbf{R}^{t}, t)\right]\mathrm{d}t + \sigma(t) \mathrm{d}\mathbf{W}_t.
    \end{equation}
    And the h-transformed transition density function satisfies
   \begin{align}
        \int_A p_{\mathcal{R}}^h(x',t'|  x, t) \mathrm{d}x'
        &= \int_A p_{\mathcal{R}}(x',t' |  x,t)\frac{h_{\mathcal{R}}(x',t')}{h_{\mathcal{R}}(x,t)}\mathrm{d}x'\\
        &=\int_A p_{\mathcal{R}}(x',t' |  x,t)\frac{p_{\mathcal{R}}(y,T| x',t')}{p_{\mathcal{R}}(y,T| x,t)}\mathrm{d}x'\\
        &=P(\mathbf{X}_{t'}\in A|  \mathbf{X}_t=x,\mathbf{X}_T=y),
   \end{align}
   where we use the Bayes' theorem to deduce the last equality and $A$ is an arbitrary Borel set.
   Since $\mathbf{R}_{t}$ is a process conditioning on $\mathbf{X}_T=y$, then $\mathbf{R}_T=y$ almost surly. 
\end{proof}

\subsection{Proof of Theorem \ref{thm:equivariant-diffusion-bridge}}
\begin{theorem} [Equivariant Diffusion Bridge]\label{thm:equivariant-diffusion-bridge-appendix}
    Given an SDE on geometric states $\mathrm{d}\mathbf{R}^{t}=\mathbf{f}_{\mathcal{R}}(\mathbf{R}^{t},t)\mathrm{d}t+\sigma(t)\mathrm{d}\mathbf{W}^t$ with transition density $p_{\mathcal{R}}(z',t'|z,t),z,z'\in\mathcal{R}$ satisfying the conditions in Proposition \ref{thm:equivariant-diffusion-process}. Let $h_{\mathcal{R}}(z,t;z_0)=\int p_{\mathcal{R}}(z',T|z,t)\frac{q_{\text{data}}(z'|z_0)}{p_{\mathcal{R}}(z',T|z_0,0)}\mathrm{d}z'$. By using Proposition \ref{thm:doobh-transform}, we can derive the following $h_{\mathcal{R}}$-transformed SDE:
   \begin{equation}\label{eqn:equi-bridge-appendix}
    \mathrm{d}\mathbf{R}^{t} = \left[\mathbf{f}_{\mathcal{R}}(\mathbf{R}^{t},t)+\sigma^2(t)\mathbb{E}_{ q_{\mathcal{R}}(\mathbf{R}^{T},T|\mathbf{R}^{t},t;\mathbf{R}^{0})}[\nabla_{\mathbf{R}^{t}} \operatorname{log} p_{\mathcal{R}}(\mathbf{R}^{T},T|\mathbf{R}^{t},t)|\mathbf{R}^{0},\mathbf{R}^{t}]\right]\mathrm{d}t + \sigma(t) \mathrm{d}\mathbf{W}^t, 
    \end{equation}
    which corresponds to a process $(\mathbf{R}^{t})_{t\in[0,T]},\mathbf{R}^{0}\sim q_{\text{data}}(R^{t_0})$ satisfying the following properties:
    \begin{itemize}
        \item let $q(\cdot,\cdot):\mathcal{R}\times\mathcal{R}\to\mathbb{R}_{\geq 0}$ denote the joint distribution induced by $(\mathbf{R}^{t})_{t\in[0,T]}$, then $q(\mathbf{R}^0, \mathbf{R}^T)$ equals to $q_{\text{data}}(R^{t_0}, R^{t_1})$;
        \item its transition density $q_{\mathcal{R}}(\mathbf{R}^{t'},t'|\mathbf{R}^t,t;\mathbf{R}^0)$$=$$q_{\mathcal{R}}(\rho^{\mathcal{R}}(g)[\mathbf{R}^{t'}],t'|\rho^{\mathcal{R}}(g)[\mathbf{R}^t],t;\rho^{\mathcal{R}}(g)[\mathbf{R}^0])$,$\\\forall$$0$$\leq$$t$$<$$t'$$\leq$$T,g$$\in$$\mathrm{SE}(3)$$,$$\mathbf{R}^0$$\sim$$q_{\text{data}}(R^{t_0})$.
    \end{itemize}
We call the tailored diffusion process $(\mathbf{R}^{t})_{t\in[0,T]}$ an equivariant diffusion bridge.
\end{theorem}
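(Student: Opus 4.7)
The plan is to apply Proposition \ref{thm:doobh-transform} with the specific choice of $h_{\mathcal{R}}(z,t;z_0)$ in the statement, using $z_0$ as a parameter, and then to read off the two claimed properties from the explicit form of the $h_{\mathcal{R}}$-transformed transition density. First I would check that this $h_{\mathcal{R}}(\cdot,t;z_0)$ satisfies the hypotheses of Proposition \ref{thm:doobh-transform}. The integral martingale identity $h_{\mathcal{R}}(z,t;z_0)=\int p_{\mathcal{R}}(z',t'|z,t)\,h_{\mathcal{R}}(z',t';z_0)\,\mathrm{d}z'$ follows directly from Chapman--Kolmogorov applied to $p_{\mathcal{R}}(z'',T|z,t)$: swapping the order of integration reduces the double integral to $\int p_{\mathcal{R}}(z'',T|z,t)\,\frac{q_{\text{data}}(z''|z_0)}{p_{\mathcal{R}}(z'',T|z_0,0)}\,\mathrm{d}z''$, which is $h_{\mathcal{R}}(z,t;z_0)$. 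The joint $\mathrm{SE}(3)$-invariance of $h_{\mathcal{R}}$ under simultaneous action on $z$ and $z_0$ follows from the $\mathrm{SE}(3)$-equivariance of $p_{\mathcal{R}}$ (Proposition \ref{thm:equivariant-diffusion-process}) and the $\mathrm{SE}(3)$-invariance of $q_{\text{data}}(\cdot|\cdot)$, after a change of variables $z'=\rho^{\mathcal{R}}(g)[\tilde z']$ whose Jacobian determinant is $1$.

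Next I would rewrite the drift correction. Using the definition of $h_{\mathcal{R}}$ and the log-derivative trick, $\nabla_z\log h_{\mathcal{R}}(z,t;z_0)=h_{\mathcal{R}}(z,t;z_0)^{-1}\int\nabla_z p_{\mathcal{R}}(z',T|z,t)\,\frac{q_{\text{data}}(z'|z_0)}{p_{\mathcal{R}}(z',T|z_0,0)}\,\mathrm{d}z'$. Multiplying and dividing by $p_{\mathcal{R}}(z',T|z,t)$ identifies the integrand as $\nabla_z\log p_{\mathcal{R}}(z',T|z,t)$ weighted by the density
\begin{equation*}
q_{\mathcal{R}}(z',T|z,t;z_0)\;=\;p_{\mathcal{R}}(z',T|z,t)\,\frac{h_{\mathcal{R}}(z',T;z_0)}{h_{\mathcal{R}}(z,t;z_0)},
\end{equation*}
which is precisely the $h_{\mathcal{R}}$-transformed transition density from Proposition \ref{thm:doobh-transform}. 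This yields exactly the expectation form of the drift in Eqn.~(\ref{eqn:equi-bridge-appendix}). Applying Proposition \ref{thm:doobh-transform} then produces the SDE and guarantees that the transformed transition density $q_{\mathcal{R}}(z',t'|z,t;z_0)$ is $\mathrm{SE}(3)$-equivariant in $(z,z',z_0)$; this gives the second bullet of the theorem.

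The main obstacle, and the step I would handle most carefully, is the first bullet: showing $q(\mathbf{R}^0,\mathbf{R}^T)=q_{\text{data}}(R^{t_0},R^{t_1})$. Because the boundary of the bridge at $t=T$ is degenerate, one has to pass to the limit $t\uparrow T$ rather than evaluate naively. The key computation is $h_{\mathcal{R}}(z,T;z_0)=q_{\text{data}}(z|z_0)/p_{\mathcal{R}}(z,T|z_0,0)$, obtained from $p_{\mathcal{R}}(z',T|z,T)=\delta(z'-z)$, and $h_{\mathcal{R}}(z_0,0;z_0)=\int q_{\text{data}}(z'|z_0)\mathrm{d}z'=1$. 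Substituting into the transformed transition density gives
\begin{equation*}
q_{\mathcal{R}}(R^T,T|R^0,0;R^0)=p_{\mathcal{R}}(R^T,T|R^0,0)\cdot\frac{q_{\text{data}}(R^T|R^0)}{p_{\mathcal{R}}(R^T,T|R^0,0)}=q_{\text{data}}(R^T|R^0),
\end{equation*}
and multiplying by $q_{\text{data}}(R^0)$ completes the identification. To make the $t\uparrow T$ argument rigorous I would treat $h_{\mathcal{R}}$ as a conditional expectation under the original law and invoke dominated convergence (or the martingale convergence property inherent to Doob's $h$-transform), showing that the law of $\mathbf{R}^T$ given $\mathbf{R}^0$ under the bridge equals the disintegration $q_{\text{data}}(\cdot|R^0)$ as a bona fide probability measure. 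Finally, the equivariance of $q_{\mathcal{R}}(\cdot,t'|\cdot,t;\cdot)$ then transfers to the joint $q(\mathbf{R}^0,\mathbf{R}^T)$ since both $q_{\text{data}}(R^{t_0})$ and $q_{\text{data}}(R^{t_1}|R^{t_0})$ are already $\mathrm{SE}(3)$-equivariant by assumption.
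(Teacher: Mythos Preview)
Your proposal is correct and follows essentially the same route as the paper: verify that $h_{\mathcal{R}}(\cdot,t;z_0)$ satisfies the martingale/integral condition and joint $\mathrm{SE}(3)$-invariance, apply Proposition~\ref{thm:doobh-transform} to obtain the transformed SDE and transition density $q_{\mathcal{R}}=p_{\mathcal{R}}\cdot h_{\mathcal{R}}(z',t';z_0)/h_{\mathcal{R}}(z,t;z_0)$, rewrite $\nabla_z\log h_{\mathcal{R}}$ as the stated conditional expectation, and then read off both bullets from $h_{\mathcal{R}}(z_0,0;z_0)=1$ and $h_{\mathcal{R}}(z,T;z_0)=q_{\text{data}}(z|z_0)/p_{\mathcal{R}}(z,T|z_0,0)$. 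Your extra care with Chapman--Kolmogorov and the $t\uparrow T$ limit is a welcome refinement over the paper's more direct (and slightly informal) evaluation at the endpoints, but the overall architecture is identical.
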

\begin{proof}
Let $h_{\mathcal{R}}(z,T;z_0)=\frac{q_{\text{data}}(z|z_0)}{p_{\mathcal{R}}(z,T|z_0,0)}$, then we define 
\begin{equation}
    h_{\mathcal{R}}(z,t;z_0)=\int p_{\mathcal{R}}(z',T|z,t)\frac{q_{\text{data}}(z'|z_0)}{p_{\mathcal{R}}(z',T|z_0,0)}\mathrm{d}z', \forall t\in[0,T).
\end{equation}
So we can easily show that $h_{\mathcal{R}}(z,t;z_0)$ satisfies the condition
\begin{equation}
    h_{\mathcal{R}}(z,t;z_0)=\int p_{\mathcal{R}}(z',T| z,t) h(z', T;z_0) \mathrm{d}z', \forall t\in[0,T], \forall z, z_0\in \mathcal{R}.
\end{equation}

Then we can use the result of Theorem ~\ref{thm:doobh-transform} to get the h-transformed SDE. By Theorem ~\ref{thm:doobh-transform}, the h-transformed SDE is 
 \begin{equation}
      \mathrm{d}\mathbf{R}^{t} = \left[\mathbf{f}_{\mathcal{R}}(\mathbf{R}^{t},t)+\sigma^2(t)\nabla_{\mathbf{R}^{t}} \operatorname{log} h_{\mathcal{R}}(\mathbf{R}^{t},t;\mathbf{R}^{0})\right]\mathrm{d}t + \sigma(t) \mathrm{d}\mathbf{W}_t.
 \end{equation}
Next, we need to find the explicit form of $\nabla_{\mathbf{R}^{t}} \operatorname{log} h_{\mathcal{R}}(\mathbf{R}^{t},t;\mathbf{R}^{0})$,
\begin{align}
    \nabla_{z} \operatorname{log} h_{\mathcal{R}}(z,t;z_0)&= \frac{\nabla_{z}  h_{\mathcal{R}}(z,t;z_0)}{h_{z}(z,t;z_0)}\\
    &=\frac{1}{h_{\mathcal{R}}(z,t;z_0)}\int \nabla_{z} p_{\mathcal{R}}(z',T|z,t)\frac{q_{\text{data}}(z'|z_0)}{p_{\mathcal{R}}(z',T|z_0,0)}\mathrm{d}z'.
\end{align}
The h-transformed density function is 
\begin{align}
    q_{\mathcal{R}}(z',T|  z, t;z_0,0) &= p_{\mathcal{R}}(z',T |  z,t)\frac{h_{\mathcal{R}}(z',T;z_0)}{h_{\mathcal{R}}(z,t;z_0)} \\
    &= p_{\mathcal{R}}(z',T |  z,t)\frac{q_{\text{data}}(z'|z_0)}{p_{\mathcal{R}}(z',T;z_0,0)h_{\mathcal{R}}(z,t;z_0)}.
\end{align}
Then we have 
\begin{align}
    \nabla_{z} \operatorname{log} h_{\mathcal{R}}(z,t;z_0)&=\frac{1}{h_{\mathcal{R}}(z,t;z_0)}\int \nabla_{z} p_{\mathcal{R}}(z',T|z,t)\frac{q_{\text{data}}(z'|z_0)}{p_{\mathcal{R}}(z',T|z_0,0)}\mathrm{d}z'\\
    &=\int \nabla_{z} p_{\mathcal{R}}(z',T|z,t)\frac{q_{\mathcal{R}}(z',T|  z, t;z_0,0)}{p_{\mathcal{R}}(z',T|z,t)}\mathrm{d}z'\\
    &=\int \nabla_{z} \operatorname{log} p_{\mathcal{R}}(z',T|z,t)q_{\mathcal{R}}(z',T|  z, t;z_0,0)\mathrm{d}z'.
\end{align}
So we get a explicit form of $\nabla_{\mathbf{R}^{t}} \operatorname{log} h_{\mathcal{R}}(\mathbf{R}^{t},t;\mathbf{R}^{0})$:
\begin{equation}
    \nabla_{\mathbf{R}^{t}} \operatorname{log} h_{\mathcal{R}}(\mathbf{R}^{t},t;\mathbf{R}^{0})=\mathbb{E}_{ q_{\mathcal{R}}(\mathbf{R}^{T},T|\mathbf{R}^{t},t;z_0)}[\nabla_{\mathbf{R}^{t}} \operatorname{log} p_{\mathcal{R}}(\mathbf{R}^{T},T|\mathbf{R}^{t},t)|z_0,\mathbf{R}^{t}].
\end{equation}
Then the h-transformed SDE becomes
\begin{equation}
    \mathrm{d}\mathbf{R}^{t} = \left[\mathbf{f}_{\mathcal{R}}(\mathbf{R}^{t},t)+\sigma^2(t)\mathbb{E}_{ q_{\mathcal{R}}(\mathbf{R}^{T},T|\mathbf{R}^{t},t;z_0)}[\nabla_{\mathbf{R}^{t}} \operatorname{log} p_{\mathcal{R}}(\mathbf{R}^{T},T|\mathbf{R}^{t},t)|z_0,\mathbf{R}^{t}]\right]\mathrm{d}t + \sigma(t) \mathrm{d}\mathbf{W}^t.
\end{equation}
Since $h_{\mathcal{R}}(z,0;z_0)=\int p_{\mathcal{R}}(z',T|z,0)\frac{q_{\text{data}}(z'|z_0)}{p_{\mathcal{R}}(z',T|z_0,0)}\mathrm{d}z'=\int q_{\text{data}}(z'|z_0)\mathrm{d}z'=1$, then 
\begin{equation}
    q_{\mathcal{R}}(z',T|  z_0,0) = p_{\mathcal{R}}(z',T |  z_0,0)\frac{q_{\text{data}}(z'|z_0)}{p_{\mathcal{R}}(z',T;z_0,0)h_{\mathcal{R}}(z_0,0;z_0)}=q_{\text{data}}(z'|z_0),
\end{equation}
which means $q_{\mathcal{R}}(\mathbf{R}^T,T|  \mathbf{R}^0,0)=q_{\text{data}}(\mathbf{R}^T|  \mathbf{R}^0)$.
Since the initial distribution  $\mathbf{R}^0\sim q_{\text{data}}(R^{t_0})$, so $q_{\mathcal{R}}(\mathbf{R}^0)=q_{\text{data}}(\mathbf{R}^0)$. So we can deduce that
\begin{equation}
    q(\mathbf{R}^0, \mathbf{R}^T)=q_{\mathcal{R}}(\mathbf{R}^0)q_{\mathcal{R}}(\mathbf{R}^T,T|  \mathbf{R}^0,0) =q_{\text{data}}(\mathbf{R}^0)q_{\text{data}}(\mathbf{R}^T|  \mathbf{R}^0) = q_{\text{data}}(\mathbf{R}^0, \mathbf{R}^T).
\end{equation}

Finally, we need to show that the transition density function satisfies the corresponding symmetric constrains. Since $h_{\mathcal{R}}(z',T;z_0)=\frac{q_{\text{data}}(z'|z_0)}{p_{\mathcal{R}}(z',T|z_0,0)}$ is $\mathrm{SE}(3)$-invariant, i.e.
\begin{equation}
    h_{\mathcal{R}}(\rho^{\mathcal{R}}(g)[z],T;\rho^{\mathcal{R}}(g)[z_0])=h_{\mathcal{R}}(z',T;z_0), \forall g\in \mathrm{SE}(3),
\end{equation}
we can show that $h(\cdot,t;\cdot)$ is also $\mathrm{SE}(3)$-invariant $\forall t\in [0,T]$ using the following property 
\begin{equation}
    h_{\mathcal{R}}(z,t;z_0)=\int p_{\mathcal{R}}(z',T| z,t) h(z', T;z_0) \mathrm{d}z'.
\end{equation}
For any $g\in \mathrm{SE}(3)$, assume $\rho^{\mathcal{R}}(g)[z]=\mathbf{O}_{\mathcal{R}}(g)z+\mathbf{t}_{\mathcal{R}}$, where $\mathbf{O}_{\mathcal{R}}(g)$ is an orthogonal matrix satisfying $\operatorname{det}(\mathbf{O}_{\mathcal{R}}(g))=1$, then we have
\begin{align}
     &h_{\mathcal{R}}(\rho^{\mathcal{R}}(g)[z],t;\rho^{\mathcal{R}}(g)[z_0])=\int p_{\mathcal{R}}(z',T| \rho^{\mathcal{R}}(g)[z],t) h(z', T;\rho^{\mathcal{R}}(g)[z_0]) \mathrm{d}z'\\
     &=\int p_{\mathcal{R}}\left(\rho^{\mathcal{R}}(g)(\rho^{\mathcal{R}}(g))^{-1}[z'],T| \rho^{\mathcal{R}}(g)[z],t\right) h(\rho^{\mathcal{R}}(g)(\rho^{\mathcal{R}}(g))^{-1}[z'], T;\rho^{\mathcal{R}}(g)[z_0]) \mathrm{d}z'.\\
\end{align}
By Proposition \ref{thm:equivariant-diffusion-process}, $p_{\mathcal{R}}\left(\rho^{\mathcal{R}}(g)(\rho^{\mathcal{R}}(g))^{-1}[z'],t_0| \rho^{\mathcal{R}}(g)[z],t\right)=p_{\mathcal{R}}\left((\rho^{\mathcal{R}}(g))^{-1}[z'],t_0| z,t\right)$, let $z_1=\rho^{\mathcal{R}}(g))^{-1}[z']$, then
\begin{align}
    &h_{\mathcal{R}}(\rho^{\mathcal{R}}(g)[z],t;\rho^{\mathcal{R}}(g)[z_0])\\&=\int p_{\mathcal{R}}\left((\rho^{\mathcal{R}}(g))^{-1}[z'],T| z,t\right) h(\rho^{\mathcal{R}}(g)(\rho^{\mathcal{R}}(g))^{-1}[z'], T;\rho^{\mathcal{R}}(g)[z_0]) \mathrm{d}z'\\
    &=\int p_{\mathcal{R}}\left(z_1,T| z,t\right) h(\rho^{\mathcal{R}}(g)[z_1], T;\rho^{\mathcal{R}}(g)[z_0]) \operatorname{det}(\mathbf{O}_{\mathcal{R}}(g))\mathrm{d}z_1\\
    &=\int p_{\mathcal{R}}\left(z_1,t_0| z,t\right) h(z_1, t_0;z_0) \mathrm{d}z_1 \\
    &=h_{\mathcal{R}}(z,t;z_0).
\end{align}
So $h(\cdot,t;\cdot)$ is $\mathrm{SE}(3)$-invariant $\forall t\in [0,T]$. Then we show $ q_{\mathcal{R}}(z',t'|  z, t;z_0,0)$ preserves the symmetric constraints:
\begin{align}
&q_{\mathcal{R}}(\rho^{\mathcal{R}}(g)[z'],t'| \rho^{\mathcal{R}}(g) [z], t;\rho^{\mathcal{R}}(g) [z_0], 0) \\&= p_{\mathcal{R}}(\rho^{\mathcal{R}}(g)[z'],t' |  \rho^{\mathcal{R}}(g)[z],t)\frac{h_{\mathcal{R}}(\rho^{\mathcal{R}}(g)[z'],t';\rho^{\mathcal{R}}(g) [z_0])}{h_{\mathcal{R}}(\rho^{\mathcal{R}}(g)[z],t;\rho^{\mathcal{R}}(g) [z_0])}\\
&= p_{\mathcal{R}}(z',t' |  z,t)\frac{h_{\mathcal{R}}(\rho^{\mathcal{R}}(g)[z'],t';\rho^{\mathcal{R}}(g) [z_0])}{h_{\mathcal{R}}(\rho^{\mathcal{R}}(g)[z],t;\rho^{\mathcal{R}}(g) [z_0])}\\
&=p_{\mathcal{R}}(z',t' |  z,t)\frac{h_{\mathcal{R}}(z',t';z_0)}{h_{\mathcal{R}}(z,t;z_0)}
\\&=q_{\mathcal{R}}(z',t'|  z, t;z_0,0),
\end{align}
which completes our proof.
\end{proof}
\subsection{Objective Function of the Equivariant Diffusion Bridge } \label{Appendix: Objective of Equivariant Diffusion Bridge}
\begin{lemma}\label{lemma:conditional expectation}
    Let $\mathbf{X}_1,\cdots,\mathbf{X}_n,\mathbf{Y},\mathbf{Z}$ be random variables. Then the optimal approximation of $\mathbf{Y}$  based on $\{\mathbf{X}\}_{i=1}^n$ is $f^*(\mathbf{X}_1,\cdots,\mathbf{X}_n)=\mathop{\arg\min}\limits_{f}\mathbb{E}\Vert\mathbf{Y}-f(\mathbf{X}_1,\cdots,\mathbf{X}_n)\Vert^2=\mathbb{E}[\mathbf{Y}|\mathbf{X}_1,\cdots,\mathbf{X}_n]$.
\end{lemma}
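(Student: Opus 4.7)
The plan is to identify $f^*$ by an orthogonal projection argument in $L^2$, viewing the set of square-integrable functions of $(\mathbf{X}_1,\ldots,\mathbf{X}_n)$ as a closed subspace of $L^2(\Omega)$ and exploiting the defining property of conditional expectation, namely that $\mathbb{E}[\mathbf{Y} \mid \mathbf{X}_1,\ldots,\mathbf{X}_n]$ is the unique (up to a.s. equality) $\sigma(\mathbf{X}_1,\ldots,\mathbf{X}_n)$-measurable random variable $g(\mathbf{X}_1,\ldots,\mathbf{X}_n)$ satisfying $\mathbb{E}[(\mathbf{Y}-g(\mathbf{X}_1,\ldots,\mathbf{X}_n))\,\phi(\mathbf{X}_1,\ldots,\mathbf{X}_n)]=0$ for every bounded measurable $\phi$.

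Concretely, I would set $g^*(\mathbf{X}_1,\ldots,\mathbf{X}_n) := \mathbb{E}[\mathbf{Y}\mid \mathbf{X}_1,\ldots,\mathbf{X}_n]$ and, for an arbitrary candidate $f(\mathbf{X}_1,\ldots,\mathbf{X}_n)$, decompose
\[
\mathbf{Y}-f(\mathbf{X}_1,\ldots,\mathbf{X}_n) = \bigl(\mathbf{Y}-g^*(\mathbf{X}_1,\ldots,\mathbf{X}_n)\bigr) + \bigl(g^*(\mathbf{X}_1,\ldots,\mathbf{X}_n)-f(\mathbf{X}_1,\ldots,\mathbf{X}_n)\bigr).
\]
Expanding the squared norm and taking expectations yields three terms; the cross term is
\[
2\,\mathbb{E}\bigl[\langle \mathbf{Y}-g^*(\mathbf{X}_{1:n}),\, g^*(\mathbf{X}_{1:n})-f(\mathbf{X}_{1:n})\rangle\bigr],
\]
and by the tower property together with the fact that $g^*(\mathbf{X}_{1:n})-f(\mathbf{X}_{1:n})$ is $\sigma(\mathbf{X}_{1:n})$-measurable, this reduces to $\mathbb{E}[\,\langle \mathbb{E}[\mathbf{Y}-g^*(\mathbf{X}_{1:n})\mid \mathbf{X}_{1:n}],\, g^*(\mathbf{X}_{1:n})-f(\mathbf{X}_{1:n})\rangle\,]=0$, since $\mathbb{E}[\mathbf{Y}\mid \mathbf{X}_{1:n}]=g^*(\mathbf{X}_{1:n})$.

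Consequently
\[
\mathbb{E}\Vert \mathbf{Y}-f(\mathbf{X}_{1:n})\Vert^2 \;=\; \mathbb{E}\Vert \mathbf{Y}-g^*(\mathbf{X}_{1:n})\Vert^2 \;+\; \mathbb{E}\Vert g^*(\mathbf{X}_{1:n})-f(\mathbf{X}_{1:n})\Vert^2 \;\geq\; \mathbb{E}\Vert \mathbf{Y}-g^*(\mathbf{X}_{1:n})\Vert^2,
\]
with equality iff $f=g^*$ almost surely, which identifies $f^*=g^*=\mathbb{E}[\mathbf{Y}\mid \mathbf{X}_1,\ldots,\mathbf{X}_n]$ as the unique minimizer. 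No step here is genuinely hard; the only point requiring care is the implicit assumption $\mathbb{E}\Vert \mathbf{Y}\Vert^2<\infty$ (needed so that both sides are finite and the minimization is well-posed), together with the restriction to measurable $f$ so that the tower property applies — these I would state as mild regularity hypotheses at the outset. In the vector-valued case the inner product is taken coordinatewise and the argument applies componentwise, so the generalization is immediate.
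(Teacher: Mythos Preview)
Your proof is correct and follows essentially the same approach as the paper: add and subtract the conditional expectation, expand the squared norm, use the tower property to show the cross term vanishes, and conclude via the resulting Pythagorean decomposition. Your additional remarks on the $L^2$ integrability assumption and the componentwise extension to the vector-valued case are sensible clarifications that the paper leaves implicit.
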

\begin{proof}
    Denote $\mathbf{X}=(\mathbf{X}_1,\cdots,\mathbf{X}_n)$. We show the following decomposition first:
    \begin{equation}
        \mathbb{E}\Vert\mathbf{Y}-f(\mathbf{X})\Vert^2=\mathbb{E}\Vert\mathbf{Y}-\mathbb{E}[\mathbf{Y}|\mathbf{X}]\Vert^2+\mathbb{E}\left[\Vert\mathbb{E}[\mathbf{Y}|\mathbf{X}]-f(\mathbf{X})\Vert^2\right].
    \end{equation}
    We can compute $\mathbb{E}\Vert\mathbf{Y}-f(\mathbf{X})\Vert^2$ directly by
    \begin{align}
        \mathbb{E}\Vert\mathbf{Y}-f(\mathbf{X})\Vert^2&= \mathbb{E}\Vert\mathbf{Y}-\mathbb{E}[\mathbf{Y}|\mathbf{X}]+\mathbb{E}[\mathbf{Y}|\mathbf{X}]-f(\mathbf{X})\Vert^2\\
        &= \mathbb{E}\Vert\mathbf{Y}-\mathbb{E}[\mathbf{Y}|\mathbf{X}]\Vert^2+\mathbb{E}\left[\Vert\mathbb{E}[\mathbf{Y}|\mathbf{X}]-f(\mathbf{X})\Vert^2\right] \\&+ \mathbb{E}\langle\mathbf{Y}-\mathbb{E}[\mathbf{Y}|\mathbf{X}],\mathbb{E}[\mathbf{Y}|\mathbf{X}]-f(\mathbf{X})\rangle. 
    \end{align}
    Since
    \begin{equation}
        \mathbb{E}\langle\mathbf{Y}-\mathbb{E}[\mathbf{Y}|\mathbf{X}],\mathbb{E}[\mathbf{Y}|\mathbf{X}]-f(\mathbf{X})\rangle = \mathbb{E}\left[\mathbb{E}\langle\mathbf{Y}-\mathbb{E}[\mathbf{Y}|\mathbf{X}],\mathbb{E}[\mathbf{Y}|\mathbf{X}]-f(\mathbf{X})\rangle|\mathbf{X}\right]=0,
    \end{equation}
    we have
    \begin{align}
        \mathbb{E}\Vert\mathbf{Y}-f(\mathbf{X})\Vert^2&=\mathbb{E}\Vert\mathbf{Y}-\mathbb{E}[\mathbf{Y}|\mathbf{X}]\Vert^2+\mathbb{E}\left[\Vert\mathbb{E}[\mathbf{Y}|\mathbf{X}]-f(\mathbf{X})\Vert^2\right] \\&+ \mathbb{E}\langle\mathbf{Y}-\mathbb{E}[\mathbf{Y}|\mathbf{X}],\mathbb{E}[\mathbf{Y}|\mathbf{X}]-f(\mathbf{X})\rangle \\
        &=\mathbb{E}\Vert\mathbf{Y}-\mathbb{E}[\mathbf{Y}|\mathbf{X}]\Vert^2+\mathbb{E}\left[\Vert\mathbb{E}[\mathbf{Y}|\mathbf{X}]-f(\mathbf{X})\Vert^2\right]\\
        &\geq \mathbb{E}\Vert\mathbf{Y}-\mathbb{E}[\mathbf{Y}|\mathbf{X}_1,\cdots,\mathbf{X}_n]\Vert^2.
    \end{align}
    The inequality becomes equality if and only if $f(\mathbf{X}_1,\cdots,\mathbf{X}_n)=\mathbb{E}[\mathbf{Y}|\mathbf{X}_1,\cdots,\mathbf{X}_n].$
    So the the optimal approximation of $\mathbf{Y}$  based on $\{\mathbf{X}\}_{i=1}^n$ is $\mathbb{E}[\mathbf{Y}|\mathbf{X}_1,\cdots,\mathbf{X}_n]$, i.e.
    \begin{equation}
        f^*(\mathbf{X}_1,\cdots,\mathbf{X}_n)=\mathop{\arg\min}\limits_{f}\mathbb{E}\Vert\mathbf{Y}-f(\mathbf{X}_1,\cdots,\mathbf{X}_n)\Vert^2=\mathbb{E}[\mathbf{Y}|\mathbf{X}_1,\cdots,\mathbf{X}_n].
    \end{equation}
\end{proof}

\begin{proposition}
    The training objective function of Equivariant Diffusion Bridge is:
\begin{equation}\label{eqn:matching-objective-appendix}
    \mathcal{L}(\theta)=\mathbb{E}_{(z_0,z_1)\sim q_{\text{data}}(R^{t_0},R^{t_1}),\mathbf{R}^t\sim q_{\mathcal{R}}(\mathbf{R}^t,t|z_1,T;z_0,0)}\lambda(t)\Vert\mathbf{v}_{\theta}(\mathbf{R}^t, t; z_0)-\nabla_{\mathbf{R}^{t}} \operatorname{log} p_{\mathcal{R}}(z_1,T|\mathbf{R}^{t},t)\Vert^2,
\end{equation}
where $t\sim\mathcal{U}(0,T)$. Then the optimal parameter $\theta^*=\mathop{\arg\min}\limits_{\theta}\mathcal{L}(\theta)$ satisfies \begin{equation}
    \mathbf{v}_{\theta^*}(\mathbf{R}^t, t; z_0)=\mathbb{E}_{ q_{\mathcal{R}}(\mathbf{R}^{T},T|\mathbf{R}^{t},t;\mathbf{R}^0)}[\nabla_{\mathbf{R}^{t}} \operatorname{log} p_{\mathcal{R}}(\mathbf{R}^{T},T|\mathbf{R}^{t},t)|\mathbf{R}^0,\mathbf{R}^{t}].
\end{equation}
\end{proposition}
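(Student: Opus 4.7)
The plan is to recognize Eqn.~(\ref{eqn:matching-objective-appendix}) as a pointwise $L^2$ regression problem and invoke Lemma~\ref{lemma:conditional expectation}. Viewing $\mathbf{v}_{\theta}(\mathbf{R}^t, t; z_0)$ as an arbitrary function of its three arguments, the objective can be written as
\begin{equation}
\mathcal{L}(\theta)=\mathbb{E}_{t\sim\mathcal{U}(0,T)}\lambda(t)\,\mathbb{E}_{(z_0,z_1,\mathbf{R}^t)}\Vert\mathbf{v}_{\theta}(\mathbf{R}^t, t; z_0)-\nabla_{\mathbf{R}^{t}} \operatorname{log} p_{\mathcal{R}}(z_1,T|\mathbf{R}^{t},t)\Vert^2,
\end{equation}
where the inner expectation is under the joint sampling distribution of $(z_0,z_1,\mathbf{R}^t)$ induced by $q_{\text{data}}(z_0,z_1)\,q_{\mathcal{R}}(\mathbf{R}^t,t|z_1,T;z_0,0)$. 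Since $\lambda(t)\ge 0$ and $t$ enters only as a parameter, the minimization reduces for each fixed $t$ to a standard regression of the target $\nabla_{\mathbf{R}^{t}} \operatorname{log} p_{\mathcal{R}}(z_1,T|\mathbf{R}^{t},t)$ against functions of $(\mathbf{R}^t,z_0)$.

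Applying Lemma~\ref{lemma:conditional expectation} with $\mathbf{Y}=\nabla_{\mathbf{R}^{t}} \operatorname{log} p_{\mathcal{R}}(z_1,T|\mathbf{R}^{t},t)$ and $\mathbf{X}=(\mathbf{R}^t, z_0)$, the unique (up to null sets) minimizer is the conditional expectation
\begin{equation}
\mathbf{v}_{\theta^*}(\mathbf{R}^t,t;z_0)=\mathbb{E}\bigl[\nabla_{\mathbf{R}^{t}} \operatorname{log} p_{\mathcal{R}}(z_1,T|\mathbf{R}^{t},t)\,\big|\,\mathbf{R}^t,z_0\bigr],
\end{equation}
where the conditioning is under the sampling distribution above.

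The remaining step, and the main substantive one, is to identify the conditional law of $z_1$ given $(\mathbf{R}^t,z_0)$ with the bridge's forward transition $q_{\mathcal{R}}(\mathbf{R}^T,T|\mathbf{R}^t,t;\mathbf{R}^0,0)$ evaluated at $\mathbf{R}^0=z_0$, $\mathbf{R}^T=z_1$. By Theorem~\ref{thm:equivariant-diffusion-bridge-appendix}, the equivariant diffusion bridge started from $\mathbf{R}^0=z_0$ has endpoint distribution $q_{\mathcal{R}}(\mathbf{R}^T|\mathbf{R}^0=z_0)=q_{\text{data}}(z_1|z_0)$, so the joint density of $(\mathbf{R}^T,\mathbf{R}^t)$ given $\mathbf{R}^0=z_0$ factors in two equivalent ways:
\begin{equation}
q_{\text{data}}(z_1|z_0)\,q_{\mathcal{R}}(\mathbf{R}^t,t|z_1,T;z_0,0)=q_{\mathcal{R}}(\mathbf{R}^t,t|z_0,0)\,q_{\mathcal{R}}(\mathbf{R}^T=z_1,T|\mathbf{R}^t,t;z_0,0).
\end{equation}
The left-hand side is exactly the sampling density used in Eqn.~(\ref{eqn:matching-objective-appendix}), while Bayes' rule applied to the right-hand side shows that the conditional of $z_1$ given $(\mathbf{R}^t,z_0)$ is precisely $q_{\mathcal{R}}(\mathbf{R}^T,T|\mathbf{R}^t,t;z_0,0)$. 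Substituting yields the claimed identity.

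The only potential obstacle is this measure-theoretic identification of the two factorizations; once it is made explicit, the conclusion follows directly from the lemma. The symmetry and smoothness assumptions from Proposition~\ref{thm:doobh-transform} ensure all densities are well defined and positive, so Bayes' rule applies without subtleties.
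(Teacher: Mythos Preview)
Your proof is correct and follows the same approach as the paper: decompose $\mathcal{L}(\theta)$ into per-$t$ objectives with nonnegative weights and apply Lemma~\ref{lemma:conditional expectation}. In fact, you make explicit a step the paper's proof omits, namely the Bayes' rule identification showing that the conditional law of $z_1$ given $(\mathbf{R}^t,z_0)$ under the sampling distribution is exactly $q_{\mathcal{R}}(\mathbf{R}^T,T\mid\mathbf{R}^t,t;z_0,0)$, which is what justifies writing the minimizer with respect to that transition density.
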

\begin{proof}
    Let $\mathcal{L}(\theta)=\mathbb{E}_{t\sim\mathcal{U}(0,T)}\lambda(t)\mathcal{L}^t(\theta)$, where 
    \begin{equation}
        \mathcal{L}^t(\theta)=\mathbb{E}_{(z_0,z_1)\sim q_{\text{data}}(R^{t_0},R^{t_1}),\mathbf{R}^t\sim q_{\mathcal{R}}(\mathbf{R}^t,t|z_1,T;z_0,0)}\Vert\mathbf{v}_{\theta}(\mathbf{R}^t, t; z_0)-\nabla_{\mathbf{R}^{t}} \operatorname{log} p_{\mathcal{R}}(z_1,T|\mathbf{R}^{t},t)\Vert^2.
    \end{equation}
    Then by Lemma \ref{lemma:conditional expectation}, $\mathbf{v}_{\theta}(\mathbf{R}^t, t; z_0)=\mathbb{E}_{ q_{\mathcal{R}}(\mathbf{R}^{T},T|\mathbf{R}^{t},t;\mathbf{R}^0)}[\nabla_{\mathbf{R}^{t}} \operatorname{log} p_{\mathcal{R}}(\mathbf{R}^{T},T|\mathbf{R}^{t},t)|\mathbf{R}^0,\mathbf{R}^{t}]$ minimize $\mathcal{L}^t(\theta), \forall t\in[0,T]$. Since $\lambda(t)\geq0$, so the optimal parameter $\theta^*=\mathop{\arg\min}\limits_{\theta}\mathcal{L}(\theta)$ satisfies \begin{equation}
    \mathbf{v}_{\theta^*}(\mathbf{R}^t, t; z_0)=\mathbb{E}_{ q_{\mathcal{R}}(\mathbf{R}^{T},T|\mathbf{R}^{t},t;\mathbf{R}^0)}[\nabla_{\mathbf{R}^{t}} \operatorname{log} p_{\mathcal{R}}(\mathbf{R}^{T},T|\mathbf{R}^{t},t)|\mathbf{R}^0,\mathbf{R}^{t}], \forall t\in[0,T].
\end{equation}
\end{proof}

\subsection{Proof of Theorem \ref{thm:Chain-of-Equivariant-Diffusion-Bridges}}

\begin{theorem}[Chain of Equivariant Diffusion Bridges]
    Let $\{(\mathbf{R}_i^{t})_{t\in[0,T]}\}_{i\in[N-1]}$ denote a series of $N$ equivaraint diffusion bridges defined in Theorem \ref{thm:equivariant-diffusion-bridge}. For the $i$-th bridge $(\mathbf{R}_i^{t})_{t\in[0,T]}$, if we set (1) $h_{\mathcal{R}}^i(z,t;z_0)=\int p_{\mathcal{R}}(z',T|z,t)\frac{q_{\text{traj}}^{i+1}(z'|z_0)}{p_{\mathcal{R}}(z',T|z_0,0)}\mathrm{d}z'$; (2) $\mathbf{R}_0^{0}\sim q_{\text{traj}}^0(\tilde{R}^{0}),\mathbf{R}_i^{0}= \mathbf{R}^{T}_{i-1},\forall 0<i<N$, then the joint distribution $q_{\mathcal{R}}(\mathbf{R}^{0}_{0},\mathbf{R}^{T}_{0},\mathbf{R}^{T}_{1},\cdots,\mathbf{R}^{T}_{N-1})$ induced by $\{(\mathbf{R}^{t})_{t\in[0,T]}\}_{i\in[N-1]}$ equals to $q_{\text{traj}}(\tilde{R}^{0},...,\tilde{R}^{N})$. We call this process a chain of equivariant diffusion bridges.
\end{theorem}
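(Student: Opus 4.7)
My plan is to exploit the Markov structure on both sides and reduce the theorem to repeated application of the single-bridge result, Theorem \ref{thm:equivariant-diffusion-bridge}. First, I would invoke the Markov property of the trajectory distribution stated in Sec.~\ref{sec:traj-guidance} to factor
\begin{equation*}
q_{\text{traj}}(\tilde{R}^{0},\ldots,\tilde{R}^{N}) = q_{\text{traj}}^0(\tilde{R}^{0}) \prod_{i=1}^{N} q_{\text{traj}}^{i}(\tilde{R}^{i}\mid \tilde{R}^{i-1}),
\end{equation*}
and then identify endpoints through the bijection $\tilde{R}^0 \leftrightarrow \mathbf{R}_0^0$ and $\tilde{R}^{i+1} \leftrightarrow \mathbf{R}_i^T$ for $i=0,\ldots,N-1$. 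The goal becomes matching this factorization term by term against the joint law of the chain of bridges.

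Next, I would factor the chain-of-bridges law along the same grid. Because each $(\mathbf{R}_i^t)_{t\in[0,T]}$ is an SDE-driven Markov process that is coupled to the previous bridge only through the deterministic initialization $\mathbf{R}_i^0 = \mathbf{R}_{i-1}^T$, the conditional distribution of $\mathbf{R}_i^T$ given the past $\bigl(\mathbf{R}_0^0, \mathbf{R}_0^T, \ldots, \mathbf{R}_{i-1}^T\bigr)$ depends only on $\mathbf{R}_{i-1}^T$. This gives
\begin{equation*}
q_{\mathcal{R}}(\mathbf{R}_0^0, \mathbf{R}_0^T, \ldots, \mathbf{R}_{N-1}^T) = q_{\mathcal{R}}(\mathbf{R}_0^0) \prod_{i=0}^{N-1} q_{\mathcal{R}}^{i}(\mathbf{R}_i^T \mid \mathbf{R}_i^0 = \mathbf{R}_{i-1}^T),
\end{equation*}
with the convention $\mathbf{R}_{-1}^T := \mathbf{R}_0^0$.

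The key step is then to apply Theorem \ref{thm:equivariant-diffusion-bridge} bridge-by-bridge. For the $i$-th bridge, the chosen $h_{\mathcal{R}}^i(z,t;z_0) = \int p_{\mathcal{R}}(z',T\mid z,t)\frac{q_{\text{traj}}^{i+1}(z'\mid z_0)}{p_{\mathcal{R}}(z',T\mid z_0,0)}\mathrm{d}z'$ matches the template of Theorem \ref{thm:equivariant-diffusion-bridge} with the data-coupling $q_{\text{data}}(\cdot\mid z_0)$ replaced by $q_{\text{traj}}^{i+1}(\cdot\mid z_0)$. The theorem then guarantees that, conditionally on $\mathbf{R}_i^0 = z_i$, the endpoint $\mathbf{R}_i^T$ has law exactly $q_{\text{traj}}^{i+1}(\cdot \mid z_i)$. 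Combined with the initial condition $\mathbf{R}_0^0 \sim q_{\text{traj}}^0$, each factor in the chain's factorization matches the corresponding factor in the trajectory factorization, and the two joint distributions coincide.

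The main obstacle I anticipate is justifying the conditional-independence step that collapses $\bigl(\mathbf{R}_0^0, \mathbf{R}_0^T, \ldots, \mathbf{R}_{i-1}^T\bigr)$ down to $\mathbf{R}_{i-1}^T$ in the conditioning for $\mathbf{R}_i^T$. This needs a careful argument that the $h_{\mathcal{R}}^i$-transformed SDE driving the $i$-th bridge depends on the external history only via its deterministic initialization, so that the Markov kernel $q_{\mathcal{R}}^i(\mathbf{R}_i^T \mid \mathbf{R}_i^0)$ supplied by Theorem \ref{thm:equivariant-diffusion-bridge} is genuinely the right conditional. One can handle this either by invoking the strong Markov property at the junction times $t=iT$, or by computing the joint density explicitly via iterated conditioning and observing that the $h$-transform in each segment only sees its own initial point. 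Once this step is in place, the remainder reduces to term-by-term identification of the two factorizations.
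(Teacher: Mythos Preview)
Your proposal is correct and follows essentially the same route as the paper's proof: factor $q_{\text{traj}}$ via the Markov chain rule, factor the chain-of-bridges law via the Markov structure induced by the deterministic handoffs $\mathbf{R}_i^0=\mathbf{R}_{i-1}^T$, and then match factors term-by-term using Theorem~\ref{thm:equivariant-diffusion-bridge} on each segment. If anything you are more careful than the paper, which simply asserts the factorization of $q_{\mathcal{R}}(\mathbf{R}_0^0,\mathbf{R}_0^T,\ldots,\mathbf{R}_{N-1}^T)$ without the explicit conditional-independence discussion you outline.
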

\begin{proof}
    By Theorem~\ref{thm:equivariant-diffusion-bridge}, the transition density function of $(\mathbf{R}_i^{t})_{t\in[0,T]}$ satisfies $q_{\mathcal{R}}^{i}(\mathbf{R}_i^{T}|\mathbf{R}_i^{0})=q_{\text{traj}}^{i}(\mathbf{R}_i^{T}|\mathbf{R}_i^{0}),\forall 0\leq i\leq N-1$. The ground truth probability density function has the decomposition $q_{\text{traj}}^0(\tilde{R}^{0})\prod_{i=1}^N q_{\text{traj}}^{i}(\tilde{R}^{i}|\tilde{R}^{i-1})$. Then we use the boundary condition, $\mathbf{R}_0^{0}\sim q_{\text{traj}}^0(\tilde{R}^{0}),\mathbf{R}_i^{0}= \mathbf{R}^{T}_{i-1},\forall 0<i<N$, we have
    \begin{align}
        q(\mathbf{R}^{0}_{0},\mathbf{R}^{T}_{0},\mathbf{R}^{T}_{1},\cdots,\mathbf{R}^{T}_{N-1})&=q_{\mathcal{R}}^0(\mathbf{R}^{0}_{0})\prod_{i=1}^N q_{\mathcal{R}}^{i}(\mathbf{R}^{T}_{i}|\mathbf{R}^{T}_{i-1})\\
        &=q_{\mathcal{R}}^0(\mathbf{R}^{0}_{0})\prod_{i=1}^N q_{\mathcal{R}}^{i}(\mathbf{R}^{T}_{i}|\mathbf{R}^{0}_{i})\\
        &=q_{\text{traj}}^0(\mathbf{R}^{0}_{0})\prod_{i=1}^N q_{\text{traj}}^{i}(\mathbf{R}^{T}_{i}|\mathbf{R}^{0}_{i})\\
        &=q_{\text{traj}}(\mathbf{R}^{0}_{0},\mathbf{R}^{T}_{0},\mathbf{R}^{T}_{1},\cdots,\mathbf{R}^{T}_{N-1}).
    \end{align}
    So the joint distribution $q_{\mathcal{R}}(\mathbf{R}^{0}_{0},\mathbf{R}^{T}_{0},\mathbf{R}^{T}_{1},\cdots,\mathbf{R}^{T}_{N-1})$ induced by $\{(\mathbf{R}^{t})_{t\in[0,T]}\}_{i\in[N-1]}$ equals to $q_{\text{traj}}(\tilde{R}^{0},...,\tilde{R}^{N})$.
\end{proof}
\subsection{Objective of the Chain of Equivariant Diffusion Bridge}\label{Appendix: Objective of Chain of Equivariant Diffusion Bridge}
\begin{proposition}
    The training objective function of the Chain of Equivariant Diffusion Bridge is:
\begin{equation}\label{eqn:traj-matching-objective-appendix}
    \mathcal{L}'(\theta)=\mathbb{E}_{(z_0,...,z_N)\sim q_{\text{traj}}(\tilde{R}^0,...,\tilde{R}^N),t,\mathbf{R}_i^{t'}}\lambda(t)\Vert\mathbf{v}_{\theta}(\mathbf{R}_i^{t'}, t; z_{i})-\nabla_{\mathbf{R}_i^{t'}} \operatorname{log} p^i_{\mathcal{R}}(z_{i+1},T|\mathbf{R}_i^{t'},t')\Vert^2,
\end{equation}
where $t\sim\mathcal{U}(0,N\times T),i=\lfloor\frac{t}{T}\rfloor,t'=t-i\times T,\mathbf{R}_i^{t'}\sim q^i_{\mathcal{R}}(\mathbf{R}^{t'},t'|z_{i+1},T;z_{i},0)$.
Then the optimal parameter $\theta^*=\mathop{\arg\min}\limits_{\theta}\mathcal{L}'(\theta)$ satisfies \begin{equation}
    \mathbf{v}_{\theta^*}(\mathbf{R}_i^{t'}, t; z_i)=\mathbb{E}_{ q_{\mathcal{R}}^i(\mathbf{R}_i^{T},T|\mathbf{R}_i^{t'},t;\mathbf{R}_i^0)}[\nabla_{\mathbf{R}_i^{t}} \operatorname{log} p^i_{\mathcal{R}}(\mathbf{R}_i^{T},T|\mathbf{R}_i^{t},t)|\mathbf{R}_i^0,\mathbf{R}_i^{t}].
\end{equation}
\end{proposition}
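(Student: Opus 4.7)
The plan is to reduce the chain-of-bridges statement to the single-bridge proposition already established in Appendix B.5 by a change-of-variables on the time axis. First, I would partition the interval $[0,NT]$ into the $N$ segments $[iT,(i+1)T]$ for $i=0,\ldots,N-1$; conditionally on $t$ lying in the $i$-th segment, the shifted time $t'=t-iT$ is uniform on $[0,T]$ and each index $i$ is selected with probability $1/N$. Using the factorization $q_{\text{traj}}(\tilde R^0,\ldots,\tilde R^N)=q_{\text{traj}}^0(\tilde R^0)\prod_{j=1}^{N}q_{\text{traj}}^{j}(\tilde R^j|\tilde R^{j-1})$ and the fact that for a fixed $i$ the integrand depends only on $(z_i,z_{i+1})$ through the sampling of $\mathbf{R}_i^{t'}$ and the score $\nabla_{\mathbf{R}_i^{t'}}\log p^i_{\mathcal{R}}(z_{i+1},T|\mathbf{R}_i^{t'},t')$, I would marginalize out the unused coordinates to obtain
\begin{equation*}
\mathcal{L}'(\theta)=\frac{1}{N}\sum_{i=0}^{N-1}\mathbb{E}_{t'\sim\mathcal{U}(0,T)}\lambda(t'+iT)\,\mathcal{L}^{i,t'}(\theta),
\end{equation*}
where each $\mathcal{L}^{i,t'}(\theta)$ has exactly the form of the single-bridge per-time loss appearing in the proof of the previous proposition, with $z_i$ playing the role of the initial endpoint and $z_{i+1}$ the role of the terminal endpoint.

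Next, I would invoke Lemma~\ref{lemma:conditional expectation} pointwise. For each fixed pair $(i,t')$, apply the lemma with target random variable $\mathbf{Y}=\nabla_{\mathbf{R}_i^{t'}}\log p^i_{\mathcal{R}}(z_{i+1},T|\mathbf{R}_i^{t'},t')$ and conditioning variables $(\mathbf{R}_i^{t'},z_i)$, noting that $z_{i+1}$ is drawn from $q_{\text{traj}}^{i+1}(\cdot|z_i)=q_{\mathcal{R}}^{i+1}(\cdot|z_i)$ via the identification secured by Theorem~\ref{thm:Chain-of-Equivariant-Diffusion-Bridges}. The lemma identifies the per-$(i,t')$ $L^2$-minimizer as the conditional expectation $\mathbb{E}_{q_{\mathcal{R}}^i(\mathbf{R}_i^T,T|\mathbf{R}_i^{t'},t';z_i,0)}[\nabla_{\mathbf{R}_i^{t'}}\log p^i_{\mathcal{R}}(\mathbf{R}_i^T,T|\mathbf{R}_i^{t'},t')|z_i,\mathbf{R}_i^{t'}]$, which is precisely the claimed form.

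Finally, I would aggregate across $(i,t')$: since $\lambda(t)\ge 0$ and since the input $t=t'+iT$ passed to $\mathbf{v}_\theta$ uniquely determines both $i=\lfloor t/T\rfloor$ and $t'=t-iT$, a single parameterized function can realize all pointwise optima simultaneously without conflict, so the global minimizer of $\mathcal{L}'(\theta)$ agrees almost everywhere with the stated conditional expectation. The main obstacle I anticipate is the bookkeeping in the first step, namely justifying the marginalization over the $z_j$ with $j\notin\{i,i+1\}$ and verifying that the sampling distribution of $\mathbf{R}_i^{t'}$ indeed matches the bridge transition density $q_{\mathcal{R}}^i(\cdot|z_{i+1},T;z_i,0)$ demanded by the single-bridge proposition; this is a routine application of Fubini together with the chain-rule factorization above, but must be stated cleanly to make the reduction to Lemma~\ref{lemma:conditional expectation} transparent.
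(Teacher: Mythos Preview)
Your proposal is correct and follows essentially the same approach as the paper: decompose $\mathcal{L}'(\theta)$ over time, apply Lemma~\ref{lemma:conditional expectation} pointwise to identify the per-time $L^2$ minimizer as the claimed conditional expectation, and conclude via $\lambda(t)\ge 0$. The paper's proof is terser, writing $\mathcal{L}'(\theta)=\mathbb{E}_{t\sim\mathcal{U}(0,NT)}\lambda(t)\mathcal{L}'_t(\theta)$ and invoking the lemma directly at each $t$, whereas you make the reduction to the single-bridge case explicit and spell out the marginalization over the unused $z_j$; this extra bookkeeping is sound and clarifies what the paper leaves implicit, but it is not a different argument.
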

\begin{proof}
    Let $\mathcal{L}'(\theta)=\mathbb{E}_{t\sim\mathcal{U}(0,NT)}\lambda(t)\mathcal{L}'_t(\theta)$, where 
    \begin{equation}
        \mathcal{L}'_t(\theta)=\mathbb{E}_{(z_0,...,z_N)\sim q_{\text{traj}}(\tilde{R}^0,...,\tilde{R}^N),t,\mathbf{R}_i^{t'}}\Vert\mathbf{v}_{\theta}(\mathbf{R}_i^{t'}, t; z_{i})-\nabla_{\mathbf{R}_i^{t'}} \operatorname{log} p^i_{\mathcal{R}}(z_{i+1},T|\mathbf{R}_i^{t'},t')\Vert^2,
    \end{equation}
    where $t\sim\mathcal{U}(0,N\times T),i=\lfloor\frac{t}{T}\rfloor,t'=t-i\times T,\mathbf{R}_i^{t'}\sim q^i_{\mathcal{R}}(\mathbf{R}^{t'},t'|z_{i+1},T;z_{i},0)$.
    Then by Lemma \ref{lemma:conditional expectation}, $\mathbf{v}_{\theta^*}(\mathbf{R}_i^{t'}, t; z_i)=\mathbb{E}_{ q_{\mathcal{R}}^i(\mathbf{R}_i^{T},T|\mathbf{R}_i^{t'},t;\mathbf{R}_i^0)}[\nabla_{\mathbf{R}_i^{t}} \operatorname{log} p^i_{\mathcal{R}}(\mathbf{R}_i^{T},T|\mathbf{R}_i^{t},t)|\mathbf{R}_i^0,\mathbf{R}_i^{t}]$ minimize $\mathcal{L}'_t(\theta), \forall t\in[0,NT]$. Since $\lambda(t)\geq0$, so the optimal parameter $\theta^*=\mathop{\arg\min}\limits_{\theta}\mathcal{L}(\theta)$ satisfies \begin{equation}
    \mathbf{v}_{\theta^*}(\mathbf{R}_i^{t'}, t; z_i)=\mathbb{E}_{ q_{\mathcal{R}}^i(\mathbf{R}_i^{T},T|\mathbf{R}_i^{t'},t;\mathbf{R}_i^0)}[\nabla_{\mathbf{R}_i^{t}} \operatorname{log} p^i_{\mathcal{R}}(\mathbf{R}_i^{T},T|\mathbf{R}_i^{t},t)|\mathbf{R}_i^0,\mathbf{R}_i^{t}].
\end{equation}
\end{proof}

\subsection{Proof of Theorem~\ref{thm:kl-sde}}
In this paper, we choose the Brownian bridge as our matching target. Let's first recall the definition and properties of the Brownian bridge.
A Brownian bridge $(X_t)_{t\in [0,T]}$ with the initial position $X_0$ and the terminal position $X_T$ is given by the following SDE
\begin{equation}
    \mathrm{d}\mathbf{X}_t=\frac{\mathbf{X}_T-\mathbf{X}_t}{T-t}\mathrm{d}t+\sigma \mathrm{d}\mathbf{W}_t,
\end{equation}
where $\mathbf{W}_t$ is the standard wiener process.
The solution of the Brownian bridge is given by 
\begin{equation}
    \mathbf{X}_t\sim \mathcal{N}\left((1-t)\mathbf{X}_0+t\mathbf{X}_1,\sigma^2t(1-t)\right).
\end{equation}
Next, we recall the definition of the KL Divergence:
\begin{definition}[KL Divergence]
   The relative entropy (or Kullback–Leibler Divergence) $\operatorname{KL}(f||g)$
between two probability density functions $f(x)$ and $g(x)$ is defined by:
\begin{equation}
    \operatorname{KL}(f||g)=\int f(x) \operatorname{log} \frac{f(x)}{g(x)} \mathrm{d}x.
\end{equation}
In general, let  $\mathbb{P}$ and $\mathbb{Q}$ be two probability measures on space $\mathcal{X}$. Assume $\mathbb{P}$ is absolutely continuous with respect to $\mathbb{Q}$ then the Kullback–Leibler Divergence between $\mathbb{P}$ and $\mathbb{Q}$ is defined as follows
\begin{equation}
    \operatorname{KL}(\mathbb{P}||\mathbb{Q})=\int_{\mathcal{X}}  \operatorname{log} \frac{\mathrm{d}\mathbb{P}}{\mathrm{d}\mathbb{Q}} \mathrm{d}\mathbb{P},
\end{equation}
where $\frac{\mathrm{d}\mathbb{P}}{\mathrm{d}\mathbb{Q}}$ is the Radon–Nikodym derivative of $\mathbb{P}$ with respect to $\mathbb{Q}$.
\end{definition}

When we need to compute the KL divergence between the path measures associated with two SDEs, the Girsanov's theorem~\cite{leonard2012girsanov} is an useful tool to get the Radon–Nikodym derivative between the two measure. The precise statements are as follows.
\begin{theorem}[Girsanov's Theorem]\label{appendix:girsanov}
Let $\mathbf{W}_t$ be a $d$-dimensional Wiener process defined on $(\Omega, \mathcal{F},(\mathcal{F}_t), \mathbb{P})$. Let $\mathbf{H}_t$ be a d-dimensional $\mathcal{F}_t-$adapted process such that
\begin{equation}
    \int_0^T\Vert\mathbf{H}_t\Vert^2\mathrm{d}t<\infty, \mathbb{P}-a.s.
\end{equation}
Define 
\begin{equation}
    Z_t=\operatorname{exp}\left(\int_0^t \mathbf{H}_s\cdot \mathrm{d}\mathbf{W}_t - \frac{1}{2}\int_0^t \Vert\mathbf{H}_t\Vert^2\mathrm{d}s\right).
\end{equation}
Assume $Z_t$ is a martingale. Define the probability measure $\mathbb{Q}$ on $\mathcal{F}_T$ by 
\begin{equation}
    \mathrm{d}\mathbb{Q}=Z_T\mathrm{d}\mathbb{P}.
\end{equation}
Let $\mathbf{M}_t=\mathbf{W}_t-\int_0^t\mathbf{H}_s\mathrm{d}s$, then $\mathbf{M}_t$ is a $d-$dimensional Wiener process with respect to $\mathbb{Q}$.
\end{theorem}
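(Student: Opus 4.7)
My plan is to apply Girsanov's theorem to reduce $\operatorname{KL}(\mu_i^*\Vert\mu_\mathcal{R}^i)$ to a time integral of the squared drift difference between the two path measures, and then show that this integral is $O(T^3)$ per segment via a short-time expansion of the prior SDE's transition density. Since the implicit scaling of the theorem is that the total simulation horizon is fixed (so $T\asymp 1/N\to 0$ as $N\to\infty$), an $O(T^3)$ bound that is uniform in $i$ suffices to conclude.

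First I would identify the two path measures on the common segment $[iT,(i+1)T]$ (parameterized by $s\in[0,T]$) as Doob's $h$-transforms of reference SDEs sharing the same diffusion coefficient $\sigma$, both starting at $\tilde{R}^i$ and conditioned to terminate at $\tilde{R}^{i+1}$. By the practical implementation in Sec.~\ref{sec:implementation}, $\mu_\mathcal{R}^i$ is the $h$-transform of $\mathrm{d}\mathbf{R}^s=\sigma\,\mathrm{d}\mathbf{W}^s$, yielding the Brownian bridge drift $\tfrac{\tilde{R}^{i+1}-\mathbf{R}}{T-s}$. By Proposition~\ref{prop:doobh-example}, $\mu_i^*$ is the $h$-transform of the prior SDE $\mathrm{d}\tilde{\mathbf{R}}^s=-\nabla H^*_\mathcal{R}(\tilde{\mathbf{R}}^s)\,\mathrm{d}s+\sigma\,\mathrm{d}\tilde{\mathbf{W}}^s$, with drift $\mathbf{b}_i^*(\mathbf{R},s)=-\nabla H^*_\mathcal{R}(\mathbf{R})+\sigma^2\nabla_{\mathbf{R}}\log p^*(\tilde{R}^{i+1},T\mid\mathbf{R},s)$, where $p^*$ is the transition density of the prior SDE.

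Next I would apply Girsanov's theorem (Theorem~\ref{appendix:girsanov}) to get
\begin{equation*}
\operatorname{KL}(\mu_i^*\Vert\mu_\mathcal{R}^i)=\frac{1}{2\sigma^2}\,\mathbb{E}_{\mu_i^*}\!\left[\int_0^T\Bigl\|\mathbf{b}_i^*(\mathbf{R},s)-\frac{\tilde{R}^{i+1}-\mathbf{R}}{T-s}\Bigr\|^2\mathrm{d}s\right].
\end{equation*}
The core technical step, which I expect to be the main obstacle, is a short-time expansion of $\sigma^2\nabla_{\mathbf{R}}\log p^*(\tilde{R}^{i+1},T\mid\mathbf{R},s)$ with a remainder that is uniform over the trajectory. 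Under a mild assumption that $H^*_\mathcal{R}\in C^2$ with bounded first and second derivatives (and that the empirical trajectory remains in a compact region), a parametrix / Euler--Maruyama expansion yields $p^*(\tilde{R}^{i+1},T\mid\mathbf{R},s)\approx\mathcal{N}\bigl(\mathbf{R}-(T-s)\nabla H^*_\mathcal{R}(\mathbf{R}),\,\sigma^2(T-s)\mathbf{I}\bigr)$ with remainder of higher order in $T-s$. Differentiating the log density and multiplying by $\sigma^2$ gives $\sigma^2\nabla_{\mathbf{R}}\log p^*(\tilde{R}^{i+1},T\mid\mathbf{R},s)=\tfrac{\tilde{R}^{i+1}-\mathbf{R}}{T-s}+\nabla H^*_\mathcal{R}(\mathbf{R})+O(T-s)$. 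Combined with the $-\nabla H^*_\mathcal{R}(\mathbf{R})$ term, the Hamiltonian gradients cancel and $\mathbf{b}_i^*(\mathbf{R},s)=\tfrac{\tilde{R}^{i+1}-\mathbf{R}}{T-s}+O(T-s)$.

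Finally, plugging this drift difference into the Girsanov formula gives $\operatorname{KL}(\mu_i^*\Vert\mu_\mathcal{R}^i)\le C\int_0^T(T-s)^2\,\mathrm{d}s=O(T^3)$, with $C$ depending only on $\sigma$ and the $C^2$-bound on $H^*_\mathcal{R}$ and thus uniform in $i$. Taking $N\to\infty$ (so $T\to 0$) therefore forces $\max_i\operatorname{KL}(\mu_i^*\Vert\mu_\mathcal{R}^i)\to 0$. The hardest part is producing the uniform short-time remainder estimate for $\log p^*$---this is precisely the locus of the ``mild assumptions'' in the statement and is the standard technical ingredient that makes the cancellation of the Hamiltonian drift rigorous.
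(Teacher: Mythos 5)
The statement you were given is Girsanov's Theorem (Theorem~\ref{appendix:girsanov}); this is a classical result that the paper states and cites without giving its own proof, and your proposal does not prove it—you invoke it as a tool. What you have actually sketched is a proof of Theorem~\ref{thm:kl-sde}, the KL-convergence result for the chain of equivariant diffusion bridges. Read as such, your approach differs genuinely from the paper's. You apply Girsanov directly to the two \emph{conditioned} path measures on each segment, reducing the KL to $\frac{1}{2\sigma^2}\,\mathbb{E}_{\mu_i^*}\!\int_0^T\|\mathbf{b}_i^*-\mathbf{b}_{\text{bridge}}\|^2\,\mathrm{d}s$, and then attempt a short-time parametrix/Euler--Maruyama expansion of $\sigma^2\nabla\log p^*(\tilde R^{i+1},T\mid\mathbf{R},s)$ to extract the Brownian-bridge drift at leading order with cancellation of the $\nabla H^*_\mathcal{R}$ terms, yielding $O(T^3)$. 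The paper never differentiates $\log p^*$: it applies Girsanov to the \emph{unconditioned} prior against Brownian motion, uses It\^o's formula to rewrite the stochastic integral in the Girsanov factor as $\frac{1}{\sigma^2}\bigl(H^*_\mathcal{R}(\tilde{\mathbf{R}}^0)-H^*_\mathcal{R}(\tilde{\mathbf{R}}^T)\bigr)+\frac12\int\bigl(\nabla^2 H^*_\mathcal{R}-\|\nabla H^*_\mathcal{R}\|^2/\sigma^2\bigr)\mathrm{d}t$, disintegrates the Wiener measure over endpoints to get the Radon--Nikodym derivative of the conditioned prior against the Brownian bridge, and bounds $\operatorname{KL}\le\frac{h(0)-h(T)}{\sigma^2}+\frac{CT}{2}$ using the one-sided pointwise assumption $\nabla^2 H^*_\mathcal{R}-\|\nabla H^*_\mathcal{R}\|^2/\sigma^2<C$ together with continuity of $h(t)=\mathbb{E}[H^*_\mathcal{R}(\tilde{\mathbf{R}}^t)]$.

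The gap in your route is the one you yourself flag as the obstacle. For a nonlinear drift, $p^*$ is not Gaussian; its score has the singular piece $\frac{\tilde R^{i+1}-\mathbf{R}}{\sigma^2(T-s)}$ plus corrections whose uniformity in $(\mathbf{R},\tilde R^{i+1})$ is precisely hardest to control as $s\to T$, which is exactly where the $\mu_i^*$-conditioned path concentrates. $H^*_\mathcal{R}\in C^2$ with bounded derivatives gives a formal parametrix, but not a usable uniform remainder for the \emph{expectation of the squared} drift difference without a quantitative tail estimate for the conditioned bridge; you have not supplied that, and it is not an easy lemma. The paper's route sidesteps this entirely by expressing the Girsanov exponent in terms of $H^*_\mathcal{R}$ rather than $\log p^*$, converting the requisite regularity into a pointwise bound on $\nabla^2 H^*_\mathcal{R}-\|\nabla H^*_\mathcal{R}\|^2/\sigma^2$ that is substantially easier to verify. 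Your bound, if it could be made rigorous, would be sharper ($O(T^3)$ versus a continuity term plus $O(T)$), but the burden of proof is correspondingly heavier and is not currently discharged.
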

In practice, the condition that $Z_t$ is a martingale is hard to vertify. So the condition is often replaced by the Novikov's condition
\begin{equation}
    \mathbb{E}\left[\operatorname{exp}\left(\frac{1}{2}\int_0^T\Vert\mathbf{H}_t\Vert^2\mathrm{d}t\right)\right]<\infty.
\end{equation}
For more discussions and applications of the Girsanov's theorem, please see \cite{sarkka2019applied,oksendal2003stochastic,eberle2015stochastic}. Now we can give the precise assumptions and proof of Theorem \ref{thm:kl-sde} using the properties of Brownian Bridge and Theorem \ref{appendix:girsanov}.

\begin{theorem}\label{thm:kl-sde-appendix}
    Assume $(\tilde{R}^i)_{i\in[N]}$ is sampled by simulating a prior SDE on geometric states $\mathrm{d}\tilde{\mathbf{R}}^{t}=-\nabla H^*_{\mathcal{R}}(\tilde{\mathbf{R}}^{t})\mathrm{d}t+\sigma\mathrm{d}\tilde{\mathbf{W}}^t$. Let $\mu_i^*$ denote the path measure of this prior SDE when $t\in[iT,(i+1)T]$. Building upon $(\tilde{R}^i)_{i\in[N]}$, let $\{\mu_{\mathcal{R}}^i\}_{i\in[N-1]}$ denote the path measure of our chain of equivariant diffusion bridges. Assume $\{\mu_{\mathcal{R}}^i\}_{i\in[N-1]}$ is composed of chain of the Brownian Bridge. Assume the total time is $NT=1$. Under the following assumptions:
    \begin{itemize}
        \item $H^*_{\mathcal{R}}(\cdot): \mathbb{R}^d\to \mathbb{R}$ is a scalar function with continuous second-order partial derivative;
        \item The drift function is Lipschitz: there exist a constant $L$ such that $$\Vert\nabla H^*_{\mathcal{R}}(x)-\nabla H^*_{\mathcal{R}}(y)\Vert\leq L\Vert x-y\Vert, \forall x, y\in \mathbb{R}^d ;$$
        \item $H^*_{\mathcal{R}}(\cdot)$ satisfies $\Vert\nabla H^*_{\mathcal{R}}(x)\Vert\leq K(1+\Vert x\Vert), \forall x\in\mathbb{R}^d$;
        \item  $\mathbb{E}\Vert\tilde{\mathbf{R}}^{t}\Vert^2<M, \forall t\in[0,NT]$;
        \item  $h(t)=\mathbb{E}[H^*_\mathcal{R}(\tilde{\mathbf{R}}^{t})]$ is a continuous function on $t\in[0,NT]$;
        \item  The Novikov's condition: $$\mathbb{E}\left[\operatorname{exp}\left(\frac{1}{2}\int_0^{NT}\Vert \nabla H^*_{\mathcal{R}}(\tilde{\mathbf{W}}^t)\Vert^2\mathrm{d}t\right)\right]<\infty;$$
        \item The function $H^*_{\mathcal{R}}$ satisfies the following regulaity condition: there exist a constant $C$ such that $\nabla^2H^*_{\mathcal{R}}(x) - \Vert \nabla H^*_{\mathcal{R}}(x)\Vert^2/\sigma^2 <C, \forall x\in\mathbb{R}^d$;
    \end{itemize}
      then we have $\mathop{\lim}\limits_{N\to \infty}\mathop{\max}\limits_{i}\operatorname{KL}(\mu^*_i||\mu^i_{\mathcal{R}})=0$.
\end{theorem}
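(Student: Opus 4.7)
My plan is to apply Girsanov's theorem to reduce the path-space KL divergence to an integrated drift comparison, then control that comparison through short-time heat-kernel asymptotics for the prior SDE.

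First, I would view both $\mu_i^*$ and $\mu_\mathcal{R}^i$ as joint distributions over the path on $[iT,(i+1)T]$ together with the endpoint pair $(\tilde{R}^i,\tilde{R}^{i+1})$. Under both, the endpoints have the same marginal law (both are induced by the prior SDE, since the chain of bridges is conditioned on sampled trajectories), so by the chain rule the problem reduces to the conditional KL divergence given $(\tilde{R}^i,\tilde{R}^{i+1})$. The conditional $\mu_i^*$ is the Doob $h$-transform of the prior SDE with $h(x,t)=p^*(\tilde{R}^{i+1},(i+1)T\mid x,t)$, while the conditional $\mu_\mathcal{R}^i$ is the Brownian bridge. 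Both conditional laws are diffusions with the common coefficient $\sigma$ and identical initial condition $X_{iT}=\tilde{R}^i$, so Girsanov's theorem (justified by the Novikov hypothesis) yields
\[
\operatorname{KL}(\mu_i^*\,\|\,\mu_\mathcal{R}^i)=\frac{1}{2\sigma^2}\,\mathbb{E}_{\mu_i^*}\!\left[\int_{iT}^{(i+1)T}\bigl\|b_i^*(t,X_t)-b_\mathcal{R}^i(t,X_t)\bigr\|^2\,dt\right],
\]
with $b_i^*(t,x)=-\nabla H^*_\mathcal{R}(x)+\sigma^2\nabla_x\log p^*(\tilde{R}^{i+1},(i+1)T\mid x,t)$ and $b_\mathcal{R}^i(t,x)=(\tilde{R}^{i+1}-x)/((i+1)T-t)$.

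Next, I would expand the gradient of $\log p^*$ using short-time heat-kernel asymptotics, which are available because $H^*_\mathcal{R}$ is $C^2$ with a Lipschitz gradient of linear growth. With $s=(i+1)T-t$, the standard Gaussian leading term gives $p^*(y,t+s\mid x,t)=(2\pi\sigma^2 s)^{-d/2}\exp\!\bigl(-\|y-x+s\nabla H^*_\mathcal{R}(x)\|^2/(2\sigma^2 s)\bigr)(1+O(s))$, so that
\[
\sigma^2\nabla_x\log p^*(\tilde{R}^{i+1},(i+1)T\mid x,t)=\frac{\tilde{R}^{i+1}-x}{s}+\nabla H^*_\mathcal{R}(x)+\varepsilon(x,t,s).
\]
The first two terms cancel $b_\mathcal{R}^i(t,x)+\nabla H^*_\mathcal{R}(x)$ exactly, so $b_i^*-b_\mathcal{R}^i=\varepsilon$ is a remainder built from $\nabla^2H^*_\mathcal{R}$ acting on the bridge displacement $\tilde{R}^{i+1}-X_t$, together with $O(s)$ corrections whose coefficients are controlled precisely by the assumption $\nabla^2H^*_\mathcal{R}-\|\nabla H^*_\mathcal{R}\|^2/\sigma^2\le C$.

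To finish, I would combine the Lipschitz and linear-growth bounds on $\nabla H^*_\mathcal{R}$, the uniform second-moment bound $\mathbb{E}\|\tilde{\mathbf{R}}^t\|^2\le M$, and the fact that under the bridge the displacement $\tilde{R}^{i+1}-X_t$ concentrates at size $O(\sqrt{\sigma^2 s})$ in $L^2(\mu_i^*)$ to obtain $\mathbb{E}_{\mu_i^*}\|\varepsilon(X_t,t,s)\|^2\le C' s$ uniformly in $i$. Integrating over $t\in[iT,(i+1)T]$ gives $\operatorname{KL}(\mu_i^*\|\mu_\mathcal{R}^i)\le C''T^2=C''/N^2$, and the maximum over $i$ still vanishes as $N\to\infty$. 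The hard part is making the heat-kernel expansion rigorous uniformly in $i$ near the singular endpoint $t\uparrow(i+1)T$; a cleaner route I would pursue is to bypass the pointwise expansion and instead apply It\^o's formula to $\log p^*$ using the backward Kolmogorov equation it satisfies, which represents $b_i^*-b_\mathcal{R}^i$ directly as a martingale plus bounded-variation terms whose $L^2$ norms are controlled uniformly by the Lipschitz and regularity hypotheses listed in the statement.
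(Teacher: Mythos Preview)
Your reduction via the chain rule to a conditional KL given matched endpoints is correct and is exactly how the paper begins. After that, the two routes diverge substantially.

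You propose to compute the conditional KL via the Girsanov drift-squared formula between the Doob-conditioned prior SDE and the Brownian bridge, then bound $\|b_i^*-b_{\mathcal R}^i\|$ through a short-time heat-kernel expansion of $\nabla_x\log p^*$. The formal cancellation you describe is right, and the claimed $O(T^2)$ rate is plausible, but the step you yourself flag as ``the hard part'' is genuinely hard and is not carried out: both bridge drifts blow up like $1/((i{+}1)T-t)$ at the terminal time, and obtaining a uniform $L^2(\mu_i^*)$ bound on the remainder $\varepsilon$ across this singularity requires analysis that your proposal only gestures at. Your fallback of applying It\^o to $\log p^*$ via the backward Kolmogorov equation is left as a suggestion rather than an argument.

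The paper sidesteps the singularity entirely by never writing the bridge drifts. Instead it applies Girsanov to the \emph{unconditional} pair (prior SDE versus scaled Brownian motion $\sigma\tilde W$), giving $d\mu_0^*=Z_T\,d\upsilon$ with the usual exponential martingale $Z_T$. It then applies It\^o's formula to the potential $H^*_{\mathcal R}$ itself---not to $\log p^*$---which converts the stochastic integral $\int\nabla H^*\cdot d\tilde W$ into the boundary difference $\tfrac{1}{\sigma^2}\bigl(H^*(\sigma\tilde W^0)-H^*(\sigma\tilde W^T)\bigr)$ plus $\tfrac12\int_0^T\bigl(\nabla^2 H^*-\|\nabla H^*\|^2/\sigma^2\bigr)\,dt$. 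After disintegrating on the endpoints, the conditional Radon--Nikodym derivative contains only endpoint values of $H^*$, a density ratio $p_T/p_0^*$, and an integral of the globally bounded combination $\nabla^2 H^*-\|\nabla H^*\|^2/\sigma^2\le C$ (which is precisely why that hypothesis appears in the statement). Taking $\log$ and expectation yields $\operatorname{KL}(\mu_0^*\|\mu_{\mathcal R}^0)\le (h(0)-h(T))/\sigma^2+CT/2$, which vanishes as $T\to 0$ by the assumed continuity of $h$. No heat-kernel asymptotics or singular-drift analysis are needed.
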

\begin{proof}
    Let $p^*$ be the probability density function associated with the ground truth SDE $\mathrm{d}\tilde{\mathbf{R}}^{t}=\mathbf{f}^*_{\mathcal{R}}(\tilde{\mathbf{R}}^{t},t)\mathrm{d}t+\sigma\mathrm{d}\tilde{\mathbf{W}}^t, \tilde{\mathbf{R}}^{0}=\mathbf{R}^{0}$.
Let $\{(\mathbf{R}_i^{t})_{t\in[0,T]}\}_{i\in[N-1]}$ denote a series of $N$ equivaraint diffusion bridges defined in Theorem \ref{thm:Chain-of-Equivariant-Diffusion-Bridges}. Then by theorem ~\ref{thm:Chain-of-Equivariant-Diffusion-Bridges}, $q_{\mathcal{R}}(\mathbf{R}^{0}_{0},\mathbf{R}^{T}_{0},\mathbf{R}^{T}_{1},\cdots,\mathbf{R}^{T}_{N-1})$ induced by $\{(\mathbf{R}^{t})_{t\in[0,T]}\}_{i\in[N-1]}$ equals to $p^*_{\mathcal{R}}(\mathbf{R}^{0}_{0},\mathbf{R}^{T}_{0},\mathbf{R}^{T}_{1},\cdots,\mathbf{R}^{T}_{N-1})$. Additionally, the conditional probability density function $q_{\mathcal{R}}(\mathbf{R}^{t}_{i}|\mathbf{R}^{T}_{i},\mathbf{R}^{0}_{i})$ , for $iT\leq t<(i+1)T$, is associated with the Brownian bridge  
\begin{equation}
    \mathrm{d}\mathbf{R}^{t}_{i}=\frac{\mathbf{R}^{T}_{i}-\mathbf{R}^{t}_{i}}{T-t'}\mathrm{d}t'+\sigma \mathrm{d}\mathbf{W}_{t},
\end{equation}
where $t'=t-iT$. Then by the chain rule of KL divergence 
\begin{align}
    \operatorname{KL}(\mu^*_i||\mu_{\mathcal{R}}^i)=&\operatorname{KL}(p^*_i(\tilde{\mathbf{R}}^{(i+1)T},\tilde{\mathbf{R}}^{iT})||q_{\mathcal{R}}^i(\tilde{\mathbf{R}}^{(i+1)T},\tilde{\mathbf{R}}^{iT})+\\&\mathbb{E}_{p^*_i(\tilde{\mathbf{R}}^{(i+1)T},\tilde{\mathbf{R}}^{iT})}\left[\operatorname{KL}(\mu^*_i(\cdot|\tilde{\mathbf{R}}^{(i+1)T},\tilde{\mathbf{R}}^{iT})||\mu_{\mathcal{R}}^i(\cdot|\tilde{\mathbf{R}}^{(i+1)T},\tilde{\mathbf{R}}^{iT}))\right].
\end{align}

Since $p^*_i(\tilde{\mathbf{R}}^{(i+1)T},\tilde{\mathbf{R}}^{iT})=q_{\mathcal{R}}^i(\tilde{\mathbf{R}}^{(i+1)T},\tilde{\mathbf{R}}^{iT})$, we have 
\begin{equation}
    \operatorname{KL}(\mu^*_i||\mu_{\mathcal{R}}^i)=\mathbb{E}_{p^*_i(\tilde{\mathbf{R}}^{(i+1)T},\tilde{\mathbf{R}}^{iT})}\left[\operatorname{KL}(\mu^*_i(\cdot|\tilde{\mathbf{R}}^{(i+1)T},\tilde{\mathbf{R}}^{iT})||\mu_{\mathcal{R}}^i(\cdot|\tilde{\mathbf{R}}^{(i+1)T},\tilde{\mathbf{R}}^{iT}))\right].
\end{equation}
Since the prior SDE is time homogeneous, we can only consider the case $i=0$ without loss of generality. Let $\upsilon$ be the path measure of the Brownian motion $\sigma\tilde{\mathbf{W}}^t$ on space $\mathcal{R}$. Since the condition of Theorem \ref{appendix:girsanov} is satisfied, then we can use Theorem \ref{appendix:girsanov} and get
\begin{equation}
    \mathrm{d}\mu^*_0(\cdot|\tilde{\mathbf{R}}^{0})=\operatorname{exp}\left(-\frac{1}{\sigma}\int_0^T \nabla H^*_{\mathcal{R}}(\sigma\tilde{\mathbf{W}}^t)\cdot \mathrm{d}\tilde{\mathbf{W}}^t - \frac{1}{2\sigma^2}\int_0^T \Vert \nabla H^*_{\mathcal{R}}(\sigma\tilde{\mathbf{W}}^t)\Vert^2\mathrm{d}t\right) \mathrm{d}\upsilon(\cdot|\tilde{\mathbf{R}}^{0}).
\end{equation}
Then we can use the Ito's formula (Theorem \ref{appendix: ito}) to simplify the expression
\begin{equation}
    \resizebox{1.0\hsize}{!}{$\mathrm{d}\mu^*_0(\cdot|\tilde{\mathbf{R}}^{0})=\operatorname{exp}\left(\frac{1}{\sigma^2}(H^*_{\mathcal{R}}(\sigma\tilde{\mathbf{W}}^0)-H^*_{\mathcal{R}}(\sigma\tilde{\mathbf{W}}^T)) + \frac{1}{2}\int_0^T (\nabla^2H^*_{\mathcal{R}}(\sigma\tilde{\mathbf{W}}^t) - \frac{1}{\sigma^2}\Vert \nabla H^*_{\mathcal{R}}(\sigma\tilde{\mathbf{W}}^t)\Vert^2)\mathrm{d}t\right) \mathrm{d}\upsilon(\cdot|\tilde{\mathbf{R}}^{0}).$}
\end{equation}
To simplify our notation, we denote
\begin{equation}
    Z_T=\operatorname{exp}\left(\frac{1}{\sigma^2}(H^*_{\mathcal{R}}(\sigma\tilde{\mathbf{W}}^0)-H^*_{\mathcal{R}}(\sigma\tilde{\mathbf{W}}^T)) + \frac{1}{2}\int_0^T (\nabla^2H^*_{\mathcal{R}}(\sigma\tilde{\mathbf{W}}^t) - \frac{1}{\sigma^2}\Vert \nabla H^*_{\mathcal{R}}(\sigma\tilde{\mathbf{W}}^t)\Vert^2)\mathrm{d}t\right).
\end{equation}
Let $F, g$ be measurable functions on $C[0,T], \mathbb{R}^d$, respectively. Then by the disintegration of Wiener measure into pinned Wiener measures (path measure of the Brownian Bridge), we have
\begin{equation}
    \mathbb{E}_{\mu_0^*(\cdot|\tilde{\mathbf{R}}^{0})}[Fg(\tilde{\sigma\mathbf{W}}^T)]=\mathbb{E}_{\upsilon(\cdot|\tilde{\mathbf{R}}^{0})}[Fg(\sigma\tilde{\mathbf{W}}^T)Z_T] = \int \mathbb{E}_{\upsilon(\cdot|\tilde{\mathbf{R}}^{0},\tilde{\mathbf{R}}^{T}=x)}[FZ_T] g(x) p_T(x|\tilde{\mathbf{R}}^{0}) \mathrm{d}x,
\end{equation}
where $p_T(x|\tilde{\mathbf{R}}^{0})$ is the transition density function of $\sigma\tilde{\mathbf{W}}^t$. Let $F=1$, we get
\begin{equation}
     \int \mathbb{E}_{\upsilon(\cdot|\tilde{\mathbf{R}}^{0},\tilde{\mathbf{R}}^{T}=x)}[Z_T] g(x) p_T(x|\tilde{\mathbf{R}}^{0}) \mathrm{d}x = \int  g(x) p_0^*(x|\tilde{\mathbf{R}}^{0}) \mathrm{d}x. 
\end{equation}
So we have $\mathbb{E}_{\upsilon(\cdot|\tilde{\mathbf{R}}^{0},\tilde{\mathbf{R}}^{T}=x)}[Z_T] = p_0^*(x|\tilde{\mathbf{R}}^{0})/p_T(x|\tilde{\mathbf{R}}^{0})$. Let $g=1$, then we get
\begin{equation}
    \int \mathbb{E}_{\mu_0^*(\cdot|\tilde{\mathbf{R}}^{0},\tilde{\mathbf{R}}^{T}=x)}[F] p_0^*(x|\tilde{\mathbf{R}}^{0}) \mathrm{d}x = \int  \mathbb{E}_{\upsilon(\cdot|\tilde{\mathbf{R}}^{0},\tilde{\mathbf{R}}^{T}=x)}[FZ_T] p_T(x|\tilde{\mathbf{R}}^{0}) \mathrm{d}x. 
\end{equation}
So we can conclude that 
\begin{equation}
\resizebox{1.0\hsize}{!}{$
    \frac{\mathrm{d}\mu_0^*(\cdot|\tilde{\mathbf{R}}^{0},\tilde{\mathbf{R}}^{T})}{\mathrm{d}\upsilon(\cdot|\tilde{\mathbf{R}}^{0},\tilde{\mathbf{R}}^{T})} = \frac{p_T(\tilde{\mathbf{R}}^{T}|\tilde{\mathbf{R}}^{0})}{p_0^*(\tilde{\mathbf{R}}^{T}|\tilde{\mathbf{R}}^{0})}\cdot \frac{\operatorname{exp}\left(\frac{1}{\sigma^2}(H^*_{\mathcal{R}}(\tilde{\mathbf{R}}^{0})\right)}{\operatorname{exp}\left(\frac{1}{\sigma^2}(H^*_{\mathcal{R}}(\tilde{\mathbf{R}}^{T})\right)}\operatorname{exp}\left( \frac{1}{2}\int_0^T (\nabla^2H^*_{\mathcal{R}}(\cdot) - \frac{1}{\sigma^2}\Vert \nabla H^*_{\mathcal{R}}(\cdot)\Vert^2)\mathrm{d}t\right).
    $}
\end{equation}
Note that $\mu_{\mathcal{R}}^i(\cdot|\tilde{\mathbf{R}}^{0},\tilde{\mathbf{R}}^{T}) = \upsilon(\cdot|\tilde{\mathbf{R}}^{0},\tilde{\mathbf{R}}^{T})$. Now we can calculate the KL divergence by
\begin{align}
    \operatorname{KL}(\mu^*_0||\mu_{\mathcal{R}}^0)&=\mathbb{E}_{p^*_0(\tilde{\mathbf{R}}^{T},\tilde{\mathbf{R}}^{0})}\left[\operatorname{KL}(\mu^*_0(\cdot|\tilde{\mathbf{R}}^{T},\tilde{\mathbf{R}}^{0})||\mu_{\mathcal{R}}^0(\cdot|\tilde{\mathbf{R}}^{T},\tilde{\mathbf{R}}^{0}))\right]\\
    &\leq \mathbb{E}_{p^*_0(\tilde{\mathbf{R}}^{T},\tilde{\mathbf{R}}^{0})}\left[\operatorname{log}\left( \frac{p_T(\tilde{\mathbf{R}}^{T}|\tilde{\mathbf{R}}^{0})}{p_0^*(\tilde{\mathbf{R}}^{T}|\tilde{\mathbf{R}}^{0})}\cdot \frac{\operatorname{exp}\left(\frac{1}{\sigma^2}(H^*_{\mathcal{R}}(\tilde{\mathbf{R}}^{0})\right)}{\operatorname{exp}\left(\frac{1}{\sigma^2}(H^*_{\mathcal{R}}(\tilde{\mathbf{R}}^{T})\right)}\right)\right] + \frac{CT}{2} \\
    &= \mathbb{E}_{p^*_0(\tilde{\mathbf{R}}^{0},\tilde{\mathbf{R}}^{t})}\left[\operatorname{log}\left(\frac{p_T(\tilde{\mathbf{R}}^{T}|\tilde{\mathbf{R}}^{0})}{p_0^*(\tilde{\mathbf{R}}^{T}|\tilde{\mathbf{R}}^{0})}\right)\right] + \mathbb{E}[\frac{1}{\sigma^2}H^*_{\mathcal{R}}(\tilde{\mathbf{R}}^{0})]-\mathbb{E}[\frac{1}{\sigma^2}H^*_{\mathcal{R}}(\tilde{\mathbf{R}}^{T})] +\frac{CT}{2}\\
    &=-\mathbb{E}_{p^*_0(\tilde{\mathbf{R}}^{0})}\operatorname{KL}(p_0^*(\tilde{\mathbf{R}}^{T}|\tilde{\mathbf{R}}^{0})||p_T(\tilde{\mathbf{R}}^{T}|\tilde{\mathbf{R}}^{0}))+\frac{h(0)-h(T)}{\sigma^2}+\frac{CT}{2}\\
    &\leq \frac{h(0)-h(T)}{\sigma^2}+\frac{CT}{2}
    .
\end{align}
When $N\to\infty$, $T=\frac{1}{N}\to 0$. Since $h(t)$ is continuous by our assumption, then $\operatorname{KL}(\mu^*_0||\mu_{\mathcal{R}}^0)\to0$.
\end{proof}

\section{Derivation of Practical Objective Function}\label{Appendix:practical}
In this subsection, we show the implementation details of our framework. We set $T=1$ in all the experiments.

\textbf{Matching objective.} We design the SDE on geometric states in Proposition \ref{thm:equivariant-diffusion-process} to be:
\begin{equation}
    \mathrm{d}\mathbf{R}^{t}=\sigma\mathrm{d}\mathbf{W}^t,\quad \textit{with transition density}\quad p_{\mathcal{R}}(z',t'|z,t)=\mathcal{N}(z_0,\sigma^2(t'-t)\mathbf{I})
\end{equation}
The explicit form of the objective is 
\begin{equation}
    \nabla_{\mathbf{R}^{t}} \operatorname{log} p_{\mathcal{R}}(z_1,1|\mathbf{R}^{t},t)=\nabla_{\mathbf{R}^{t}} \operatorname{log}\mathcal{N}(z_0,\sigma^2(1-t)\mathbf{I})=\frac{z_1-\mathbf{R}^t}{\sigma^2(1-t)}
\end{equation} 
Then the h-transformed SDE becomes
\begin{equation}
\mathrm{d}\mathbf{R}^{t}=\frac{\mathbf{R}^{1}-\mathbf{R}^{t}}{1-t}\mathrm{d}t+\sigma\mathrm{d}\mathbf{W}^t,
\end{equation}
which is known as the Brownian bridge.
The corresponding h-transformed density is 
\begin{equation}
    q_{\mathcal{R}}(\mathbf{R}^t,t|z_1,1;z_0,0)=\mathcal{N}(tz_1+(1-t)z_0,\sigma^2t(1-t)\mathbf{I}).
\end{equation}

In practice, we do not use $q_{\mathcal{R}}(\mathbf{R}^0,0|z_1,1;z_0,0)=\delta(\mathbf{R}^0-z_0)$ as our initial distribution. We use $q_{\mathcal{R}}(\mathbf{R}^0,0|z_1,1;z_0,0)=\mathcal{N}(z_0,\sigma^2\mathbf{I})$ instead. Since the solution of the Brownian bridge is given by
\begin{equation}
    \mathbf{R}^t=(1-t)\mathbf{R}^0+t\mathbf{R}^1+\sigma\sqrt{t(1-t)}\mathbf{Z},
\end{equation} 
where $\mathbf{Z}\sim \mathcal{N}(0,\mathbf{I})$, then the marginal distribution  of $\mathbf{R}^t$ becomes $\mathcal{N}((1-t)z_0+tz_1,(1-t)\sigma^2\mathbf{I})$. We use this distribution to sample geometric state $\mathbf{R}^t$ in the training stage.

\textbf{Trajectory guidance.}  Similarly, we set $T=\frac{1}{N}$, $p_{\mathcal{R}}^i(z_{i+1},T|\mathbf{R}^{t'},t')=\mathcal{N}(\mathbf{R}^{t'},\sigma_i^2(T$$-$$t')\mathbf{I})$ when we use the trajectory guidance. So the h-transformed SDE becomes 
\begin{equation}
    \mathrm{d}\mathbf{R}_i^{t}=\frac{\mathbf{R}_i^{T}-\mathbf{R}_i^{t}}{T-t}\mathrm{d}t+\sigma_i\mathrm{d}\mathbf{W}^t,
\end{equation}
which is a Brownian bridge with $T=\frac{1}{N}$. Then associated density function is 
\begin{equation}
    q_{\mathcal{R}}^i(\mathbf{R}^{t'},t'|z_{i+1},T;z_i,0)=\mathcal{N}(\frac{t'}{T}z_{i+1}+\frac{T-t'}{T}z_i,\sigma_i^2\frac{t'(T-t')}{T^2}\mathbf{I}).
\end{equation}
Additionally, we set $\sigma_i$ decays linearly with respect to $\frac{i}{N}$, i.e. $\sigma_i=\frac{N-i}{N}\sigma$, where $\sigma$ is a hyperparameter. Again, in training stage, we set $q_{\mathcal{R}}^i(\mathbf{R}_i^0,0|z_1,1;z_0,0)=\mathcal{N}(z_0,\sigma_i^2\mathbf{I})$ as initial distribution, and the terminal distribution is $q_{\mathcal{R}}^i(\mathbf{R}_i^0,0|z_1,1;z_0,0)=\mathcal{N}(z_1,\sigma_{i+1}^2\mathbf{I})$, which is same as the initial distribution of the next bridge. 

\textbf{Sampling Algorithm}
We use the ODE-based method to generate samples at inference time. After the training process, the neural network $\mathbf{v}_{\theta}$ is trained as described in Algorithm \ref{alg:training} and Algorithm \ref{alg:training-with-traj}.
When the network is trained without trajectory guidance, we simulate the following ODE to generate samples:
\begin{equation}
    \frac{\operatorname{d}\mathbf{R}^t}{\operatorname{d} t} = \mathbf{v}_{\theta}(\mathbf{R}^t,t;\mathbf{R}^0), \mathbf{R}^0\sim q_{\text{data}}(R^{t_0}), t\in\left[0,T\right].
\end{equation}
When the network is trained with trajectory guidance, we solve the following ODE to generate samples:
\begin{equation}
    \frac{\operatorname{d}\mathbf{R}^t}{\operatorname{d} t} = \mathbf{v}_{\theta}(\mathbf{R}^t,t;\mathbf{R}^{\lfloor{\frac{t}{T}}\rfloor T}), \mathbf{R}^0\sim q_{\text{data}}(R^{t_0}), t\in\left[0,N\times T\right].
\end{equation}
Denote a black box ODE solver by $\operatorname{Solver}(\mathbf{v},t)$. $\operatorname{Solver}(\mathbf{v},t)$ takes a vector field $\mathbf{v}$ and a time point as inputs, then returns the solution of the ODE
\begin{equation}
     \frac{\operatorname{d}\mathbf{X}^t}{\operatorname{d} t} = \mathbf{v}(\mathbf{X}^t,t;\phi), \mathbf{X}^0=x_0,
\end{equation}
at the specific time $t$, i.e. $\operatorname{Solver}(\mathbf{v},t)=\mathbf{X}^t$. Combining all the above design choices, we have the following algorithms for sampling our Geometric Diffusion Bridge (Algorithm~\ref{alg:sampling}) and leveraging trajectory guidance if available (Algorithm~\ref{alg:sampling-with-traj}). 
\algrenewcommand\algorithmicindent{0.5em}%
\begin{figure}[h]
\vspace{-12pt}
\begin{minipage}[t]{0.495\textwidth}
\begin{algorithm}[H]
  \caption{Training} \label{alg:training}
  \small
  \vspace{.05in}
  \begin{algorithmic}[1]
    \Repeat
      \State $(z_0,z_1) \sim q_{\text{data}}(R^{t_0},R^{t_1})$
      \State $t \sim \mathcal{U}[0, T]$
      \State $\mathbf{\epsilon}\sim\mathcal{N}(\mathbf{0},\mathbf{I})$ 
      \State $\mathbf{R}^t=\frac{t}{T}z_1+\frac{T-t}{T}z_0+\frac{\sqrt{t(T-t)}}{T}\sigma\mathbf{\epsilon}$
      \State Take gradient descent step on
      \Statex $\qquad \nabla_\theta \lambda(t)\left\| \frac{z_1-\mathbf{R}^t}{\sigma^2(T-t)} - \mathbf{v}_{\theta}(\mathbf{R}^t, t; z_0)\right\|^2$
    \Until{converged}
  \end{algorithmic}
  \vspace{.05in}
\end{algorithm}
\end{minipage}
\hfill
\begin{minipage}[t]{0.495\textwidth}
\begin{algorithm}[H]
  \caption{Training with trajectory guidance} \label{alg:training-with-traj}
  \small
  \begin{algorithmic}[1]
    \Repeat
      \State $(z_0,\dotsc,z_N) \sim q_{\text{traj}}(\tilde{R}^{0},\dotsc,\tilde{R}^{N})$
      \State $t \sim \mathcal{U}\left(0, N\times T\right)$, $i=\lfloor\frac{t}{T}\rfloor,t'=t-i\times T$
      \State $\mathbf{\epsilon}\sim\mathcal{N}(\mathbf{0},\mathbf{I})$ 
      \State $\mathbf{R}_i^{t'}=\frac{t'}{T}z_{i+1}+\frac{T-t'}{T}z_i+\frac{\sqrt{t'(T-t')}}{T}\sigma_i\mathbf{\epsilon}$
      \State Take gradient descent step on 
      \Statex $\qquad \nabla_\theta \lambda(t)\left\| \frac{z_{i+1}-\mathbf{R}_i^{t'}}{\sigma_i^2(T-t')} - \mathbf{v}_{\theta}(\mathbf{R}_i^{t'}, t; z_i)\right\|^2$
    \Until{converged}
  \end{algorithmic}
\end{algorithm}
\end{minipage}
\vspace{-1em}
\end{figure}

\algrenewcommand\algorithmicindent{0.5em}%
\begin{figure}[h]
\vspace{-12pt}
\begin{minipage}[t]{0.495\textwidth}
\begin{algorithm}[H]
  \caption{Sampling} \label{alg:sampling}
  \small
  \vspace{.05in}
  \begin{algorithmic}[1]
    \Require{Initial geometric state $z_0 \sim q_{\text{data}}(R^{t_0})$, a trained neural network $\mathbf{v}_{\theta}$, a numerical ODE solver $
    \operatorname{Solver}(\mathbf{v},t)$}
      \State $\mathbf{R}^0=z_0$
      \State $\mathbf{R}^T=\operatorname{Solver}(\mathbf{v}_{\theta}(\mathbf{R}^t,t;\mathbf{R}^0),T)$
    \Ensure{$\mathbf{R}^T$}
  \end{algorithmic}
  \vspace{.05in}
\end{algorithm}
\end{minipage}
\hfill
\begin{minipage}[t]{0.495\textwidth}
\begin{algorithm}[H]
  \caption{Sampling with trajectory guidance} \label{alg:sampling-with-traj}
  \small
  \begin{algorithmic}[1]
    \Require{Initial geometric state $z_0 \sim q_{\text{data}}(R^{t_0})$, a trained neural network $\mathbf{v}_{\theta}$, a numerical ODE solver $
    \operatorname{Solver}(\mathbf{v},t)$}
      \State $\mathbf{R}^0=z_0$
      \State $\mathbf{R}^{NT}=\operatorname{Solver}(\mathbf{v}_{\theta}(\mathbf{R}^t,t;\mathbf{R}^{\lfloor{\frac{t}{T}}\rfloor T}),t=NT)$
    \Ensure{$\mathbf{R}^{NT}$}
  \end{algorithmic}
\end{algorithm}
\end{minipage}
\vspace{-1em}
\end{figure}

\section{Experiments}\label{Appendix:exp}
\subsection{Equilibrium State Prediction}\label{Appendix:equilibrium-state-prediction}
\textbf{Dataset. }
QM9~\citep{ramakrishnan2014quantum} is a quantum chemistry benchmark consisting of 134k stable small organic molecules, which has been widely used for molecular modeling. These molecules correspond to the subset of all 133,885 species out of the GDB-17 chemical universe of 166 billion organic molecules. In convention, 110k, 10k, and 11k molecules are used for train/valid/test sets respectively. The geometric conformations that are minimal in energy are provided in the QM9 dataset. The equilibrium conformation and its relative properties are all calculated at the B3LYP/6-31G(2df,p) level of quantum chemistry.   

\looseness=-1Molecule3D~\cite{xu2021molecule3d} is a large-scale dataset curated from the PubChemQC project~\citep{maho2015pubchemqc,nakata2017pubchemqc}, consisting of 3,899,647 molecules in total,  2,339,788 molecules in training set, 
 779,929 molecules in the validation set, 
779,930 molecules in the test set, and its train/valid/test splitting ratio is $6:2:2$. For each molecule, the 2D atom graph, the 3D equilibrium geometric conformation, and four extra properties are provided. In particular, both random and scaffold splitting methods are adopted to thoroughly evaluate the in-distribution and out-of-distribution performance. For each molecule, an initial geometric state is generated by using fast and coarse force field~\cite{o2011open,Landrum2016RDKit2016_09_4} and geometry optimization is conducted to obtain B3LYP/6-31G*
level DFT-calculated equilibrium geometric structure.

\textbf{Baselines.} \looseness=-1We comprehensively compare our GDB framework with previous equilibrium conformation prediction methods. Following ~\cite{xu2023gtmgc}, we use DG and ETKDG algorithms implemented by RDkit as our fundamental baselines. The benchmark ~\cite{xu2021molecule3d} used the DeeperGCN-DAGNN framework~\cite{liu2021fast} which proposed a deep graph neural network architecture to predict 3D geometric conformation of the molecule based on its 2D graph structure, and got impressive performance on the Molecule3D dataset. GINE~\cite{hu2019strategies} proposed a method for pretraining GNN to improve the performance and capacity of GNN. GATv2~\cite{brody2021attentive} proposed a dynamic graph attention mechanism and improved the performance of the graph attention network on several tasks. GPS~\cite{rampavsek2022recipe} proposed a general framework that supported multiple types of encodings with efficiency and scalability guarantees in both small and large graph prediction tasks.
GTMGC~\cite{xu2023gtmgc} proposed a novel neural network based on Graph-Transformer (GT)~\cite{ying2021do,luo2022your,zhang2023rethinking,luo2023one} to predict the equilibrium conformation of the molecule in 3D based on its 2D graph structure.

\textbf{Metric.} Following~\cite{xu2021molecule3d}, three metrics are adopted to evaluate predictions of equilibrium states: (1) C-RMSD: given prediction $\hat{R}=\{\hat{\mathbf{r}}_i\}_{i=1}^N$ which is rigidly aligned to the ground-truth $R^*=\{\mathbf{r}^*_i\}_{i=1}^N$ by the Kabsch algorithm~\cite{kabsch1978discussion}, Root Mean Square Deviation between their atoms is calculated, i.e., $\operatorname{C-RMSD}(\hat{R},R^*)=\sqrt{\frac{1}{N}\sum_{i=1}^N\Vert \hat{\mathbf{r}}_i-\mathbf{r}^*_i\Vert^2_2}$; (2) D-RMSE: based on $\hat{R}$ and $R^*=\{\mathbf{r}^*_i\}_{i=1}^N$, interatomic distances can be calculated, i.e., $\{\hat{d}_i\}_{i=1}^{N'}$ and $\{\hat{d}_i^*\}_{i=1}^{N'}$. Root Mean Square Error between these distances is calculated, i.e., $\operatorname{D-RMSE}(\{\hat{d}_i\}_{i=1}^{N'},\{\hat{d}_i^*\}_{i=1}^{N'})=\sqrt{\frac{1}{N'}\sum_{i=1}^N(d_i-d_i^*)^2}$; (3) $\operatorname{D-MAE}(\{\hat{d}_i\}_{i=1}^{N'},\{\hat{d}_i^*\}_{i=1}^{N'})=\frac{1}{N'}\sum_{i=1}^N|d_i-d_i^*|$. 

\textbf{Settings. } In this task, we parameterize $\mathbf{v}_{\theta}(\mathbf{R}^t,t;\mathbf{R}^0)$ by extending a Graph-Transformer based equivariant network~\cite{shi2022benchmarking,lu2023highly} to encode both time steps and initial geometric states as conditions. For training, we use AdamW as the optimizer, and set the hyper-parameter $\epsilon$ to 1e-8 and $(\beta_1,\beta_2)$ to (0.9,0.999). The gradient clip norm is set to 5.0. The peak learning rate is set to 1e-4. The batch size is set to 512. The weight decay is set to 0.0. The model is trained for 500k steps with a 30k-step warm-up stage. After the warm-up stage, the learning rate decays linearly to zero. The noise scale $\sigma$ is set to $0.5$. For inference, we use 10 time steps with the Euler solver~\cite{butcher2016numerical}. All models are trained on 16 NVIDIA V100 GPU.

\subsection{Structure Relaxation}\label{Appendix:structure-relaxation}

\textbf{Dataset.} \looseness=-1 Open Catalyst 2022 (OC22) dataset~\citep{tran2023open} is a widely used dasaset, which has great significance for the development of Oxygen Evolution Reaction (OER) catalysts. Each data in the dataset is in the form of the adsorbate-catalyst complex. Both initial and adsorption states with trajectories connecting them are provided. The dataset consists of 62,331 Density Functional Theory (DFT) relaxations trajectories,
and about 9,854,504 single-point DFT calculations across a range of oxide materials, coverages, and adsorbates.The training set consists of 45,890 catalyst-adsorbate complexes. To better evaluate the model's performance, the validation and test sets consider the in-distribution (ID) and out-of-distribution (OOD) settings which use unseen catalysts, containing approximately 2,624 and 2,780 complexes respectively.

\textbf{Baselines.} Following~\cite{tran2023open}, we choose strong MLFF baselines trained on force field data for a challenging comparison. Spinconv~\cite{shuaibi2021rotation} introduced a novel approach called spin convolution to model angular information between sets of neighboring atoms in a graph neural network and got impressive performance in molecular simulation tasks. Gemnet~\cite{gasteiger2021gemnet} proposed multiple structural improvements for geometric GNN with theoretical insights, which significantly improved the experimental performance as well. Based on Gemnet's framework, Gemnet-OC~\cite{gasteiger2022gemnetoc} modified the architecture of the network and improved the experimental performance on more diverse tasks.  

In ~\cite{tran2023open}, there are still other baseline setting. \cite{tran2023open} introduce a large-scale dataset  Open Catalyst 2020 (OC20), which consists of 1,281,040 Density Functional Theory (DFT) relaxations and 264,890,000 single point evaluations to help training the baseline model. ~\cite{tran2023open} presented baselines using both OC20 and OC22 data in training stage and baselines using only OC20/OC22 for comparison. 

\textbf{Metric.} Following~\cite{tran2023open}, we use the Average Distance within Threshold (ADwT) as the evaluation metric, which reflects the percentage of structures with an atom position MAE below thresholds. To be more precise, the ADWT metric across thresholds ranging from
$\beta=0.01\mathring{A}$ to $\beta = 0.5 \mathring{A}$ in increments of
$0.001 \mathring{A}$. The computation of ADwT metric is to count the percentage
of structures with an atom position MAE below the threshold.

\textbf{Settings. } In this task, We parameterize $\mathbf{v}_{\theta}(\mathbf{R}^t,t;\mathbf{R}^0)$ by using GemNet-OC~\cite{gasteiger2022gemnetoc}, which also serves as a verification that our framework is compatible with different backbone models.  For training,  we use AdamW as the optimizer, and set the hyper-parameter $\epsilon$ to 1e-8 and $(\beta_1,\beta_2)$ to (0.9,0.999). The gradient clip norm is set to 10.0. The peak learning rate is set to 5e-4. The batch size is set to 64.  The weight decay is set to 0.0. The model is trained for 200k steps. After the warm-up stage, the learning rate decays linearly to zero. The noise scale $\sigma$ is set to $0.5$. The trajectory length is set to $N=10$. For inference, we also use 10 time steps with the Euler solver~\cite{butcher2016numerical}. All models are trained on 8 NVIDIA A100 GPU.

\end{document}